\newtheorem{theorem}{Theorem}
\newtheorem{proposition}[theorem]{Proposition}
\newtheorem*{proposition*}{Proposition}
\newtheorem{corollary}[theorem]{Corollary}
\newtheorem{lemma}[theorem]{Lemma}
\newtheorem{definition}{Definition}
\newtheorem{remark}{Remark}
\newtheorem{conjecture}{Conjecture}
\newlength\mylen
\renewcommand\cftpartpresnum{Part~}
\newcommand{\R}{\mathbb{R}}
\newcommand{\E}{\mathbb{E}}
\newcommand{\F}{\mathcal{F}}
\newcommand{\PS}{\mathcal{P}}
\newcommand{\N}{\mathcal{N}}
\newcommand{\D}{\mathcal{D}}
\newcommand{\M}{\mathcal{M}}
\newcommand{\HS}{\mathcal{H}}
\newcommand{\e}{\mathbf{e}}
\newcommand{\w}{\mathbf{w}}
\newcommand{\x}{\mathbf{x}}
\newcommand{\z}{\mathbf{z}}
\newcommand{\y}{\mathbf{y}}
\newcommand{\lb}{\langle}
\newcommand{\rb}{\rangle}
\newcommand{\ep}{\epsilon}
\newcommand{\Lip}{\text{Lip}}
\newcommand{\TV}{\text{TV}}
\newcommand{\sprt}{\text{sprt}}
\title{Generalization Error of GAN from the Discriminator's Perspective}
\author[1]{Hongkang Yang\thanks{hongkang@princeton.edu}}
\author[1,2,3]{Weinan E\thanks{weinan@math.princeton.edu}}
\affil[1]{Program in Applied and Computational Mathematics, Princeton University}
\affil[2]{Department of Mathematics, Princeton University}
\affil[3]{Beijing Institute of Big Data Research}
\date{}
\begin{document}

\maketitle

\begin{abstract}
The generative adversarial network (GAN) is a well-known model for learning high-dimensional distributions, but the mechanism for its generalization ability is not understood.
In particular, GAN is vulnerable to the memorization phenomenon, the eventual convergence to the empirical distribution.
We consider a simplified GAN model with the generator replaced by a density, and analyze how the discriminator contributes to generalization.
We show that with early stopping, the generalization error measured by Wasserstein metric escapes from the curse of dimensionality, despite that in the long term, memorization is inevitable.
In addition, we present a hardness of learning result for WGAN.
\end{abstract}
\textbf{Keywords:} probability distribution, generalization error, curse of dimensionality, early stopping, Wasserstein metric.

\section{Introduction}

The task of generative modeling in machine learning is as follows:
Given $n$ sample points of an unknown probability distribution $P_*$,
we would like to approximate $P_*$ well enough to be able to generate new samples.
The generative adversarial network (GAN) \cite{goodfellow2014generative} is among the most popular models for this task.
It has found diverse applications such as image generation \cite{karras2019style}, photo editing \cite{wu2019gp} and style transfer \cite{isola2017image}.
More importantly, there are emerging scientific applications including inverse problems \cite{shah2018solving}, drug discovery \cite{prykhodko2019molecular}, cosmological simulation \cite{mustafa2019cosmogan}, material design \cite{mao2020material} and medical image coloration \cite{liang2021unpaired}, to name a few.

Despite these promising successes, we are far from a satisfactory theory.
Arguably, one of the most important problems is the generalization ability of GANs,
namely how they are able to estimate the underlying distributions well enough to be able to generate new samples that appear highly realistic.
There are at least two difficulties with generalization:
\begin{enumerate}
\item Curse of dimensionality:

Let $P_*^{(n)}$ be the empirical distribution associated with the $n$ given sample points.
Let $W_2$ be the Wasserstein metric.
It is known that for any absolutely continuous $P_*$ \cite{weed2017sharp}
\begin{equation}
\label{W2 CoD}
W_2(P_*,P_*^{(n)}) ~\gtrsim~ n^{-\frac{1}{d-\delta}}
\end{equation}
for any $\delta > 0$.
The significance of (\ref{W2 CoD}) is that it sets a lower bound for the generalization error of all possible models:
Let $A$ be any algorithm (a mapping) that maps from an $n$ sample set $X_n = \{\x_1,\dots\x_n\}$ of $P_*$ to an estimated distribution $A(X_n)$, then \cite{singh2018minimax}
\begin{equation*}
\inf_{A} \sup_{P_*} ~\E_{X_n}\big[W_2^2\big(P_*,A(X_n)\big)\big] ~\gtrsim ~n^{-\frac{2}{d}}
\end{equation*}
where $P_*$ ranges among all distributions supported in $[0,1]^d$.
Thus, an exponential amount of samples $\epsilon^{-\Omega(d)}$ is needed to achieve an error of $\epsilon$, which is formidable for high-dimensional tasks (e.g. $d \geq 10^5$ for images).
To overcome the curse of dimensionality, one needs to restrict to a smaller class of target distributions $\mathcal{P}_*$ such that
\begin{equation}
\label{hypothesis space}
\inf_{A} \sup_{P_* \in \mathcal{P}_*} ~\E_{X_n}\big[W_2^2\big(P_*,A(X_n)\big)\big] ~\lesssim ~n^{-\alpha}
\end{equation}
for some constant $\alpha > 0$.
Of course, to be relevant in practice, this restricted space of target distributions should not be too small.

\item Memorization during training:

There is another complication that we cannot avoid.
It was argued in \cite{goodfellow2014generative} that the true optimizer of the GAN model is simply $P_*^{(n)}$.
This is disappointing since $P_*^{(n)}$ does not provide any new information beyond the data we already know.
It suggests that if a training algorithm converges, most likely it converges to $P_*^{(n)}$, i.e. it simply memorizes the 
samples provided.
This is known as the ``memorization effect'' and has been analyzed on a simpler model in \cite{yang2020generalization}.
Our best hope is there are intermediate times during the training process when the model provides a more useful approximation to the target distribution.
However, this is a rather subtle issue.
For instance, if we train a distribution $P_t$ by Wasserstein gradient flow over the loss
\begin{equation*}
L(P) = W_2^2(P, P_*^{(n)})
\end{equation*}
then the training trajectory is exactly the Wasserstein geodesic from the initialization $P_0$ to $P_*^{(n)}$.
Since the $W_2$ space has positive curvature (Theorem 7.3.2 \cite{ambrosio2008gradient}),
the pairwise distances are comparable to those on a fat triangle (Figure \ref{fig: W2 geodesic} curve \textcircled{1}).
This suggests that the generalization error $W_2(P_t,P_*)$ along the entire trajectory will be $n^{-O(1/d)}$ and suffers from the curse of dimensionality.

\item Mode collapse and mode dropping.

These are the additional difficulties that are often encountered in the training of GAN models.
\end{enumerate}

\begin{figure}[h]
\centering
\includegraphics[scale=0.2]{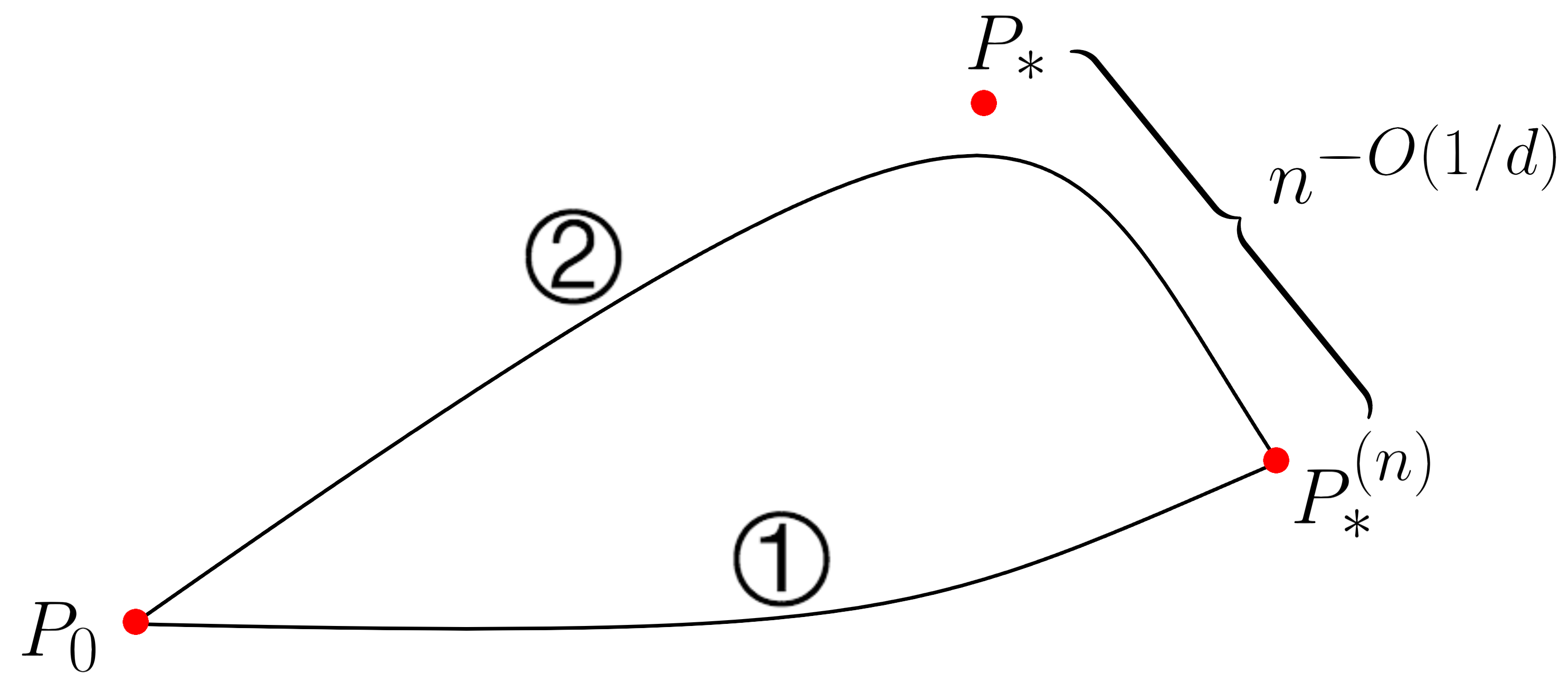}
\caption{Curve \textcircled{1} is the $W_2$ geodesic connecting $P_0$ and $P_*^{(n)}$.
Curve \textcircled{2} is a training trajectory that generalizes.}
\label{fig: W2 geodesic}
\end{figure}

Recently, an approach was proposed by \cite{yang2020generalization} to establish the generalization ability of a density estimator (the bias-potential model).
Its hypothesis space $\mathcal{P}_*$ consists of Boltzmann distributions generated by kernel functions and satisfies universal approximation.
Also, its training trajectory generalizes well:
As illustrated by curve \textcircled{2} in Figure \ref{fig: W2 geodesic}, the trajectory comes very close to $P_*$ before eventually turning toward memorizing $P_*^{(n)}$.
Thus, the generalization error achieved by early-stopping escapes from the curse of dimensionality:
\begin{equation*}
\text{KL}(P_*\|P) = O(n^{-1/4})
\end{equation*}
This generalization ability is enabled by the function representation of the model.
If the function class has small Rademacher complexity, then the model is insensitive to the sampling error $P_*-P_*^{(n)}$, and thus the generalization gap emerges very slowly.

The implication to GAN is that its generalization ability should be attributable to (the function representation of) both its generator and discriminator.
This paper will focus on analyzing the discriminator.  In place of the generator, we will deal directly with the probability distribution.
Our result confirms the intuition of Figure \ref{fig: W2 geodesic} curve \textcircled{2}:
Despite the eventual memorization, early-stopping achieves a generalization error of
\begin{equation}
\label{W2 escapes CoD}
W_2(P_*,P) = O(n^{-\alpha})
\end{equation}
with a constant exponent $\alpha>0$.

This paper is structured as follows.
Section \ref{sec. problem setting} introduces our toy model of GAN.
Sections \ref{sec. two time scale} and \ref{sec. one time scale} present the main results of this paper on generalization error (\ref{W2 escapes CoD}) with 
a two-time-scale training model ($\alpha=1/6$) and a one-time-scale training model ($\alpha=1/8$) respectively.
We also show that memorization always happens in the long time limit.
Section \ref{sec. slow deterioration} provides a supplementary result, that it is intractable to learn the Lipschitz discriminator of WGAN.
Section \ref{sec. proofs} contains all the proofs. Section \ref{sec. discussion} concludes this paper
with remarks on future directions.

\subsection{Related work}

\begin{itemize}
\item GAN training:

Currently there is little understanding of the training dynamics and convergence property of GAN, or any distribution learning model with a generator such as the variational autoencoder \cite{kingma2013auto} and normalizing flows \cite{tabak2010density}.
The available results deal with either simplified models \cite{mescheder2018training,wu2019onelayer,feizi2020LQG,balaji2021understanding} or convergence to local saddle points \cite{nagarajan2017gradient,heusel2017gans,lin2020gradient}.
The situation is further complicated by training failures such as mode collapse \cite{arora2018GAN,che2016mode}, mode dropping \cite{yazici2020empirical} and oscillation \cite{radford2015unsupervised,chavdarova2018sgan}.
Since our emphasis is on the discriminator, we omit the generator to make the analysis tractable.

\item GAN generalization:

Due to the difficulty of analyzing the convergence of GAN, generalization error estimates have been obtained only for simplified models whose generators are linear maps or their variants \cite{feizi2020LQG,wu2019onelayer,lei2020sgd}.
Another line of works \cite{arora2017generalization,zhang2017discrimination,bai2019approximability} focuses on the ``neural network distance", which are the GAN losses defined by neural network discriminators, and shows that these distances have a sampling error of $O(n^{-1/2})$.
Although \cite{arora2017generalization} interprets this result as the inability of the discriminator to detect a lack of diversity, we show that this is in fact an advantage, such that the discriminator enables the training process to generalize well.
Meanwhile, \cite{gulrajani2020towards} proposes to use the neural network distance with held-out data to measure memorization, while \cite{nagarajan2018memorization} discussed the dependence of memorization on the number of latent samples drawn by GAN.

\item GAN design and regularization:

The improvement of GAN has mainly focused on three aspects: alternative loss functions (e.g. WGAN \cite{arjovsky2017wasserstein} and least-square GAN \cite{mao2018effectiveness}), novel function representations (e.g. fully convolutional \cite{radford2015unsupervised} and self-attention \cite{jiang2021transgan}), and new regularizations.
There are roughly three kinds of regularizations: the regularizations on the function values (e.g. gradient penalty \cite{gulrajani2017improved} and $L^2$ penalty \cite{xu2020understanding}), the regularizations on the parameters (e.g. spectral normalization \cite{miyato2018spectral} and weight decay \cite{krogh1992simple}) and the regularizations on the input values (e.g. batch normalization \cite{ioffe2015batch} and layer normalization \cite{ba2016layer}).
See \cite{saxena2021generative} for a comprehensive review.
Our proofs indicate how function representation and regularization influence the generalization ability of GAN.

\item Function representation:

The function class is central to the theoretical analysis of machine learning models.
A good function representation, such as the Barron space and flow-induced space \cite{e2021barron} (which capture 2-layer networks and residual networks), is the key to the generalization ability of supervised learning models \cite{e2018priori,e2019residual,e2019min} and density estimator \cite{yang2020generalization}.
Broadly speaking, a supervised learning model can be studied as a continuous calculus of variations problem \cite{e2020machine,e2020NNML} determined by four factors: its function representation, loss function, training rule, and their discretizations.
Distribution learning has the additional factor of distribution representation \cite{yang2020generalization}, namely how the probability distribution is represented by functions.
\end{itemize}

\section{Problem Setting}
\label{sec. problem setting}

Consider the domain $\Omega = [0,1]^d$.
Denote the space of probability measures by $\PS(\Omega)$, and the space of finite signed Radon measures by $\M(\Omega)$.
Denote the $p$-Wasserstein metric on $\PS(\Omega)$ by $W_p$ \cite{villani2003topics}.

Denote the modeled distribution by $P$, the target distribution by $P_*$.
Let $\{\x_i\}_{i=1}^n$ be $n$ i.i.d. samples from $P_*$ and denote the empirical distribution by $P_*^{(n)} = \frac{1}{n}\sum_{i=1}^n \delta_{\x_i}$.

If $P,P_*$ are absolutely continuous, denote their density functions by $p,p_*$.
Conversely, given a density function $p$, denote the corresponding measure by $P$ (i.e. $p$ times the Lebesgue measure on $\Omega$).
For convenience, we use density $p$ and measure $P$ interchangeably when there is no confusion.

We use $f \lesssim g$ or $f = O(g)$ to indicate that $\limsup_{x\to\infty} f(x)/g(x) < \infty$.
We use $f = o(g)$ to indicate that $\lim_{x\to\infty} f(x)/g(x) = 0$.
The notations $f \gtrsim g$ and $f=\Omega(g)$ and $\omega(g)$ are defined similarly.
We use $f \asymp g$ to indicate that $f \lesssim g$ and $f \gtrsim g$.

{\color{black}
\subsection{Distribution Representation}
\label{sec. distribution representation}

As we have discussed in the introduction, the generalization ability of a model comes from the representation or parametrization of its component functions.
Thus, for the GAN model, it is reasonable to conjecture that \textit{either} the generator \textit{or} the discriminator alone can enable GAN to generalize well.

Since this paper focuses on the discriminator, we prove that a discriminator with a good parametrization is sufficient for good generalization, without the help of the generator.
To eliminate the confounding effect of the generator, one needs to construct a model whose generated distribution $P$ does not have any parametrization.
The simplest way is to model $P$ by its density function $p$.
We will consider $p$ as a function in $L^2(\Omega)$, and denote the space of probability densities by
\begin{equation*}
\Delta = \big\{p \in L^2(\Omega) ~\big|~ p \geq 0 ~\text{a.e.}, ~\int p = 1 \big\}
\end{equation*}
Henceforth, this abstract model will be referred to as adversarial density estimation.

We will discuss how our results can be applied to the ordinary GAN in Section \ref{sec. the generator}.
}

\subsection{Function representation}
\label{sec. function representation}
To model the discriminator $D$, let us consider a simple function class that captures the two key properties of neural networks: universal approximation and dimension-independent complexity.

Specifically, we model $D$ by the random feature functions (or kernel functions) \cite{rahimi2008uniform,e2021barron}:
\begin{equation}
\label{RFM discriminator}
D(\x) = \E_{\rho_0(\w,b)}[a(\w,b)\sigma(\w\cdot\x+b)]
\end{equation}
where $\sigma$ is the ReLU activation, $\rho_0$ is a fixed parameter distribution, and $a$ is the parameter function to be learned.
Assume for convenience that $\rho_0$ has bounded support: $\sprt \rho_0 \subseteq \{\|\w\|_1+|b| \leq 1\}$.

One can define the kernel
\begin{equation}
\label{RKHS kernel}
k(\x,\x') = \E_{\rho_0}[\sigma(\w\cdot\x+b)\sigma(\w\cdot\x'+b)]
\end{equation}
The RKHS space $\HS$ with kernel $k$ is generated by the norm
\begin{equation}
\label{RKHS norm}
\|D\|_{\HS} = \|a\|_{L^2(\rho_0)}
\end{equation}

Regarding the universal approximation property, assume that $\sprt \rho_0$ contains all directions: For any $(\w,b) \neq 0$, there exists $\lambda > 0$ such that $\lambda (\w,b) \in \sprt \rho_0$ (e.g. let $\rho_0$ be the uniform distribution over the $L^1$ sphere).
Then, \cite{sun2018RFM} implies that the RKHS space $\HS$ is dense in $C(\Omega)$ under the supremum norm.
It follows that the kernel (\ref{RKHS kernel}) is positive definite.

Regarding complexity, the Rademacher complexity of $\HS$ escapes from the curse of dimensionality (Theorem 6 of \cite{e2021barron}).
Specifically, the following bound holds uniformly over any collection of $n$ points $\{\x_i\}_{i=1}^n \subseteq [-1,1]^d$,
\begin{equation*}
Rad_n(\{\|D\|_{\HS}\leq r\}) \leq 2r\frac{\sqrt{2\log 2d}}{\sqrt{n}}
\end{equation*}
{\color{black}This property eventually leads to our generalization error estimates.}

\subsection{Training loss}
Denote by $L(P)$ a loss over the modeled distribution $P$.
The GAN losses are constructed as dual norms over some family $\D$ of discriminators.

The straightforward construction is the WGAN loss \cite{arjovsky2017wasserstein},
\begin{equation}
\label{WGAN loss}
L(P) = \sup_{D \in \D} \E_{P_*}\big[D(\x)\big] - \E_P \big[D(\x)\big] - R(D)
\end{equation}
where $R(D)$ is some regularization on $D$ such as gradient penalty \cite{gulrajani2017improved,kodali2017convergence}.
The classical GAN loss \cite{goodfellow2014generative} is a weak version of the dual formulation of Jensen-Shannon divergence
\begin{equation*}
L(P) = \sup_{D \in \D} \E_{P_*}\big[\log\frac{e^{D(\x)}}{1+e^{D(\x)}}\big] + \E_P \big[\log\frac{1}{1+e^{D(\x)}}\big]
\end{equation*}
There are many other constructions such as the $f$-divergence GAN loss \cite{nowozin2016f}, energy-based GAN loss \cite{zhao2016energy}, least-square GAN loss \cite{mao2018effectiveness} etc.


\subsection{Training Rule}
Rewrite the GAN loss (e.g. formula (\ref{WGAN loss})) as a joint loss in $p$ and $D$
\begin{equation*}
\min_p \max_D L(p, D)
\end{equation*}
To solve this min-max problem, we need to consider the relative time scales for the training of the variables $p$ and $D$:
\begin{enumerate}
\item One time scale training:
The learning rates for the parameters of $D_t$ and for $p_t$ have the same magnitude.

\item Two time scale training:
The learning rate for $D_t$ is much larger than that of $p_t$.
Thus, on the time scale of $p_t$, the discriminator $D_t$ can be assumed optimal (at least when $L(p,D)$ is concave in the parameters of $D$).
Effectively, $p_t$ is trained by gradient descent on $\sup_D L(p,D)$.
This dynamics has been shown to closely approximate two-time scale training \cite{borkar1997timescale,heusel2017gans,lin2020gradient}.
\end{enumerate}

For the specific training rule, we will use continuous-time gradient flow.
For the discriminator implemented by random feature functions (\ref{RFM discriminator}), we train its parameter function $a$ by gradient ascent
\begin{align*}
\frac{d}{dt} a_t(\w,b) = \frac{\delta L(p,D_t)}{\delta a} = \int_{\Omega} \frac{\delta L(p,D_t)}{\delta D}(\x) \sigma(\w\cdot\x+b)
\end{align*}
where the variational gradients are taken in $L^2(\rho_0)$ and $L^2(\Omega)$.
It follows that $D_t$ evolves by
\begin{equation}
\label{D gradient ascent}
\frac{d}{dt}D_t = \E_{\rho_0(\w,b)}\Big[\frac{d}{dt} a_t(\w,b) \sigma(\w\cdot\x+b)\Big] = k*\frac{\delta L(p,D_t)}{\delta D}
\end{equation}
{\color{black}where $k$ is the kernel defined in (\ref{RKHS kernel})} and $k*$ denotes the convolution over $L^2(\Omega)$
\begin{equation}
\label{k convolution}
k*f(\x) = \int_{\Omega} k(\x,\x')f(\x')
\end{equation}

Meanwhile, for the density function $p_t$, one option is the plain gradient descent
\begin{equation}
\label{p gradient descent}
\frac{d}{dt} p_t = -\frac{\delta L(p_t, D_t)}{\delta p}
\end{equation}
In this case, $p_t$ is not guaranteed to remain as a probability density, but becomes a signed measure in $\M(\Omega)$.

An alternative option is to perform projected gradient descent.
Denote the tangent cone of the probability simplex $\Delta$ by
\begin{equation*}
\forall p \in \Delta, \quad T_p\Delta = \{q = q_+-q_- ~|~ q_{\pm} \geq 0, ~q_{\pm} \in L^2(\Omega), ~q_- \ll p\}
\end{equation*}
Let $\Pi_{\Delta}: L^2(\Omega) \to \Delta$ be the $L^2$ projection onto $\Delta$,
and let $\Pi_{T_p\Delta}$ be the projection onto $T_p\Delta$.
Then, the projected flow is given by
\begin{align}
\label{project gradient flow}
\begin{split}
\frac{d}{dt} p_t &= \Pi_{T_{p_t}\Delta} \Big(-\frac{\delta L(p_t,D_t)}{\delta p}\Big) = \lim_{\epsilon\to 0^+} \frac{\Pi_{\Delta}\big(p_t-\epsilon\delta_{p}L(p_t,D_t)\big)-p_t}{\epsilon}
\end{split}
\end{align}

\subsection{Test Loss}
The test loss is set to be the Wasserstein metric $W_2$ between the modeled distribution $P$ and the target $P_*$.
If $p\in L^2(\Omega)$, we consider its projection
\begin{equation}
\label{W2 loss}
W_2\big(\Pi_{\Delta}(p), P_*\big)
\end{equation}

{\color{black}
The $W_2$ metric is chosen for two reasons.
First, a key advantage of $W_2$ is that it is sensitive to memorization, such that any solution that approximates $P_*^{(n)}$ will exhibit the curse of dimensionality (\ref{W2 CoD}).
Thus, a natural criterion for generalization ability is that a good model should achieve a $W_2$ test error with a dimension-independent rate (\ref{W2 escapes CoD}), despite that it is trained using only $P_*^{(n)}$.

Second, the $W_2$ metric can be seen as the $L^2$ regression loss for probability measures, and thus is a natural choice for the loss function.
Specifically, it is the quotient metric on $\PS(\Omega)$ derived from the $L^2$ metric on the generators of GAN.

\begin{proposition*}
[Informal version of Proposition \ref{prop. W2 matching}]
For any target distribution $P_*$ and a distribution $P$ generated by any generator $G$,
\begin{equation*}
W_2(P,P_*) = \inf_{G_*} \|G-G_*\|_{L^2}
\end{equation*}
where $G_*$ is any generator that generates $P_*$.
\end{proposition*}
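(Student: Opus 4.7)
The plan is to prove the identity by establishing the two inequalities separately. The upper bound is a direct consequence of the Kantorovich formulation of $W_2$; the lower bound requires a noise-outsourcing construction that realizes an optimal coupling of $P$ and $P_*$ in the form $(G, G_*)_\# \mu$, where $\mu$ is the latent distribution.

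First, interpret a generator as a measurable map $G : (\mathcal{Z}, \mu) \to \Omega$ with $P = G_\# \mu$, and take $(\mathcal{Z}, \mu)$ to be a standard Borel atomless probability space (the usual GAN convention). Then the right-hand side reads $\|G - G_*\|_{L^2(\mu)}^2 = \int_{\mathcal{Z}} \|G(z) - G_*(z)\|^2 \, d\mu(z)$, with infimum over measurable $G_*$ satisfying $(G_*)_\# \mu = P_*$. For the upper bound, any such $G_*$ produces a coupling $\pi := (G, G_*)_\# \mu$ of $P$ and $P_*$, and Kantorovich's formula gives
\begin{equation*}
W_2^2(P, P_*) \;\leq\; \int \|\x - \y\|^2 \, d\pi(\x, \y) \;=\; \int_{\mathcal{Z}} \|G(z) - G_*(z)\|^2 \, d\mu(z) \;=\; \|G - G_*\|_{L^2(\mu)}^2,
\end{equation*}
and taking the infimum over admissible $G_*$ yields one direction.

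For the reverse inequality, fix an optimal coupling $\pi^*$ of $P$ and $P_*$ (which exists by compactness of the coupling set on the bounded domain $\Omega \times \Omega$) and disintegrate $\pi^*(d\x, d\y) = P(d\x) K(\x, d\y)$. The goal is to construct a measurable $G_*$ with $(G, G_*)_\# \mu = \pi^*$, which would immediately give $\|G - G_*\|_{L^2(\mu)}^2 = W_2^2(P, P_*)$. Using the atomlessness of $\mu$ on the standard Borel space $\mathcal{Z}$, extract a measurable $U : \mathcal{Z} \to [0,1]$ that is uniform and independent of $G(Z)$ under $\mu$; if $\mathcal{Z}$ is not rich enough relative to $G$, enlarge it to $\mathcal{Z} \times [0,1]$ with the product measure, which leaves the infimum unchanged since $G$ lifts trivially and every admissible $G_*$ on the enlargement projects back. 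Combining $U$ with a measurable quantile/Rosenblatt transform $T : \Omega \times [0,1] \to \Omega$ satisfying $T(\x, \cdot)_\# \mathrm{Leb}_{[0,1]} = K(\x, \cdot)$ (available since $\Omega = [0,1]^d$ is standard Borel), define $G_*(z) := T(G(z), U(z))$; by construction $(G(Z), G_*(Z))$ has joint law $\pi^*$.

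The main obstacle is the realizability step in the lower bound, namely that an optimal coupling can be written as $(G, G_*)_\# \mu$ for some measurable $G_*$ given a \emph{fixed} $G$. This rests on two standard measure-theoretic facts: the extraction of independent uniform noise on an atomless standard Borel space, and the existence of a measurable quantile transform for the disintegration kernel $K$. Both are well known but should be invoked carefully, and they are the only non-trivial ingredients. A cleaner alternative, which avoids explicit noise outsourcing, is to approximate $\pi^*$ by finitely supported couplings, redistribute $\mu$-mass within the level sets of $G$ using atomlessness, and pass to the limit; this turns the obstruction into a routine $\epsilon$-argument at the cost of more notation.
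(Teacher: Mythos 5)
Your upper bound (push forward $\N$ by $(G,G_*)$ to get a coupling) is exactly the paper's and is fine. The genuine gap is in the lower bound, at the realizability step you yourself flag. Extracting a uniform $U(Z)$ independent of $G(Z)$ from the latent space is not always possible: if $G$ is injective mod $0$ (e.g.\ a measurable isomorphism from $(\R^d,\N)$ onto a singular measure concentrated on a segment in $\R^d$, which is perfectly admissible here), the conditional law of $Z$ given $G(Z)$ is a Dirac mass, every $Z$-measurable variable is $G(Z)$-measurable, and no independent uniform exists; at the same time the optimal plan $\pi^*$ may genuinely split mass, so \emph{no} $G_*$ on the original latent space realizes $\pi^*$ and the infimum is not attained -- any argument producing an exact minimizer must therefore break down. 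Your repair, enlarging the latent space to $\mathcal{Z}\times[0,1]$, does not close this: it enlarges the admissible class of $G_*$, and the claim that ``every admissible $G_*$ on the enlargement projects back'' is false for maps depending nontrivially on the auxiliary coordinate, so you only prove the identity for the enlarged infimum, which a priori could be strictly smaller than the one in the statement. The alternative sketch (redistributing $\mu$-mass ``within the level sets of $G$'') hits the same wall, since for injective $G$ the level sets are singletons; the redistribution must happen across preimages of small cells of the target space, which is essentially Pratelli's theorem that the Monge infimum equals the Kantorovich value for an atomless source -- a nontrivial ingredient you would have to invoke or reprove. (Minor: the optimal coupling exists here by tightness and lower semicontinuity on $\PS_2(\R^d)$; the domain is not compact in the formal statement.)

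The paper avoids noise outsourcing altogether: when $P=G\#\N$ is absolutely continuous, Brenier's theorem gives an optimal transport \emph{map} $h$ from $P$ to $P_*$, and $G_*=h\circ G$ is admissible on the original latent space with $\|G-h\circ G\|_{L^2(\N)}=W_2(P,P_*)$, so no auxiliary randomness is needed; the general $P\in\PS_2(\R^d)$ is then handled by smoothing, $G_\epsilon=G+\epsilon W$ with $P_\epsilon=G_\epsilon\#\N$ absolutely continuous, and passing to the limit $\epsilon\to 0^+$ using continuity of both sides in $G$. If you want to keep your disintegration approach, either restrict to the case where the conditional law of $Z$ given $G(Z)$ is a.e.\ atomless (there your construction is correct and even yields attainment), or supplement it with a Pratelli-type density argument approximating randomized couplings by couplings of the form $(G,G_*)\#\N$; as written, the lower bound is not established.
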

The details are given in Section \ref{sec. between W2 and L2}.
}

\section{Two Time Scale Training}
\label{sec. two time scale}

First, we consider the setting with explicit regularization on the discriminator.
Section \ref{sec. generalization gap} analyzes the generalization gap, and Section \ref{sec. generalization error} analyzes the generalization error and early stopping.

The training loss is set to be the WGAN loss (\ref{WGAN loss}).
Instead of fixing the family $\D$, we simply penalize the RKHS norm (\ref{RKHS norm}).
Then, the loss associated with the discriminator in terms of the parameter function $a$ becomes
\begin{align}
\label{WGAN + RKHS}
\begin{split}
\max_a L_D(a) &= \E_{P_*}[D] - \E_P[D] - \|D\|_{\HS}^2\\
&= \int \E_{\rho_0(\w,b)} [a(\w,b)\sigma(\w\cdot\x+b)] ~d(P_*-P)(\x) - \|a\|_{L^2(\rho_0)}^2
\end{split}
\end{align}
This loss is strongly concave with a unique maximizer $a_*${\color{black}, obtainable by taking the variational derivative in $a$}
\begin{align*}
a_*(\w,b) &= \frac{1}{2} \int \sigma(\w\cdot\x'+b) ~d(P_*-P)(\x')
\end{align*}
So the optimal discriminator $D_*$ is given by
\begin{align*}
D_*(\x) &= {\color{black}\E_{\rho_0(\w,b)}[a_*(\w,b) \sigma(\w\cdot\x+b)]} = \frac{1}{2} \int k(\x,\x') ~d(P_*-P)(\x')
\end{align*}
or $D_* = \frac{1}{2} k*(P_*-P)$,
where the kernel $k$ is defined by (\ref{RKHS kernel}, \ref{k convolution}).
With two-time-scale training and strong concavity, we can assume that the discriminator is always the maximizer $D_*$, so the WGAN loss (\ref{WGAN loss}) becomes
\begin{equation}
\label{MMD loss}
L(P) = \E_{P_*}[D_*]-\E_P[D_*] = \frac{1}{2} \iint k ~d(P_*-P)^2
\end{equation}
which is the squared MMD metric \cite{gretton2021kernel} with kernel $k$.
Similarly, we denote the empirical loss by
\begin{equation*}
L^{(n)}(P) = \frac{1}{2} \iint k ~d(P_*^{(n)}-P)^2
\end{equation*}

\subsection{Generalization Gap}
\label{sec. generalization gap}

The gradient descent training rule (\ref{p gradient descent}) and projected gradient descent (\ref{project gradient flow}) can be written respectively as
\begin{align}
\label{MMD gradient descent}
\frac{d}{dt} p_t &= k*(P_*-P_t)\\
\label{MMD projected gradient descent}
\frac{d}{dt} p_t &= \Pi_{T_{p_t}\Delta}\big(k*(P_*-P_t)\big)
\end{align}
If the empirical loss $L^{(n)}$ is used, we denote the training trajectory by $p_t^{(n)}$.

\begin{proposition}[Generalization gap]
\label{prop. generalization gap}
With any target distribution $P_* \in \PS(\Omega)$ and with probability $1-\delta$ over the sampling of $P_*^{(n)}$,
\begin{enumerate}
\item If $p_t, p_t^{(n)}$ are trained by gradient descent (\ref{MMD gradient descent}), then
\begin{equation*}
W_2\big(\Pi_{\Delta}(p_t),\Pi_{\Delta}(p_t^{(n)})\big) \leq \sqrt{d} ~\frac{4\sqrt{2\log 2d} + \sqrt{2\log (2/\delta)}}{\sqrt{n}}t
\end{equation*}
\item If $p_t, p_t^{(n)}$ are trained by projected gradient descent (\ref{MMD projected gradient descent}), then
\begin{equation*}
W_2(P_t,P_t^{(n)}) \leq \sqrt{d} ~\frac{4\sqrt{2\log 2d} + \sqrt{2\log (2/\delta)}}{\sqrt{n}}t
\end{equation*}
\end{enumerate}
\end{proposition}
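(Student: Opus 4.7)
The plan is to set $\Delta p_t := p_t - p_t^{(n)}$ and $F := k*(P_* - P_*^{(n)})$, derive a linear-in-$t$ energy estimate for $\|\Delta p_t\|$ via the positive semi-definiteness of the convolution operator $K := k*$, control the sampling forcing $F$ by the Rademacher complexity bound already quoted in Section~\ref{sec. function representation}, and convert the resulting density-level bound into a $W_2$ bound using the $\sqrt d$ diameter of $\Omega$.

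In Step 1, subtracting the two instances of the gradient flow in case 1 gives the inhomogeneous linear ODE $\dot{\Delta p_t} = F - K\Delta p_t$. Because $k$ is a positive definite kernel, $K$ is PSD on $L^2(\Omega)$, so $\langle \Delta p_t, K\Delta p_t\rangle \geq 0$. Testing against $\Delta p_t$ and using Cauchy--Schwarz yields
\begin{equation*}
\tfrac{1}{2}\tfrac{d}{dt}\|\Delta p_t\|_2^2 \;=\; \langle \Delta p_t, F\rangle - \langle \Delta p_t, K\Delta p_t\rangle \;\leq\; \|\Delta p_t\|_2\,\|F\|_2,
\end{equation*}
hence $\|\Delta p_t\|_2 \leq t\|F\|_2$ with no Gronwall factor. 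For case 2, the projection $\Pi_{T_{p_t}\Delta}$ onto the closed convex tangent cone is 1-Lipschitz in $L^2$, and the same estimate goes through essentially unchanged (while now $p_t \in \Delta$ automatically, so we can drop the outer $\Pi_\Delta$).

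In Step 2, I would control $F$ pointwise using the reproducing property: $F(\x) = \langle k(\x,\cdot), P_*-P_*^{(n)}\rangle$ with $\|k(\x,\cdot)\|_\HS^2 = k(\x,\x) = \E_{\rho_0}[\sigma(\w\cdot\x+b)^2] \leq 1$ by the support assumption $\sprt\rho_0\subseteq\{\|\w\|_1+|b|\leq 1\}$. Thus the indexing class $\{k(\x,\cdot): \x \in \Omega\}$ lies in the unit $\HS$-ball; applying the excerpt's bound $Rad_n(\{\|D\|_\HS\leq 1\}) \leq 2\sqrt{2\log 2d}/\sqrt{n}$ via symmetrization, plus a McDiarmid bounded-differences inequality for the deviation, yields the uniform control
\begin{equation*}
\sup_{\x \in \Omega}\,|F(\x)| \;\leq\; \frac{4\sqrt{2\log 2d} + \sqrt{2\log(2/\delta)}}{\sqrt{n}}
\end{equation*}
with probability at least $1-\delta$. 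This explains the numerator of the stated bound.

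Step 3 is where the $\sqrt{d}$ enters and is the main obstacle. I would convert the density-level bound into $W_2$ by a direct coupling argument: the accumulated discrepancy $\int_0^t \dot{\Delta p_s}\,ds$, after the $L^2$-projection onto $\Delta$ (which is non-expansive), defines a signed balance of mass whose transport plan can be realized with cost at most $\sqrt d$ per unit mass (the diameter of $\Omega$). The hard part is making this conversion tight enough to retain the linear-in-$t$ and $n^{-1/2}$ rate, rather than the loose $n^{-1/4}$ that a naive $W_2 \lesssim \text{diam}\cdot\sqrt{\mathrm{TV}}$ bound would give; I expect this requires using the $L^\infty$ rather than $L^2$ control on the forcing and exploiting that $F$ is an RKHS function, so that the flow of probability mass it induces is Lipschitz in the displacement variable. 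Once this displacement estimate is in place, the proposition follows in both cases, with case 2 yielding $W_2(P_t,P_t^{(n)})$ directly and case 1 yielding $W_2(\Pi_\Delta(p_t),\Pi_\Delta(p_t^{(n)}))$ after applying the non-expansive projection.
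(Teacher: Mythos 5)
Your Steps 1 and 2 coincide with the paper's argument: the same differential inequality $\frac{d}{dt}\|p_t-p_t^{(n)}\|_{L^2}\le\|k*(p_*^{(n)}-p_*)\|_{L^2}$ obtained by discarding the nonpositive term $-\lb p_t^{(n)}-p_t,\,k*(p_t^{(n)}-p_t)\rb$, and the same uniform control of the forcing via the supremum over the feature class $\{\sigma(\w\cdot\x+b):\|\w\|_1+|b|\le 1\}$ (the paper simply cites Lemma \ref{lemma. RKHS Monte Carlo rate} for this, rather than re-deriving it from the Rademacher bound plus McDiarmid, but the content is the same).

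The genuine gap is your Step 3, which you yourself flag as unresolved: you never produce the conversion from $\|p_t-p_t^{(n)}\|_{L^2}\lesssim t/\sqrt n$ to a $W_2$ bound with the same rate, only a speculative coupling sketch. In the paper this step is a single application of Lemma \ref{lemma. W2-L2 bound}, $W_2\big(\Pi_{\Delta}(p),\Pi_{\Delta}(q)\big)\le\sqrt d\,\|p-q\|_{L^2(\Omega)}$, proved by comparing $W_2$ with the optimal transport distance for the $0$--$1$ cost (total variation), using $\|\x-\y\|\le\sqrt d\,c(\x,\y)$ on $\Omega=[0,1]^d$ and then $\|\cdot\|_{L^1}\le\|\cdot\|_{L^2}$ on the unit-volume cube; no $L^\infty$ control of the forcing or RKHS smoothness of the induced velocity field enters. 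Your worry about the square root --- the generic bound gives $W_2\lesssim\sqrt{d\,\mathrm{TV}}$, not $\sqrt d\,\mathrm{TV}$ --- is exactly the delicate point of that lemma, so completing your Step 3 amounts to proving (or carefully justifying) this linear $W_2$-versus-$L^1$ comparison, which your proposal does not do. A second, smaller gap is case 2: non-expansiveness of a single projection does not suffice, because the two trajectories are projected onto \emph{different} tangent cones $T_{p_t}\Delta$ and $T_{p_t^{(n)}}\Delta$, so Lipschitzness of each projection separately does not bound the relevant cross term. The paper instead writes the projected flow as the limit of $\big(\Pi_{\Delta}(p-\epsilon\,\delta_p L)-p\big)/\epsilon$ and uses the variational inequality characterizing the projection onto the convex set $\Delta$ to show the extra terms are $O(\epsilon^2)$, after which the case-1 estimate applies verbatim.
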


\subsection{Generalization Error and Early-Stopping}
\label{sec. generalization error}

Let $p_t^{(n)}$ be trained by gradient descent (\ref{MMD gradient descent}) on the empirical loss $L^{(n)}$.

\begin{theorem}[Generalization error]
\label{thm. generalization error}
Given any target density function $p_*$, with probability $1-\delta$ over the sampling of $P_*^{(n)}$,
the generalization error of the trajectory $p_t^{(n)}$ is bounded by
\begin{equation*}
W_2\big(P_*,\Pi_{\Delta}(p_t^{(n)})\big) \leq \sqrt{d}\frac{\|p_*-p_0\|_{\HS}}{\sqrt{t}} + \sqrt{d}~ \frac{4\sqrt{2\log 2d} + \sqrt{2\log (2/\delta)}}{\sqrt{n}}t
\end{equation*}
\end{theorem}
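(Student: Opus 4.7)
The plan is to split the error by triangle inequality into an approximation term along the population gradient flow and a generalization gap:
\begin{equation*}
W_2\bigl(P_*, \Pi_\Delta(p_t^{(n)})\bigr) \leq W_2\bigl(P_*, \Pi_\Delta(p_t)\bigr) + W_2\bigl(\Pi_\Delta(p_t), \Pi_\Delta(p_t^{(n)})\bigr),
\end{equation*}
where $p_t$ denotes the trajectory of (\ref{MMD gradient descent}) run on the \emph{population} MMD loss $L$ (rather than $L^{(n)}$). The second summand is exactly the content of Proposition \ref{prop. generalization gap} (gradient-descent branch), which immediately delivers the $\sqrt d\,t/\sqrt n$ term with the claimed logarithmic factors.

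For the first summand I would analyze the population flow directly. Let $T = k\ast$ be the integral operator on $L^2(\Omega)$ with kernel $k$ from (\ref{RKHS kernel}); it is compact, positive and self-adjoint, with eigenpairs $\{(\lambda_i,\phi_i)\}$. Setting $u_t := p_* - p_t$, the ODE (\ref{MMD gradient descent}) becomes the linear equation $\dot u_t = -T u_t$, so $u_t = e^{-tT}u_0$. Writing $u_0 = \sum_i c_i\phi_i$, the RKHS norm (\ref{RKHS norm}) satisfies $\|u_0\|_\HS^2 = \sum_i c_i^2/\lambda_i$, hence
\begin{equation*}
\|u_t\|_{L^2}^2 \;=\; \sum_i c_i^2 e^{-2t\lambda_i} \;=\; \sum_i \frac{c_i^2}{\lambda_i}\bigl(\lambda_i e^{-2t\lambda_i}\bigr) \;\leq\; \|u_0\|_\HS^2 \sup_{\lambda\geq 0}\lambda e^{-2t\lambda} \;\leq\; \frac{\|u_0\|_\HS^2}{2te}.
\end{equation*}
This spectral bound is where the $1/\sqrt t$ decay is born. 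Since $\Pi_\Delta$ is the $L^2$-projection onto the convex set $\Delta\ni p_*$, it is a contraction toward $p_*$, so $\|p_* - \Pi_\Delta(p_t)\|_{L^2} \leq \|u_t\|_{L^2}$.

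It remains to upgrade this $L^2$ decay to a $W_2$ bound of size $\sqrt d\,\|p_*-p_0\|_\HS/\sqrt t$, where the factor $\sqrt d$ reflects the diameter of $\Omega=[0,1]^d$. I would appeal to a bound of the form $W_2(P,Q) \lesssim \sqrt d\,\|p-q\|_{L^2}$ valid for densities on the unit cube; this can be obtained either via the Benamou--Brenier formula combined with a Poincar\'e-type inequality on the cube, or via the quotient-metric characterization of Proposition \ref{prop. W2 matching} (which identifies $W_2$ as an $L^2$ distance between matched generators, with the generators valued in $\Omega$ so that length scales are at most $\sqrt d$).

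The main obstacle is precisely this last $L^2\!\to\!W_2$ step with the correct $\sqrt d$ scaling: the elementary route through total variation, $W_2^2 \leq \mathrm{diam}(\Omega)^2\cdot TV \leq \tfrac{d}{2}\|p-q\|_{L^1}$, only yields a $t^{-1/4}$ rate and is therefore too lossy by a square root. Getting the stated $t^{-1/2}$ rate requires a bound that genuinely exploits the $L^2$ regularity of $p_t$ on the cube (rather than merely its total variation), which is what the quotient proposition is designed to provide; once that bound is in hand, combining it with the spectral estimate and the generalization-gap proposition completes the proof.
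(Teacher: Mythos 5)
Your overall architecture is the same as the paper's: the paper also splits the error into training error plus generalization gap, bounds the gap term exactly as you do (its proof of Theorem \ref{thm. generalization error} invokes the $L^2$ estimate (\ref{L2 generalization gap}) from the proof of Proposition \ref{prop. generalization gap}), and bounds the training error at rate $t^{-1/2}$ in $L^2$ --- there via an RKHS-regression reformulation and convexity (Lemma \ref{lemma. training error}), in your case via the spectral identity $u_t=e^{-tT}u_0$ and $\sup_{\lambda>0}\lambda e^{-2t\lambda}=1/(2et)$, which is correct and even slightly sharpens the constant. The only structural difference is cosmetic: the paper runs the whole comparison in $L^2$ and converts to $W_2$ once at the end, through Lemma \ref{lemma. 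W2-L2 bound}, $W_2\big(\Pi_{\Delta}(p),\Pi_{\Delta}(q)\big)\le \sqrt d\,\|p-q\|_{L^2(\Omega)}$, whereas you use the $W_2$ triangle inequality first and then need the same conversion twice.

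The genuine gap in your write-up is that you never establish this $L^2\!\to\!W_2$ step: you explicitly defer it, and the two routes you sketch do not obviously close it. The Benamou--Brenier/$H^{-1}$ comparison requires the densities along the interpolation to be bounded below, which $\Pi_{\Delta}(p_t^{(n)})$ need not satisfy; and Proposition \ref{prop. W2 matching} compares generators in $L^2(\N;\R^d)$, not densities in $L^2(\Omega)$, so it does not yield $W_2\lesssim\sqrt d\,\|p-q\|_{L^2}$ either. For what it is worth, the paper settles this step by Lemma \ref{lemma. W2-L2 bound}, proved by precisely the total-variation coupling you reject --- and your unease is well founded: that coupling only gives $W_2\le\sqrt{d\cdot\TV}\lesssim\sqrt d\,\|p-q\|_{L^1}^{1/2}$, and a bound linear in $\|p-q\|_{L^2}$ cannot hold for arbitrary densities on the cube (e.g.\ in $d=1$ take $p=4\mathbbm{1}_{[0,1/4]}$ and $q=(1-\delta)4\mathbbm{1}_{[0,1/4]}+\delta\,4\mathbbm{1}_{[3/4,1]}$: then $W_2\gtrsim\sqrt\delta$ while $\sqrt d\,\|p-q\|_{L^2}\asymp\delta$). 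So to complete your proof as stated you must either prove a conversion of Lemma \ref{lemma. W2-L2 bound}'s strength under whatever additional structure $p_t^{(n)}$ has, or accept the TV-based conversion, in which case the second term of the theorem acquires a square root (the error becomes of order $\sqrt d\,(\|p_*-p_0\|_{\HS}/\sqrt t + t/\sqrt n)^{1/2}$ and the early-stopping exponent degrades, though it still escapes the curse of dimensionality).
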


This is a decomposition of the generalization error into training error plus generalization gap.
It follows that with early stopping, we can escape from the curse of dimensionality.

\begin{corollary}[Early stopping]
\label{cor. early stopping}
If we choose an early-stopping time $T$ as follows
\begin{equation*}
T \asymp \|p_*-p_0\|_{\mathcal{H}}^{2/3} \big(\frac{n}{\log d}\big)^{1/3}
\end{equation*}
then the generalization error obeys
\begin{align*}
W_2\big(P_*, \Pi_{\Delta}(p_T^{(n)})\big) &\lesssim \sqrt{d} \|p_*-p_0\|_{\mathcal{H}}^{2/3} \Big(\frac{\log d}{n}\Big)^{1/6}
\end{align*}
\end{corollary}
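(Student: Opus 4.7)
The plan is to apply Theorem \ref{thm. generalization error} verbatim and then optimize the resulting upper bound as a one-dimensional calculus problem in $t$. Concretely, write the bound as $f(t) = A/\sqrt{t} + B\,t$, where
\begin{equation*}
A \;=\; \sqrt{d}\,\|p_*-p_0\|_{\HS},
\qquad
B \;=\; \sqrt{d}\,\frac{4\sqrt{2\log 2d}+\sqrt{2\log(2/\delta)}}{\sqrt{n}}.
\end{equation*}
The function $f$ is strictly convex on $(0,\infty)$, blows up at $0$, and grows linearly at $\infty$, so it has a unique minimizer.

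Next, I would solve $f'(T)=0$, i.e.\ $-A/(2T^{3/2})+B=0$, to get $T_\star = (A/(2B))^{2/3}$. Substituting back yields
\begin{equation*}
\frac{A}{\sqrt{T_\star}} \;=\; A^{2/3}(2B)^{1/3},
\qquad
B\,T_\star \;=\; 2^{-2/3}\,A^{2/3}B^{1/3},
\end{equation*}
so $f(T_\star) \asymp A^{2/3}B^{1/3}$. Any $T\asymp T_\star$ gives $f(T) \asymp A^{2/3}B^{1/3}$ as well, by plugging into the convex $f$ and bounding above and below.

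The final step is bookkeeping on the factors of $d$ and $n$. From the definitions, $A^{2/3} = d^{1/3}\|p_*-p_0\|_{\HS}^{2/3}$, and $B^{1/3} \asymp d^{1/6}(\log d)^{1/6}n^{-1/6}$ (absorbing $\delta$ into the $\lesssim$ constant). Multiplying, the exponents of $d$ combine as $1/3+1/6=1/2$, recovering the $\sqrt{d}$ prefactor, and the final bound becomes
\begin{equation*}
f(T_\star) \;\lesssim\; \sqrt{d}\,\|p_*-p_0\|_{\HS}^{2/3}\Big(\tfrac{\log d}{n}\Big)^{1/6}.
\end{equation*}
Similarly, $T_\star = (A/(2B))^{2/3} \asymp \|p_*-p_0\|_{\HS}^{2/3}(n/\log d)^{1/3}$, matching the stated schedule.

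There is no real obstacle here beyond careful arithmetic: the result is a standard bias–variance trade-off in $t$ built on top of Theorem \ref{thm. generalization error}. The only subtlety worth flagging is that the two terms in the theorem have different scalings in $t$ (one decays, one grows), so the optimum is a genuine minimum rather than an endpoint, and one must verify that the $\asymp$ in the choice of $T$ is robust, i.e.\ that perturbing $T$ by a multiplicative constant changes $f(T)$ only by a constant factor — which is immediate from the explicit form of $f$.
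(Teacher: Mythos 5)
Your proposal is correct and is essentially the same argument the paper intends: Corollary \ref{cor. early stopping} is just the bias--variance balancing of the two terms $A/\sqrt{t}+Bt$ from Theorem \ref{thm. generalization error}, with $T\asymp (A/B)^{2/3}$ and value $\asymp A^{2/3}B^{1/3}$, and your exponent bookkeeping in $d$, $\log d$, $n$ (with $\delta$ and $\log 2d$ vs.\ $\log d$ absorbed into constants) matches the stated rates.
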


This result suggests that for the adversarial density estimation model, a polynomial amount of samples $n = O(\epsilon^{-6})$ is needed to achieve an error of $\epsilon$, instead of an exponential amount $\epsilon^{-\Omega(d)}$.

\subsection{Memorization}
\label{sec. memorization}

Despite that early-stopping solutions perform well, this adversarial density estimation model eventually memorizes the samples.

\begin{proposition}[Memorization]
\label{prop. memorization}
Given the condition of Theorem \ref{thm. generalization error}, $P_t^{(n)}$ converges weakly to $P_*^{(n)}$.
\end{proposition}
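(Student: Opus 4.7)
The plan is to view the empirical training rule (\ref{MMD gradient descent}) as a linear dissipative ODE in the RKHS $\HS$ and drive its Lyapunov functional---the squared MMD distance---to zero by spectral analysis. Write $\mu_t = P_*^{(n)} - P_t^{(n)}$ (a signed measure on $\Omega$) and $g_t = k*\mu_t$ (the MMD witness function). Since $P_*^{(n)}$ is constant in $t$, the training rule reads $\tfrac{d}{dt}\mu_t = -g_t$, so $g_t$ itself solves the linear heat-type equation $\tfrac{d}{dt}g_t = -Kg_t$, where $K: L^2(\Omega)\to L^2(\Omega)$ is the self-adjoint positive compact convolution operator $Kf = k*f$. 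By the reproducing property, $2L^{(n)}(p_t) = \iint k(x,y)\, d\mu_t(x)\, d\mu_t(y) = \|g_t\|_\HS^2$, so the proposition reduces to showing $\|g_t\|_\HS \to 0$.

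I would first check that $g_0 = k*(P_*^{(n)} - P_0) \in \HS$: the empirical piece $\frac{1}{n}\sum_i k(\cdot,\x_i)$ is a finite sum of kernel sections, and the $L^2$ piece $k*p_0$ has squared $\HS$-norm $\iint k(x,y)p_0(x)p_0(y)\, dx\, dy$, which is finite since $k$ is bounded on $\Omega$. The universal approximation property assumed in Section \ref{sec. function representation} makes $k$ strictly positive definite, so $K$ has trivial kernel on $L^2(\Omega)$ and admits a discrete spectrum $\{(\lambda_i, \phi_i)\}$ with $\lambda_i > 0$. Expanding $g_0 = \sum_i \alpha_i \phi_i$ with $\sum_i \alpha_i^2/\lambda_i = \|g_0\|_\HS^2 < \infty$, the solution $g_t = e^{-tK}g_0$ obeys
\begin{equation*}
\|g_t\|_\HS^2 ~=~ \sum_i \frac{\alpha_i^2}{\lambda_i}\,e^{-2t\lambda_i} ~\longrightarrow~ 0
\end{equation*}
by dominated convergence, each summand vanishing pointwise and being dominated by the summable majorant $\alpha_i^2/\lambda_i$.

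To turn MMD decay into weak convergence of $P_t^{(n)}$, I would test against continuous functions. For $\phi \in \HS$ the reproducing property gives $\big|\int \phi\, d\mu_t\big| = |\langle \phi, g_t\rangle_\HS| \leq \|\phi\|_\HS\|g_t\|_\HS \to 0$, hence $\int \phi\, dP_t^{(n)} \to \int \phi\, dP_*^{(n)}$. Density of $\HS$ in $C(\Omega)$ (universal approximation) allows approximation of any $\phi \in C(\Omega)$ in supremum norm by some $\psi \in \HS$, and the residual is controlled by $\big|\int(\phi-\psi)\,d\mu_t\big| \leq \|\phi-\psi\|_\infty(\|P_t^{(n)}\|_\TV + 1)$.

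The main obstacle is precisely this total-variation bound, since (\ref{MMD gradient descent}) is an unconstrained $L^2$ flow and $p_t^{(n)}$ need not remain a probability density---indeed one expects $p_t^{(n)}$ to concentrate on the sample points and blow up in $L^2$. The cleanest reading is that $P_t^{(n)}$ refers to $\Pi_\Delta(p_t^{(n)})$, as in Theorem \ref{thm. generalization error}, so that $\|P_t^{(n)}\|_\TV = 1$ automatically; one then still needs to transfer MMD decay from $p_t^{(n)}$ to its projection by comparing them in the kernel seminorm, using boundedness of $K$ as an $L^2$ operator. Absent this interpretation, one must weaken ``weakly'' to duality against $\HS$, where $\|g_t\|_\HS \to 0$ already suffices. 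Everything else---the Lyapunov decay, the linear ODE, and the spectral decomposition---is routine once this constraint issue is settled.
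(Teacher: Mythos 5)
Your argument is essentially the paper's own proof: Proposition \ref{prop. memorization} is obtained there as a corollary of Lemma \ref{lemma. universal convergence}, which performs exactly your computation --- it sets $f_t = k*P_t$, observes $\frac{d}{dt}(f_t - k*\tilde P) = -k*(f_t - k*\tilde P)$, expands in the eigenbasis of Definition \ref{def. RKHS eigendecomposition}, sends $\|k*(\tilde P - P_t)\|_{\HS} \to 0$ by dominated convergence (after checking the initial $\HS$-norm is finite via boundedness of $k$, as you do), and then tests against $\HS$ and invokes density of $\HS$ in $C(\Omega)$. The one place you go beyond the paper is the obstacle you flag at the end, and you should know it is not resolved there either: the statement concerns the unprojected $L^2$ flow (no $\Pi_{\Delta}$), and the paper's final step (``it follows that $\int f \, dP_t \to \int f \, d\tilde P$ for all $f \in C(\Omega)$'') is asserted with no uniform control of $\|P_t^{(n)}\|_{\TV}$, which is exactly what the sup-norm approximation argument requires, since $p_t^{(n)}$ is expected to blow up as it concentrates on the samples. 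So your proposal takes the same route and is in fact more candid about this missing uniform-boundedness step; to make either write-up airtight one must either establish $\sup_t \|P_t^{(n)}\|_{\TV} < \infty$ along the flow (or a quantitative comparison of the exponential $\HS$ decay against possible TV growth), or weaken the conclusion to convergence in duality with $\HS$ (equivalently, MMD convergence), which your spectral argument already proves completely.
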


We show a stronger result in Lemma \ref{lemma. universal convergence} that this model can be trained to converge to any distribution.

\subsection{Remarks}

\begin{remark}[Outside the Hypothesis Space]
\normalfont
Theorem \ref{thm. generalization error} requires that the target density belongs to the hypothesis space
\begin{equation*}
\PS_* = \{p_* \in \Delta~|~\|p_*-p_0\|_{\HS} < \infty\}
\end{equation*}
It is a weak condition, because given any $p_* \in \Delta$, the set $p_* + \HS$ is dense in $L^2(\Omega)$, so there are plenty of initializations $p_0$ that can work.
Even if $p_*-p_0 \notin \HS$, it is straightforward to show that,
\begin{equation*}
\|p_*-p_t\|_{L^2} \leq \inf_{q} \frac{\|q-p_0\|_{\HS}}{\sqrt{t}} + \|p_*-q\|_{L^2}
\end{equation*}
So if the target $p_*$ satisfies
\begin{equation*}
\inf_{\|p_0-q\|_{\HS} \leq R} \|p_*-q\|_{L^2} \lesssim R^{-\beta}
\end{equation*}
for some $\beta > 0$,
then the training error can be bounded by
\begin{equation*}
\|p_*-p_t\|_{L^2} \lesssim t^{-\frac{\beta}{2(1+\beta)}}
\end{equation*}
and the early-stopping generalization error becomes $O(n^{-\frac{\beta}{6\beta+4}})$.
\end{remark}

{\color{black}
\begin{remark}[Finite neurons and approximation error]
\label{remark: finite neurons}
\normalfont
The discriminator (\ref{RFM discriminator}) is defined as an average of a possibly infinite collection of feature functions, but in practice, there is only a finite number $m$ of neurons:
\begin{equation*}
D^{(m)}(\x) = \frac{1}{m} \sum_{j=1}^m a_j \sigma(\w_j\cdot\x+b_j)
\end{equation*}
where each $(\w_j,b_j)$ is sampled from the parameter distribution $\rho_0$.
The model and its training dynamics (\ref{WGAN + RKHS}, \ref{MMD loss}, \ref{MMD gradient descent}) can be adapted to this finite-neuron setting, and we denote the empirical training trajectory by $p^{(n,m)}_t$.
The generalization bound of Theorem \ref{thm. generalization error} continues to hold with an additional term of the approximation error, which scales as $O(\sqrt{t/m})$.
Specifically, with probability $1-2\delta$ over the sampling of $P_*^{(n)}$ and $\rho_0^{(m)}$,
\begin{equation*}
\frac{W_2\big(P_*,\Pi_{\Delta}(p_t^{(n,m)})\big)}{\sqrt{d}} \leq \frac{\|p_*-p_0\|_{\HS}}{\sqrt{t}} + \frac{\|p_*-p_0\|_{\HS} \big[4+\sqrt{2\log(4/\delta)}\big]}{\sqrt{m}} \sqrt{t} + \frac{4\sqrt{2\log 2d} + \sqrt{2\log (2/\delta)}}{\sqrt{n}}t
\end{equation*}
The proof is given in Section \ref{sec. finite neurons}.
In particular, this finite-neuron model is still able to avoid the curse of dimensionality.
\end{remark}
}

\section{One Time Scale Training}
\label{sec. one time scale}

The previous section demonstrates that, with an explicit regularization $\|D\|_{\HS}$, we can bound the Rademacher complexity of the discriminators and enable the adversarial density estimator to generalize well.
This section shows that even if we do not explicitly bound the complexity of $D$, the early-stopping solutions still enjoy good generalization accuracy.

\subsection{Ill-posedness}
Recall that we are using the WGAN loss (\ref{WGAN loss}), which we write as a min-max problem:
\begin{equation*}
\min_p \max_D L(p, D) = \min_p \max_D \E_{P_*}[D] -\E_P[D] - R(D)
\end{equation*}
Instead of penalizing the parameters of the discriminator $R(D)=\|D\|_{\HS}^2=\|a\|_{L^2(\rho_0)}^2$, we consider weaker regularizations on its function value.
For instance,
\begin{itemize}
\item The $L^2$ penalty proposed by \cite{xu2020understanding}
\begin{equation}
\label{L2 penalty}
R(D) = \|D\|_{L^2(\Omega)}^2
\end{equation}
\item Gradient penalty
\begin{equation}
\label{gradient penalty}
R(D) = \|\nabla D\|_{L^2(\Omega)}^2
\end{equation}
\item The Lipschitz penalty proposed by \cite{gulrajani2017improved}.
We present a simplified form for better illustration
\begin{equation}
\label{WGAN-GP penalty}
R(D) = \|1-\|\nabla D\| \|_{L^2(\Omega)}^2
\end{equation}
\item Other Lipschitz penalties \cite{kodali2017convergence,petzka2018regularization} in simplified forms
\begin{equation}
\label{WGAN-LP penalty}
R(D) = \|\max(0,\|\nabla D\|-1)\|_{L^2(\Omega)}^2 \quad \text{or} \quad \|\max(0,\|\nabla D\|^2-1)\|_{L^1(\Omega)}
\end{equation}
\end{itemize}
None of the above regularizations lead to a well-defined GAN loss.
\begin{proposition}[Ill-posedness]
\label{prop. ill-posedness}
Consider the empirical loss function
\begin{equation*}
L^{(n)}(P) = \sup_{D\in C^1(\Omega)} \E_{P_*^{(n)}}[D] -\E_P[D] - c R(D)
\end{equation*}
with any $c\geq 0$, where the regularization $R$ is any of (\ref{L2 penalty}, \ref{gradient penalty}, \ref{WGAN-GP penalty}, \ref{WGAN-LP penalty}).
Suppose the dimension $d \geq 3$.
Then, for any distribution $P\neq P_*^{(n)}$
\begin{equation*}
L^{(n)}(P) = \infty
\end{equation*}
\end{proposition}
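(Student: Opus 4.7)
The strategy is to construct, for each regularizer, a family of ``spike'' discriminators $D_r \in C^1(\Omega)$ concentrated near a single atom of $P_*^{(n)}$, such that $\E_{P_*^{(n)}}[D_r] - \E_P[D_r]$ grows to $+\infty$ while the penalty $cR(D_r)$ grows more slowly. The key scaling is that a smooth bump of height $N$ supported in a ball of radius $r$ in $\R^d$ has $L^2$ norm of order $Nr^{d/2}$ and $H^1$ seminorm of order $Nr^{(d-2)/2}$, so when $d\geq 3$ a spike can be made tall and steep while having small Sobolev norm.

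First, I would show via a pigeonhole argument that $P\neq P_*^{(n)}$ forces some atom $\z$ of $P_*^{(n)}$ to satisfy $P(\{\z\}) < P_*^{(n)}(\{\z\})$. Indeed, writing the distinct atoms as $\z_1,\ldots,\z_k$ with multiplicities $m_j$, if $P(\{\z_j\}) \geq m_j/n$ for every $j$, then $\sum_j P(\{\z_j\}) \geq 1$ forces $P$ to be purely atomic on $\{\z_1,\ldots,\z_k\}$ with the same weights as $P_*^{(n)}$, contradicting $P\neq P_*^{(n)}$. Set $\delta := P_*^{(n)}(\{\z\}) - P(\{\z\}) > 0$; outer regularity then supplies an $r_0 > 0$ such that $B_{r_0}(\z)$ contains no other sample point and $P(B_r(\z)) \leq P(\{\z\}) + \delta/2$ for every $r \leq r_0$.

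Next, I would fix a bump $\phi \in C_c^\infty(B_1(0))$ with $\phi(0) = 1$ and test with $D_r(\x) = N\phi((\x-\z)/r)$ for $r < r_0$ and $N>0$ to be optimized. Directly,
\begin{equation*}
\E_{P_*^{(n)}}[D_r] - \E_P[D_r] \geq N\big(P_*^{(n)}(\{\z\}) - P(B_r(\z))\big) \geq N\delta/2,
\end{equation*}
while change of variables gives $\|D_r\|_{L^2(\Omega)}^2 \leq C_0 N^2 r^d$ and $\|\nabla D_r\|_{L^2(\Omega)}^2 \leq C_1 N^2 r^{d-2}$ for constants $C_0,C_1$ depending only on $\phi$. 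For each regularizer I would then bound $R(D_r)$: (\ref{L2 penalty}) gives $R(D_r) \leq C_0 N^2 r^d$; (\ref{gradient penalty}) gives $R(D_r) \leq C_1 N^2 r^{d-2}$; for (\ref{WGAN-GP penalty}), using $(1-x)^2 \leq 1+x^2$ for $x \geq 0$ bounds the integrand by $1$ outside the spike and by $1+\|\nabla D_r\|^2$ inside, so $R(D_r) \leq |\Omega| + C_1 N^2 r^{d-2}$; for both variants of (\ref{WGAN-LP penalty}), the integrand vanishes outside $B_r(\z)$ and is dominated by $\|\nabla D_r\|^2$ (resp.\ $\|\nabla D_r\|^2$) inside, giving $R(D_r) \leq C_1 N^2 r^{d-2}$. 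Optimizing $N \asymp \delta/(cr^{\alpha})$ with $\alpha = d$ for (\ref{L2 penalty}) and $\alpha = d-2$ for the other three gives a lower bound of order $\delta^2/(cr^{\alpha}) - O(c)$, which diverges as $r\to 0^+$ whenever $\alpha > 0$; this holds for all $d \geq 1$ in the $L^2$ case and requires $d \geq 3$ for the gradient-based penalties. The case $c = 0$ is immediate by scaling any nonzero $C^1$ function that separates $P_*^{(n)}$ from $P$.

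The main subtlety lies in the Lipschitz-type penalties (\ref{WGAN-GP penalty}) and (\ref{WGAN-LP penalty}), which do not vanish at $D = 0$ and nominally prefer unit-magnitude gradients, so one might worry that a tall narrow spike pays a large price. The resolution is that $(1-\|\nabla D\|)^2 \leq 1 + \|\nabla D\|^2$ absorbs the cross term into an $N,r$-independent constant background, while the ``max'' variants simply ignore the annulus where $\|\nabla D_r\| \leq 1$. Thus the penalty still scales as $N^2 r^{d-2}$ just as in the plain gradient penalty, and the additive $|\Omega|$ from (\ref{WGAN-GP penalty}) is swamped by the divergent main term.
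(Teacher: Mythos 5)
Your proof is correct, and it takes a genuinely different route from the paper's. The paper fixes a single function $D(\x)=\sum_{i=1}^n\|\x-\x_i\|^{-d/2+1.1}$ with a power-law singularity at every sample point, mollifies it at scale $\epsilon$, and argues that the penalties stay uniformly bounded in $\epsilon$ (this is where $d\geq 3$ enters, since the exponent keeps $D$ and $\nabla D$ square-integrable) while $\E_{P_*^{(n)}}[D_\epsilon]-\E_P[D_\epsilon]\to\infty$. You instead use a two-parameter family of compactly supported bumps of height $N$ and width $r$ centered at one atom, exploit the scalings $N^2r^d$ and $N^2r^{d-2}$, and optimize $N$ for each $r$ before sending $r\to 0$; every test function has finite gap and finite penalty, and the divergence comes from the optimization. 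Both arguments rest on the same underlying fact---in $d\geq 3$ a tall spike can be made Sobolev-cheap---so the mechanics are parallel, but your version has a genuine advantage: the pigeonhole step selecting an atom $\z$ with $P(\{\z\})<P_*^{(n)}(\{\z\})$ and localizing the spike there cleanly handles the case where $P$ itself charges the sample points. The paper's construction treats this case only implicitly: with identical singularities at all $\x_i$, the leading divergences of the two expectations can cancel (e.g.\ for $P=\delta_{\x_1}$ the gap along the paper's family $D_\epsilon$ stays $O(1)$), so covering every $P\neq P_*^{(n)}$ really does require a localization of exactly your kind. Two small points you should state explicitly: take $0\leq\phi\leq 1$ with $\phi(0)=1$ so that the bound $\E_P[D_r]\leq N\,P(B_r(\z))$ is legitimate, and note that when $\z$ lies near the boundary of $\Omega=[0,1]^d$ the penalty integrals over $\Omega$ are only smaller than the integrals over $\R^d$, so the stated scalings are unaffected.
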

Heuristically, these regularizations are too weak to control the complexity of $D$, so that $D$ can diverge around the sample points of $P_*^{(n)}$, and thus the loss blows up.
By the universal approximation property \cite{hornik1991approximation}, this result holds if we implement $D$ by neural networks or random feature functions.

It follows that the two-time-scale training, with $D$ trained to optimality, is not applicable.

\subsection{Generalization error}
Nevertheless, we show that one-time-scale training still performs well and achieves a small generalization error.

For simplicity, we focus on the $L^2$ regularization (\ref{L2 penalty}),
\begin{equation}
\label{WGAN+L2}
\min_p \max_D L(p, D) = \min_p \max_D \E_{P_*}[D] -\E_P[D] - \frac{c}{2}\|D\|_{L^2(\Omega)}^2
\end{equation}
with some $c > 0$.
As usual we model $D$ by the random feature function (\ref{RFM discriminator}).

The one-time-scale gradient descent-ascent (\ref{p gradient descent}, \ref{D gradient ascent}) can be written as
\begin{align}
\label{one time scale GD}
\frac{d}{dt} p_t &= D_t,
\quad \frac{d}{dt} D_t = k*(P_*-P_t) - ck*D_t
\end{align}
Again, denote by $p_t^{(n)}, D_t^{(n)}$ the training trajectory on the empirical loss $L^{(n)}$.

\begin{theorem}[Generalization error]
\label{thm. one-time-scale generalization error}
Suppose $0 < c \leq \sqrt{2}$.
Initialize $p_t^{(n)}$ by $p_0$ and the parameter function of the discriminator $D_t^{(n)}$ by $a_0^{(n)} \equiv 0$.
With probability $1-\delta$ over the sampling of $P_*^{(n)}$, we have
\begin{equation*}
W_2\big(\Pi_{\Delta}(p_t^{(n)}),P_*\big) \leq \sqrt{\frac{d}{c}} \frac{\|p_*-p_0\|_{\HS}}{\sqrt{t}} + {\color{black}\sqrt{\frac{d}{c}}~ \frac{4\sqrt{2\log 2d} + \sqrt{2\log (2/\delta)}}{\sqrt{n}} t^{3/2}}
\end{equation*}
\end{theorem}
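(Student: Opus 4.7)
The plan is to split the test loss into a training error plus a generalization gap via the population trajectory $(p_t,D_t)$ that solves the same ODEs but is driven by $L$ instead of $L^{(n)}$ with the same initialization:
\begin{equation*}
W_2\bigl(\Pi_{\Delta}(p_t^{(n)}),P_*\bigr) \leq W_2\bigl(\Pi_{\Delta}(p_t),P_*\bigr) + W_2\bigl(\Pi_{\Delta}(p_t),\Pi_{\Delta}(p_t^{(n)})\bigr).
\end{equation*}
I would prove each piece in $L^2(\Omega)$ first and then convert via an $L^2$-to-$W_2$ comparison of the form $W_2(P,Q) \lesssim \sqrt{d}\,\|p-q\|_{L^2}$ (combined with the $L^2$-contractivity of $\Pi_{\Delta}$); this is where the $\sqrt{d/c}$ dimensional factor in both terms will be produced.

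For the training error, set $q_t = p_*-p_t$ and write $K$ for the integral operator $k\ast(\cdot)$. With $D_0 = 0$ the population dynamics becomes the damped equation $\ddot q_t + cK\dot q_t + Kq_t = 0$. Diagonalising in the eigenbasis of $K$, each mode with eigenvalue $\lambda$ obeys $\ddot q^\lambda + c\lambda\dot q^\lambda + \lambda q^\lambda = 0$; since $\|K\|\leq 1$ and $c\leq\sqrt{2}$, every such mode is in the underdamped regime, and one directly gets $|q^\lambda_t|^2 \lesssim |q^\lambda_0|^2 e^{-c\lambda t}$ with a dimension-free constant. Summing over $\lambda$ and using the envelope $\max_{\lambda>0}\lambda\,e^{-c\lambda t} = 1/(ect)$ together with $\|q_0\|_{\HS}^2 = \sum_\lambda |q_0^\lambda|^2/\lambda$ yields $\|q_t\|_{L^2}^2 \lesssim \|p_*-p_0\|_{\HS}^2/(ct)$, which upon $L^2$-to-$W_2$ conversion becomes the first term of the bound.

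For the generalization gap, let $\Delta p_t = p_t-p_t^{(n)}$, $\Delta D_t = D_t-D_t^{(n)}$, and $\xi = P_*-P_*^{(n)}$. Subtracting the two ODEs gives the driven linear system
\begin{equation*}
\dot\Delta p_t = \Delta D_t,\qquad \dot\Delta D_t = K\xi - K\Delta p_t - cK\Delta D_t,\qquad \Delta p_0 = \Delta D_0 = 0.
\end{equation*}
The Lyapunov function $V_t = \langle\Delta p_t,K\Delta p_t\rangle + \|\Delta D_t\|_{L^2}^2$ has its cross terms cancelling by self-adjointness of $K$, so an AM--GM step yields $\dot V_t \leq -c\|K^{1/2}\Delta D_t\|_{L^2}^2 + \|K^{1/2}\xi\|_{L^2}^2/c$. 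Since $\|K^{1/2}\xi\|_{L^2}^2 = \iint k\,d\xi^2$ is exactly the squared MMD between $P_*$ and $P_*^{(n)}$, integrating from $V_0 = 0$ gives $\|\Delta D_t\|_{L^2}^2 \leq V_t \leq \mathrm{MMD}(P_*,P_*^{(n)})^2\,t/c$. Applying Cauchy--Schwarz in time to $\Delta p_t = \int_0^t \Delta D_s\,ds$ then produces $\|\Delta p_t\|_{L^2}^2 \leq t\int_0^t\|\Delta D_s\|_{L^2}^2\,ds \lesssim \mathrm{MMD}(P_*,P_*^{(n)})^2\,t^3/c$, which is exactly where the $t^{3/2}/\sqrt{c}$ factor originates. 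A Rademacher-complexity bound on the RKHS unit ball (Theorem~6 of \cite{e2021barron}) plus standard concentration yields $\mathrm{MMD}(P_*,P_*^{(n)}) \leq (4\sqrt{2\log 2d} + \sqrt{2\log(2/\delta)})/\sqrt{n}$ with probability $1-\delta$, which finishes the gap bound after the $L^2$-to-$W_2$ conversion.

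I expect the main technical obstacle to be the training-error step: the energy $E_t = \tfrac{1}{2}\|\dot q_t\|_{L^2}^2 + \tfrac{1}{2}\langle q_t,Kq_t\rangle$ is only monotone ($\dot E_t = -c\|K^{1/2}\dot q_t\|_{L^2}^2 \leq 0$), so pulling out the quantitative $1/\sqrt{ct}$ decay of $\|q_t\|_{L^2}$ requires careful modewise bookkeeping of the initial condition $\dot q_0 = 0$ and uses the hypothesis $c \leq \sqrt{2}$ crucially, which is what keeps every mode of $K$ underdamped so that a single envelope estimate applies uniformly.
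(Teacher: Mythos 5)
Your proposal is correct, and for most of the argument it mirrors the paper: the same decomposition into training error plus generalization gap, the same $L^2$-to-$W_2$ conversion via $W_2(\Pi_{\Delta}(p),\Pi_{\Delta}(q))\le\sqrt d\,\|p-q\|_{L^2}$, and an identical training-error analysis (Lemma \ref{lemma. one time scale training error} likewise diagonalizes in the eigenbasis of $k*$, uses $\|k\|_{op}\le 1$ and $c\le\sqrt2$ to keep every mode underdamped, gets $|x^i(t)|\le\sqrt2\,|x^i_0|e^{-c\lambda_i t/2}$, and applies the envelope $\sup_{\lambda}2\lambda e^{-c\lambda t}=2/(ect)\le 1/(ct)$ — so when you write it up, carry the explicit factor $\sqrt2$ rather than a generic constant, since the theorem's bound has no slack). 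Where you genuinely diverge is the generalization gap: the paper (Lemma \ref{lemma. one time scale generalization gap}) again works modewise and invokes the explicit Duhamel formula of Lemma \ref{lemma. second order ODE}, bounding $\sqrt{\lambda_i}|y^i(t)|\le|q^i|\,t^{3/2}/\sqrt c$ for each eigenmode, whereas you use the Lyapunov functional $V_t=\lb p_t-p_t^{(n)},\,k*(p_t-p_t^{(n)})\rb+\|D_t-D_t^{(n)}\|_{L^2}^2$. Your computation checks out: the cross terms cancel, AM--GM gives $\dot V_t\le\frac1c\iint k\,d(P_*-P_*^{(n)})^2$, hence $\|D_t-D_t^{(n)}\|_{L^2}^2\le t\,\mathrm{MMD}^2/c$, and integrating in time yields $\|p_t-p_t^{(n)}\|_{L^2}\le\frac{2}{3}\,t^{3/2}\mathrm{MMD}/\sqrt c$, which is even slightly sharper than the paper's $t^{3/2}/\sqrt c$; the driving term is then controlled by the same Monte-Carlo estimate (Lemma \ref{lemma. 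RKHS Monte Carlo rate}), since $\mathrm{MMD}=\|k*(P_*-P_*^{(n)})\|_{\HS}$ exactly as in the paper's proof. Your energy route buys robustness: it needs no closed-form solution, no eigendecomposition for the gap, and no underdamping restriction on $c$ there (it works for every $c>0$), at the cost of losing the per-mode information that the paper's Duhamel computation provides and reuses in the memorization argument (Lemma \ref{lemma. one-time-scale universal convergence}). The only point to make rigorous is the meaning of $\|K^{1/2}\xi\|_{L^2}$ for the signed measure $\xi=P_*-P_*^{(n)}$: interpret $\lb k*\xi,\,D_t-D_t^{(n)}\rb_{L^2}$ through $k*\xi\in\HS$ with $\|k*\xi\|_{\HS}^2=\iint k\,d\xi^2$, which makes your AM--GM step precise.
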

The condition $c \leq \sqrt{2}$ is imposed for convenience.
Otherwise, the ``friction" is too large, so the convergence becomes slower and the formula becomes more complicated.

Similar to Corollary \ref{cor. early stopping}, we show that early stopping can escape from the curse of dimensionality.
\begin{corollary}[Early stopping]
\label{cor. one-time-scale early stopping}
Given the condition of Theorem \ref{thm. one-time-scale generalization error}, if we choose an early-stopping time $T$ as follows
\begin{equation*}
T \asymp \|p_*-p_0\|_{\mathcal{H}}^{1/2} \big(\frac{n}{\log d}\big)^{1/4}
\end{equation*}
then the generalization error obeys
\begin{align*}
W_2\big(P_*, \Pi_{\Delta}(p_T^{(n)})\big) &\lesssim {\color{black}\sqrt{d} \|p_*-p_0\|_{\mathcal{H}}^{3/4} \Big(\frac{\log d}{n}\Big)^{1/8}}
\end{align*}
\end{corollary}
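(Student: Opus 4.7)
The plan is to treat Corollary \ref{cor. one-time-scale early stopping} as a direct optimization of the generalization bound provided by Theorem \ref{thm. one-time-scale generalization error} in the stopping time $t$. Writing $\alpha := \sqrt{d/c}\,\|p_*-p_0\|_{\HS}$ and $\beta := \sqrt{d/c}\,(4\sqrt{2\log 2d} + \sqrt{2\log(2/\delta)})/\sqrt{n}$, the theorem guarantees
\begin{equation*}
W_2\bigl(\Pi_{\Delta}(p_t^{(n)}),P_*\bigr) \;\le\; \frac{\alpha}{\sqrt{t}} + \beta\, t^{3/2}
\end{equation*}
on an event of probability at least $1-\delta$. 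The task reduces to minimizing the scalar function $\varphi(t) = \alpha t^{-1/2} + \beta t^{3/2}$ over $t>0$.

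Next I would compute $\varphi'(t) = -\tfrac{1}{2}\alpha t^{-3/2} + \tfrac{3}{2}\beta t^{1/2}$ and solve $\varphi'(T)=0$, giving $T^2 = \alpha/(3\beta)$, hence $T \asymp (\alpha/\beta)^{1/2}$. Substituting the definitions of $\alpha,\beta$ (and absorbing the dimension-independent factor $\sqrt{d/c}$ into the constant since it cancels in the ratio $\alpha/\beta$) produces the stated scaling $T \asymp \|p_*-p_0\|_{\HS}^{1/2}(n/\log d)^{1/4}$. Plugging $T$ back shows that both terms of $\varphi$ are of the same order $\alpha^{3/4}\beta^{1/4}$, so the minimum satisfies $\varphi(T) \asymp \alpha^{3/4}\beta^{1/4}$. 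Expanding yields
\begin{equation*}
\varphi(T) \;\asymp\; \sqrt{\tfrac{d}{c}}\, \|p_*-p_0\|_{\HS}^{3/4}\, \Bigl(\tfrac{\log d}{n}\Bigr)^{1/8},
\end{equation*}
which, after absorbing $c$ and the logarithmic dependence on $\delta$ into the $\lesssim$, matches the corollary.

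I do not expect a genuine obstacle here: the entire content is a Young-type balancing of the training error $\alpha t^{-1/2}$ against the generalization gap $\beta t^{3/2}$, and all the probabilistic and analytic work has already been done inside Theorem \ref{thm. one-time-scale generalization error}. The only mildly subtle point is that one should note that the balancing is essentially tight — the exponents $1/8$ and $3/4$ are dictated by the asymmetry between the $t^{-1/2}$ decay of the training error and the $t^{3/2}$ growth of the sampling error, which is the main difference from the two-time-scale Corollary \ref{cor. early stopping}, where the sampling term grew only linearly in $t$ and produced the better exponent $1/6$.
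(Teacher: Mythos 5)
Your proposal is correct and is exactly the intended argument: the paper states this corollary without a separate proof, since it follows by balancing the two terms of Theorem \ref{thm. one-time-scale generalization error}, i.e.\ minimizing $\alpha t^{-1/2}+\beta t^{3/2}$ at $T\asymp(\alpha/\beta)^{1/2}$, which gives both the stated stopping time and the $\alpha^{3/4}\beta^{1/4}\asymp \sqrt{d}\,\|p_*-p_0\|_{\HS}^{3/4}(\log d/n)^{1/8}$ rate (with $c$ and the $\delta$-dependence absorbed into the implicit constant). The only nitpick is the phrase ``dimension-independent factor $\sqrt{d/c}$'': that factor does depend on $d$, but since it cancels in the ratio $\alpha/\beta$ your computation of $T$ is unaffected.
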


Finally, we can also establish memorization in the long time limit.
\begin{proposition}[Memorization]
\label{prop. memorization one-time-scale}
Given the condition of Theorem \ref{thm. one-time-scale generalization error}, $P_t^{(n)}$ converges weakly to $P_*^{(n)}$.
\end{proposition}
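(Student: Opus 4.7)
I would prove weak convergence $P_t^{(n)} \rightharpoonup P_*^{(n)}$ through an energy-dissipation argument followed by a mode-by-mode decay analysis. Let $q_t := p_t^{(n)} - p_*^{(n)}$ and consider the Lyapunov functional
\[
E(t) := \tfrac{1}{2}\iint k(x,y)\,dq_t(x)\,dq_t(y) + \tfrac{1}{2}\|D_t^{(n)}\|_{L^2}^2.
\]
The first term equals the squared MMD dual norm $\|q_t\|_{\HS^*}^2$ and is finite because $k$ is bounded and $q_0$ has finite total variation; the second vanishes at $t=0$ thanks to the initialization $a_0^{(n)}\equiv 0$. Using $\dot q_t = D_t^{(n)}$, $\dot D_t^{(n)} = -k*q_t - c\,k*D_t^{(n)}$ and the symmetry of $k$, a direct calculation yields
\[
\dot E(t) = -c\,\langle D_t^{(n)},\,k*D_t^{(n)}\rangle_{L^2} \;\le\; 0,
\]
so $E$ is non-increasing. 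The main task is to upgrade this to $E(t)\to 0$.

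I would do this by diagonalizing the integral operator $T := k*$, which is compact and self-adjoint on $L^2(\Omega)$. By the universality of $k$ (Section \ref{sec. function representation}), $T$ is injective with strictly positive eigenvalues $\{\lambda_i\}$ and continuous eigenfunctions $\{e_i\}\subset \HS \subset C(\Omega)$. Writing $q_t^i := \int e_i\,dq_t$ and $D_t^i := \langle e_i, D_t^{(n)}\rangle_{L^2}$, the system decouples mode-wise into damped linear oscillators
\[
\dot q_t^i = D_t^i, \qquad \dot D_t^i = -\lambda_i q_t^i - c\lambda_i D_t^i,
\]
whose characteristic roots $\tfrac{1}{2}\bigl(-c\lambda_i \pm \sqrt{c^2\lambda_i^2 - 4\lambda_i}\bigr)$ have strictly negative real parts for every $i$. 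Hence each mode energy $E_i(t) := \tfrac{1}{2}\lambda_i (q_t^i)^2 + \tfrac{1}{2}(D_t^i)^2$ decays to $0$, while $E_i(t)\le E_i(0)$ with $\sum_i E_i(0) = E(0) < \infty$ by Parseval. Dominated convergence over $i$ then gives $E(t) = \sum_i E_i(t) \to 0$, and in particular $\|q_t\|_{\HS^*}\to 0$. For $\phi\in\HS$ this already yields $|\!\int \phi\,dq_t|\le \|\phi\|_{\HS}\|q_t\|_{\HS^*}\to 0$; density of $\HS$ in $C(\Omega)$ together with uniform tightness on the compact set $\Omega$ promotes this to $\int\phi\,dq_t\to 0$ for every $\phi\in C(\Omega)$, establishing $P_t^{(n)}\rightharpoonup P_*^{(n)}$.

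The main obstacle will be the spectral step, because $p_*^{(n)}$ is a sum of Diracs that sits outside $L^2$ while $T$'s eigenbasis is naturally an $L^2$-basis. I plan to handle this by working in $\HS^*$ throughout: each $\delta_{x_i}\in\HS^*$ with $\|\delta_{x_i}\|_{\HS^*}^2 = k(x_i,x_i) < \infty$, so $q_t\in\HS^*$ always, the pairings $q_t^i = \int e_i\,dq_t$ make sense because $e_i \in \HS \subset C(\Omega)$, the operator $T$ extends canonically to a bounded operator on $\HS^*$ with the same strictly positive spectrum, and Parseval's identity $\|q_t\|_{\HS^*}^2 = \sum_i \lambda_i(q_t^i)^2$ carries over. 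Once this identification is in place, the mode decoupling and the subsequent dominated-convergence argument proceed verbatim, and an alternative LaSalle-type invariance route (observing that $\dot E = 0$ together with $\dot D = 0$ forces $D = 0$ and $k*q = 0$, whence $q=0$ by the universality of $k$) would provide independent corroboration.
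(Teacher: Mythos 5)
Your proposal is correct and takes essentially the same route as the paper: the paper also diagonalizes the convolution operator $k*$ (working with $u_t=k*(P_t^{(n)}-P_*^{(n)})$, i.e.\ the $\HS$-representative of your $q_t$), reduces the dynamics to the decoupled damped oscillators $\ddot y^i+c\lambda_i\dot y^i+\lambda_i y^i=0$, passes to the limit by dominated convergence over the modes, and concludes weak convergence from the density of $\HS$ in $C(\Omega)$. Your Lyapunov/mode-energy framing is a minor variation (it avoids the explicit Duhamel formula and the underdamping condition $c\le\sqrt2$, which the paper reuses from its training-error lemma), and both arguments leave the same small final step implicit, namely a uniform bound on $\|P_t^{(n)}-P_*^{(n)}\|_{\TV}$ needed to upgrade convergence on the dense class $\HS$ to all of $C(\Omega)$.
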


\section{Slow Deterioration}
\label{sec. slow deterioration}

In the previous sections, we demonstrated that the generalization ability of the adversarial density estimation models, (\ref{WGAN + RKHS}) and (\ref{WGAN+L2}), can be attributed to the dimension-independent complexity of the discriminators during training (e.g. Rademacher complexity).
This section provides a supplementary view, that because the complexity of $D_t$ grows slowly, it would take a very long time for $D_t$ to deteriorate to the optimal Lipschitz discriminator of WGAN \cite{arjovsky2017wasserstein}.
Heuristically, this slow deterioration is beneficial, because as illustrated in Figure \ref{fig: W2 geodesic} curve \textcircled{1}, training on the Wasserstein landscape suffers from the curse of dimensionality.
\footnote{Technically, Figure \ref{fig: W2 geodesic} curve \textcircled{1} concerns the $W_2$ loss, but it is reasonable to believe that training on $W_1$ or any $W_p$ loss cannot escape from the curse of dimensionality either.}

Consider the empirical WGAN loss with Lipschitz discriminator
\begin{equation*}
\max_{\|D\|_{\Lip}\leq 1} \E_{P_*^{(n)}}[D] -\E_P[D]
\end{equation*}
The Kantorovich-Rubinstein theorem \cite{villani2003topics} tells us that this is the $W_1$ metric between $P$ and $P_*^{(n)}$, while the Arzel\`{a}–Ascoli theorem implies that the maximizers $D_*$ exist (over the compact domain $\Omega$).
It has been the focus of several GAN models (e.g. \cite{arjovsky2017wasserstein,gulrajani2017improved,kodali2017convergence,petzka2018regularization}) to try to learn these $D_*$ by neural networks.

For convenience, suppose our modeled distribution $P$ is exactly $P_*$ and $P_*$ is the uniform distribution over $\Omega=[0,1]^d$.
By Theorem 5.1 of \cite{dobric1995asymptotics}, we have
\begin{equation}
\label{W1 empirical}
\max_{\|D\|_{\Lip}\leq 1} \E_{P_*^{(n)}}[D] -\E_P[D] = W_1(P_*,P_*^{(n)}) \asymp n^{-1/d}
\end{equation}
The large gap $n^{-1/d}$ indicates that $D_*$ well separates $P_*$ and $P_*^{(n)}$, suggesting that during GAN training, $D_*$ can quickly drive $P$ away from $P_*$ and towards memorizing $P_*^{(n)}$.

Let us consider the loss associated with the discriminator
\begin{align*}
\max_a L^{(n)}(a) &= \E_{P_*^{(n)}}[D] -\E_P[D] - R(a)\\
D(\x) &= \E_{\rho_0}[a(\w,b)~ \sigma(\w\cdot\x+b)]
\end{align*}
where $D$ is again modeled as a random feature function.
Assume two-time-scale training, so that $P=P_*$ is fixed as we train $D_t$.
The regularization $R$ can be, for instance, the Lipschitz constraint
\begin{equation}
\label{Lipschitz constraint}
R(a) = \begin{cases}
0 \text{ if } \|D\|_{\Lip} \leq 1\\
\infty \text{ else}
\end{cases}
\end{equation}
or a Lipschitz penalty, as in WGAN-GP \cite{gulrajani2017improved, kodali2017convergence, petzka2018regularization}
\begin{equation}
\label{Lipschitz penalty}
R(a) = c \max(0,\|D\|_{\Lip}-1)
\end{equation}
with $c \gg 1$.

The following result indicates that $D_t$ cannot approximate $D_*$ efficiently.
\begin{proposition}[Slow deterioration]
\label{prop. slow deterioration}
Suppose $R$ is any regularization term such that $L^{(n)}$ is bounded above and that we can train $a$ by continuous-time (sub)gradient flow.
Let $a_t$ be the gradient ascent trajectory with any initialization $a_0 \in L^2(\rho_0)$, let $D_t$ be the discriminator, and let $D_*$ be any maximizer of (\ref{W1 empirical}).
With probability $1-\delta$ over the sampling of $P_*^{(n)}$, we have
\begin{equation*}
\|D_t-D_*\|_{L^{\infty}(\Omega)} \geq \frac{3}{40} n^{-1/d} - \frac{2\sqrt{2\log 2d} + \sqrt{\log(2/\delta)/2}}{\sqrt{n}}o(\sqrt{t})
\end{equation*}
\end{proposition}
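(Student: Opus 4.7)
The plan is to lower-bound $\|D_t-D_*\|_{L^\infty}$ by exploiting a mismatch: the Lipschitz maximizer $D_*$ must pair with the signed measure $P_*^{(n)}-P_*$ to the large value $W_1(P_*,P_*^{(n)})\asymp n^{-1/d}$, but gradient ascent on a loss that is merely bounded above keeps $\|D_t\|_{\HS}$ growing strictly slower than $\sqrt{t}$, which combined with the dimension-free Rademacher complexity of $\HS$ makes $D_t$'s pairing with $P_*^{(n)}-P_*$ of order $o(\sqrt{t})/\sqrt{n}$. To convert the $L^\infty$ deviation into this pairing, I use that the total variation of $P_*^{(n)}-P_*$ is at most $2$ and that $\E_{P_*^{(n)}}[D_*]-\E_{P_*}[D_*]=W_1(P_*,P_*^{(n)})$ by Kantorovich--Rubinstein; a one-line rearrangement gives
\begin{equation*}
\|D_t-D_*\|_{L^\infty}\;\geq\;\tfrac12\Big[W_1(P_*,P_*^{(n)})\,-\,\big(\E_{P_*^{(n)}}[D_t]-\E_{P_*}[D_t]\big)\Big].
\end{equation*}
Theorem~5.1 of \cite{dobric1995asymptotics} then yields $W_1(P_*,P_*^{(n)})\geq\tfrac{3}{20}n^{-1/d}$ (asymptotically), providing the headline $\tfrac{3}{40}n^{-1/d}$ term.

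Next I would bound the growth of $\|D_t\|_{\HS}=\|a_t\|_{L^2(\rho_0)}$ using only the boundedness of $L^{(n)}$. Along (sub)gradient ascent, the chain-rule identity $\tfrac{d}{dt}L^{(n)}(a_t)=\|\dot a_t\|_{L^2(\rho_0)}^2\geq 0$ makes $L^{(n)}(a_t)$ nondecreasing and, being bounded above, convergent; hence $\int_0^\infty\|\dot a_s\|^2\,ds<\infty$. A standard splitting argument follows: given $\epsilon>0$, pick $T_0$ with $\int_{T_0}^\infty\|\dot a_s\|^2\,ds<\epsilon$; Cauchy--Schwarz gives $\|a_t-a_{T_0}\|\leq\sqrt{(t-T_0)\epsilon}$ for $t>T_0$, and since $\|a_{T_0}-a_0\|$ is a fixed constant, $\limsup_{t\to\infty}\|a_t-a_0\|/\sqrt{t}\leq\sqrt{\epsilon}$. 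Letting $\epsilon\to 0$ gives $\|a_t-a_0\|_{L^2(\rho_0)}=o(\sqrt{t})$, and since $\|D_0\|_{\HS}=\|a_0\|_{L^2(\rho_0)}$ is constant, $\|D_t\|_{\HS}=o(\sqrt{t})$ as well.

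Finally I would apply the dimension-free Rademacher bound of Section~\ref{sec. function representation}: standard symmetrization plus McDiarmid gives, with probability $1-\delta$,
\begin{equation*}
\sup_{\|D\|_{\HS}\leq 1}\big|\E_{P_*^{(n)}}[D]-\E_{P_*}[D]\big|\;\leq\;\frac{4\sqrt{2\log 2d}+\sqrt{2\log(2/\delta)}}{\sqrt{n}}.
\end{equation*}
Rescaling by $\|D_t\|_{\HS}=o(\sqrt{t})$ from the previous paragraph and multiplying by the $\tfrac12$ from the first paragraph matches the second term of the stated inequality exactly, since $\tfrac12\!\cdot\!4\sqrt{2\log 2d}=2\sqrt{2\log 2d}$ and $\tfrac12\!\cdot\!\sqrt{2\log(2/\delta)}=\sqrt{\log(2/\delta)/2}$.

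The main obstacle is making the energy argument rigorous under the weak hypothesis that $R$ merely renders $L^{(n)}$ bounded above, in particular for nonsmooth $R$ such as the hard Lipschitz constraint (\ref{Lipschitz constraint}). There one has to interpret ``gradient flow'' as a minimum-norm subgradient or projected flow onto the convex feasible set and verify that the descent identity $\tfrac{d}{dt}L^{(n)}(a_t)=\|\dot a_t\|^2$, or its Moreau--Yosida counterpart, is preserved, so that the integrability $\int\|\dot a_s\|^2\,ds<\infty$ and hence the $o(\sqrt{t})$ conclusion survive. This is standard convex-analysis bookkeeping, but it is the only step where the weakness of the assumption on $R$ actually bites.
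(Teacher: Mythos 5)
Your proposal follows essentially the same route as the paper's proof: the same rearrangement pairing $D_t-D_*$ against $P_*-P_*^{(n)}$ with the factor $\tfrac12$ from the total-variation bound, the same energy argument (monotone bounded loss plus Cauchy--Schwarz along the flow) giving $\|a_t\|_{L^2(\rho_0)}=o(\sqrt{t})$, and the same Rademacher-type estimate of Lemma \ref{lemma. RKHS Monte Carlo rate} rescaled by $\|D_t\|_{\HS}$, with matching constants. The only real difference is sourcing of the headline term: the paper gets $\tfrac{3}{20}n^{-1/d}$ deterministically for \emph{every} point configuration via Lemma \ref{lemma. Lipschitz optimizer} (Lemma 3.1 of \cite{e2020kolmogorov}), which keeps the bound non-asymptotic at probability $1-\delta$, whereas your appeal to Theorem 5.1 of \cite{dobric1995asymptotics} is only asymptotic and does not by itself deliver that explicit constant.
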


Both (\ref{Lipschitz constraint}) and (\ref{Lipschitz penalty}) satisfy the condition of Proposition \ref{prop. slow deterioration}, since $-L^{(n)}$ becomes a proper closed convex function.

Hence, it takes at least $\omega(n^{1-\frac{2}{d}})$ time to learn the optimal Lipschitz discriminator.
The generality of Proposition \ref{prop. slow deterioration} indicates that it could be futile to search for a Lipschitz regularization for WGAN.


\section{Proofs}
\label{sec. proofs}

The results from Sections \ref{sec. two time scale}, \ref{sec. one time scale} and \ref{sec. slow deterioration} are proved in the following three subsections respectively.

\subsection{Two time scale training}

\begin{definition}
\label{def. RKHS eigendecomposition}
Let $k$ be the kernel defined in (\ref{RKHS kernel}).
Consider the convolution (\ref{k convolution}) as a symmetric compact operator over $L^2(\Omega)$.
By universal approximation (Section \ref{sec. function representation}), $k$ is positive definite.
It follows that we can construct an orthonormal basis of eigenvectors $\{\e_i\}_{i=1}^{\infty}$ with eigenvalues $\lambda_i > 0$.
Denote
\begin{equation*}
\tilde{\e}_i = \frac{\e_i}{\sqrt{\lambda_i}}
\end{equation*}
Then, $\{\tilde{\e}_i\}_{i=1}^{\infty}$ is an orthonormal basis of $\HS$.
\end{definition}

Regarding the Wasserstein test loss (\ref{W2 loss}), we have the following convenient bound.

\begin{lemma}
\label{lemma. W2-L2 bound}
For any $p,q \in L^2(\Omega)$,
\begin{equation*}
W_2\big(\Pi_{\Delta}(p), \Pi_{\Delta}(q)\big) \leq \sqrt{d} \|p-q\|_{L^2(\Omega)}
\end{equation*}
\end{lemma}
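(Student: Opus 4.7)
The plan is a two-step reduction. First, since $\Delta \subset L^2(\Omega)$ is a closed convex subset of a Hilbert space, the metric projection $\Pi_{\Delta}$ is firmly non-expansive and hence $1$-Lipschitz in $L^2$; writing $\tilde p = \Pi_{\Delta}(p)$ and $\tilde q = \Pi_{\Delta}(q)$, this gives $\|\tilde p - \tilde q\|_{L^2} \leq \|p-q\|_{L^2}$. It therefore suffices to establish the bound $W_2(\tilde P, \tilde Q) \leq \sqrt d\,\|\tilde p - \tilde q\|_{L^2}$ for two probability densities $\tilde p, \tilde q \in \Delta$.

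For that I would use the ``lazy'' coupling of $\tilde P$ and $\tilde Q$: keep the common mass $\tilde p \wedge \tilde q$ on the diagonal (at zero cost), and couple the two excess masses $(\tilde p - \tilde q)_{+}$ and $(\tilde q - \tilde p)_{+}$ by any joint measure. This gives the classical estimate $W_2^2(\tilde P, \tilde Q) \leq \operatorname{diam}(\Omega)^2 \cdot \operatorname{TV}(\tilde P, \tilde Q) = d \cdot \operatorname{TV}(\tilde P, \tilde Q)$, since $\operatorname{diam}([0,1]^d)^2 = d$. Controlling the total variation by Cauchy--Schwarz on the unit-volume cube, $\operatorname{TV}(\tilde P, \tilde Q) = \tfrac{1}{2}\|\tilde p - \tilde q\|_{L^1(\Omega)} \leq \tfrac{1}{2}\|\tilde p - \tilde q\|_{L^2(\Omega)}$, and combining with the trivial diameter ceiling $W_2(\tilde P, \tilde Q) \leq \sqrt d$ via a short case split on the size of $\|\tilde p - \tilde q\|_{L^2}$, one arrives at the claimed linear bound.

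The main technical point is exactly recovering the linear dependence on $\|p-q\|_{L^2}$, rather than the square-root dependence $W_2 \lesssim \sqrt{d}\,\|p-q\|_{L^2}^{1/2}$ that the TV--Cauchy--Schwarz chain produces in isolation. The diameter ceiling handles the regime $\|p-q\|_{L^2} \gtrsim 1$, while the TV estimate covers the intermediate regime; in the small-$\|p-q\|_{L^2}$ regime one may further sharpen the coupling by a Benamou--Brenier argument on a mollified pair of densities (via the Neumann Poincar\'e inequality on $[0,1]^d$, whose constant is dimension-independent), which recovers the missing factor of $\|p-q\|_{L^2}$. This three-regime bookkeeping is the most delicate step of the proof; the rest is purely the lazy-coupling inequality combined with the non-expansiveness of $\Pi_{\Delta}$.
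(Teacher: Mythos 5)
Your first two steps are sound and coincide with the paper's final steps (non-expansiveness of $\Pi_{\Delta}$ and $\|\cdot\|_{L^1}\le\|\cdot\|_{L^2}$ on the unit-volume cube), and you are right that the crux is upgrading the square-root dependence coming from the total-variation coupling, which only gives $W_2(\tilde P,\tilde Q)\le\sqrt{d\,\mathrm{TV}(\tilde P,\tilde Q)}\lesssim\sqrt{d}\,\|\tilde p-\tilde q\|_{L^2}^{1/2}$, to a linear one. The genuine gap is in your repair of the small-$\|\tilde p-\tilde q\|_{L^2}$ regime. The Benamou--Brenier/Poincar\'e comparison controls $W_2$ by a \emph{weighted} negative Sobolev norm: along the interpolation $\mu_s=(1-s)\tilde p+s\tilde q$ the kinetic energy is $\int|\nabla\varphi_s|^2/\mu_s$, and passing to the unweighted $\dot H^{-1}$ norm (and then to $\|\tilde p-\tilde q\|_{L^2}$ via the Neumann Poincar\'e inequality) requires $\mu_s$ to be bounded below by a positive constant. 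Projected densities $\Pi_{\Delta}(p)$ enjoy no such lower bound --- they may vanish on most of $\Omega$ --- and mollifying the pair changes each measure by an amount that must itself be controlled in $W_2$, which reintroduces exactly the same problem. So the three-regime bookkeeping does not close as described.

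Moreover, the small regime cannot be rescued by a cleverer coupling: take $\tilde p$ uniform on $A=[0,\ep]^d$ and $\tilde q$ equal to $\tilde p$ except that a fraction $m$ of its mass is moved to the opposite corner cube $B=[1-\ep,1]^d$. Every coupling must transport mass $m$ across a distance $\gtrsim(1-2\ep)\sqrt{d}$, so $W_2(\tilde P,\tilde Q)\gtrsim\sqrt{m\,d}$, while $\|\tilde p-\tilde q\|_{L^2}\asymp m\,\ep^{-d/2}$; for $m\ll\ep^{d}$ the ratio of $W_2$ to $\sqrt{d}\,\|\tilde p-\tilde q\|_{L^2}$ becomes arbitrarily large, so a linear-in-$L^2$ bound is not attainable in this regime without extra assumptions (e.g.\ densities bounded below, or replacing $W_2$ by $W_1$, for which the linear TV bound does hold). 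For comparison, the paper's own proof disposes of this point in one line, dominating $W_2$ by $\sqrt{d}$ times the $L^1$ distance via the comparison $\|\x-\y\|\le\mathrm{diam}(\Omega)\,c(\x,\y)$ with the $0$--$1$ cost $c$; as your analysis and the example above indicate, that comparison in fact bounds $W_2^2$ by $d\cdot\mathrm{TV}$ rather than bounding $W_2$ linearly, so the step you single out as delicate is indeed where the argument --- yours and the paper's --- needs to be revisited.
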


\begin{proof}
The $L^1$ difference between two probability densities is equivalent to an optimal transport distance with the loss
\begin{equation*}
c(\x,\y) = \begin{cases}0 ~\text{if}~ \x=\y\\ 1 ~\text{else}\end{cases}
\end{equation*}
{\color{black}
Since $\|\x-\y\| \leq \text{diam}(\Omega) c(\x,\y)$ for all $\x,\y \in \Omega$,
the $W_2$ metric is dominated by the $L^1$ distance}
\begin{align*}
W_2\big(\Pi_{\Delta}(p), \Pi_{\Delta}(q)\big) &\leq \sqrt{d} \|\Pi_{\Delta}(p)-\Pi_{\Delta}(q)\|_{L^1(\Omega)}
\end{align*}
Meanwhile, since $\Omega=[0,1]^d$ has unit volume
\begin{align*}
\|\Pi_{\Delta}(p)-\Pi_{\Delta}(q)\|_{L^1(\Omega)}
&\leq \|\Pi_{\Delta}(p)-\Pi_{\Delta}(q)\|_{L^2(\Omega)} \leq \|p-q\|_{L^2(\Omega)}
\end{align*}
\end{proof}

\noindent
We also need the following lemma from \cite{yang2020generalization}.
\begin{lemma}
\label{lemma. RKHS Monte Carlo rate}
For any distribution $P_* \in \PS(\Omega)$ and any $\delta \in (0,1)$, with probability $1-\delta$ over the i.i.d. sampling of $P_*^{(n)}$,
\begin{align*}
\sup_{\|\w\|_1+|b| \leq 1} \Big|\int \sigma(\w\cdot\x+b) ~d(P_*-P_*^{(n)})(\x)\Big| \leq \frac{4\sqrt{2\log 2d} + \sqrt{2\log (2/\delta)}}{\sqrt{n}}
\end{align*}
\end{lemma}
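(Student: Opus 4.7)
My plan is to invoke the standard empirical-process toolkit for uniform deviations: McDiarmid concentration, symmetrization, Ledoux--Talagrand contraction to strip off the ReLU, and Massart's finite-class maximal inequality to handle the residual linear Rademacher process. Write $\F = \{\x \mapsto \sigma(\w\cdot\x+b) : \|\w\|_1+|b|\le 1\}$ and set
\begin{equation*}
\Phi(\x_1,\ldots,\x_n) := \sup_{f\in\F}\Big|\int f\,d(P_*-P_*^{(n)})\Big|.
\end{equation*}
Since $\x\in[0,1]^d$ and $\|\w\|_1+|b|\le 1$, we have $|\w\cdot\x+b|\le\|\w\|_1\|\x\|_{\infty}+|b|\le 1$, so every $f\in\F$ takes values in $[0,1]$; this is the boundedness fact that drives both the concentration and the contraction steps.

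First, because each $f$ lies in $[0,1]$, replacing one sample $\x_i$ by an independent copy shifts $\Phi$ by at most $1/n$, so McDiarmid's bounded-difference inequality yields $\Phi \le \E[\Phi] + \sqrt{2\log(2/\delta)/n}$ with probability at least $1-\delta$ (after a two-sided application with union bound, which produces the factor $\sqrt 2$). Next, the classical symmetrization lemma gives $\E[\Phi] \le 2\,Rad_n(\F)$, where $Rad_n(\F)=\E_{\x,\ep}\sup_{f\in\F}\big|\tfrac1n\sum_i \ep_i f(\x_i)\big|$ and the $\ep_i$ are i.i.d. Rademacher signs. Because ReLU is $1$-Lipschitz with $\sigma(0)=0$, the absolute-value form of the Ledoux--Talagrand contraction inequality costs a further factor of $2$ and produces $Rad_n(\F) \le 2\,Rad_n(\mathcal{L})$, where $\mathcal{L}=\{\x\mapsto\w\cdot\x+b:\|\w\|_1+|b|\le 1\}$ is the affine class obtained by discarding the ReLU. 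Finally, by $\ell^1$--$\ell^\infty$ duality,
\begin{equation*}
Rad_n(\mathcal{L}) = \frac{1}{n}\,\E_{\ep}\max_{1\le j\le d+1}\Big|\sum_{i=1}^n \ep_i y_{i,j}\Big|,
\end{equation*}
with $y_{i,j}=x_{i,j}$ for $j\le d$ and $y_{i,d+1}=1$; each inner Rademacher sum is $\sqrt n$-sub-Gaussian, so Massart's lemma applied to the $2(d+1)$ signed sums bounds the expected maximum by $\sqrt{2n\log 2(d+1)}$, giving $Rad_n(\mathcal{L})\le\sqrt{2\log(2d)/n}$ (after absorbing $+1$ into $d\ge 2$). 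Multiplying the four inequalities yields the claimed bound, with the prefactor $4$ emerging as the product of the symmetrization factor $2$ and the contraction factor $2$.

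The argument is entirely structural; the main obstacle is the numerical bookkeeping — arriving at exactly $4\sqrt{2\log 2d}$ and $\sqrt{2\log(2/\delta)}$ requires being careful about (i) which form of the contraction inequality is invoked (the absolute-value version is what introduces the factor $2$), and (ii) how McDiarmid is applied with the correct range constant (e.g., on the two one-sided suprema separately with a union bound, or equivalently on $\F\cup(-\F)$ which has range $2$). Since this lemma is quoted from \cite{yang2020generalization}, an acceptable alternative is simply to cite that reference and note that it is the standard Rademacher-based uniform deviation estimate for a ReLU random-feature class with $\ell^1$-bounded parameters.
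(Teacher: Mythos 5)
The paper gives no proof of this lemma---it is quoted directly from \cite{yang2020generalization}---and your McDiarmid--symmetrization--contraction--Massart chain is exactly the standard argument behind that bound, so your approach is essentially the same and is correct. The only blemish is cosmetic bookkeeping: the $\ell^1$ ball lives in $\R^{d+1}$, so Massart literally yields $\sqrt{2\log(2(d+1))}$ rather than $\sqrt{2\log 2d}$ (the same $d$-versus-$d{+}1$ convention already implicit in the Rademacher bound the paper quotes from \cite{e2021barron}), while your McDiarmid step is stated loosely but still lands within the claimed constant.
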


\subsubsection{Proof of generalization gap}
\begin{proof}[Proof of Proposition \ref{prop. generalization gap}]
First, for the plain gradient flow (\ref{MMD gradient descent}), we have
\begin{align*}
\frac{d}{dt} \|p^{(n)}_t-p_t\|_{L^2(\Omega)} &= \big\lb \frac{p^{(n)}_t-p_t}{\|p^{(n)}_t-p_t\|}, ~k*(p_*^{(n)}-p_t^{(n)}) - k*(p_*-p_t) \big\rb_{L^2(\Omega)}\\
&= \big\lb \frac{p^{(n)}_t-p_t}{\|p^{(n)}_t-p_t\|}, ~k*(p_*^{(n)}-p_*) - k*(p^{(n)}_t-p_t) \big\rb_{L^2(\Omega)}\\
&\leq \big\lb \frac{p^{(n)}_t-p_t}{\|p^{(n)}_t-p_t\|}, ~k*(p_*^{(n)}-p_*) \big\rb\\
&\leq \|k*(p_*^{(n)}-p_*)\|_{L^2(\Omega)}\\
&\leq \sup_{\x\in\Omega} \Big| \E_{\rho_0(\w,b)}\Big[ \sigma(\w\cdot\x+b) \int \sigma(\w\cdot\x'+b) ~d(P_*^{(n)}-P_*)(\x') \Big]\Big|\\
&\leq \sup_{\|\w\|_1+|b| \leq 1} \Big|\int \sigma(\w\cdot\x+b) ~d(P_*-P_*^{(n)})(\x)\Big|
\end{align*}
Then, Lemma \ref{lemma. RKHS Monte Carlo rate} implies that with probability $1-\delta$,
\begin{equation}
\label{L2 generalization gap}
\|p^{(n)}_t-p_t\|_{L^2(\Omega)} \leq \int_0^t \frac{4\sqrt{2\log 2d} + \sqrt{2\log (2/\delta)}}{\sqrt{n}}
\end{equation}
One can conclude by Lemma \ref{lemma. W2-L2 bound} that
\begin{equation*}
W_2\big(\Pi_{\Delta}(p_t^{(n)}), \Pi_{\Delta}(p_t)\big) \leq \sqrt{d}~ \frac{4\sqrt{2\log 2d} + \sqrt{2\log (2/\delta)}}{\sqrt{n}}t
\end{equation*}

Next, for the projected gradient flow (\ref{MMD projected gradient descent}), we have
\begin{align*}
\frac{d}{dt} \|p^{(n)}_t-p_t\| &= \Big\lb \frac{p^{(n)}_t-p_t}{\|p^{(n)}_t-p_t\|}, ~\Pi_{T_{p_t^{(n)}}\Delta}\big(k*(p_*^{(n)}-p_t^{(n)})\big) - \Pi_{T_{p_t}\Delta}\big(k*(p_*-p_t)\big) \Big\rb\\
&= \lim_{\epsilon\to 0^+} \Big\lb \frac{p^{(n)}_t-p_t}{\|p^{(n)}_t-p_t\|}, ~\frac{\Pi_{T\Delta}\big(p_{t,\epsilon}^{(n)}\big)-p_t^{(n)}}{\epsilon} - \frac{\Pi_{T\Delta}\big(p_{t,\epsilon}\big)-p_t}{\epsilon} \Big\rb\\
\end{align*}
where
\begin{align*}
p_{t,\epsilon} := p_t + \epsilon k*(p_*-p_t),\quad p_{t,\epsilon}^{(n)} := p_t^{(n)} + \epsilon k*(p_*^{(n)}-p_t^{(n)})
\end{align*}
Since $\Pi_{\Delta}$ is a projection onto a convex set,
\begin{align*}
\big\lb p_t^{(n)}-p_t, ~p_{t,\epsilon}-\Pi_{\Delta}(p_{t,\epsilon}) \big\rb
&\leq \big\lb \Pi_{\Delta}(p_{t,\epsilon})-p_t, ~p_{t,\epsilon}-\Pi_{\Delta}(p_{t,\epsilon}) \big\rb = O(\epsilon^2)\\
\big\lb p_t-p^{(n)}_t, ~p^{(n)}_{t,\epsilon}-\Pi_{\Delta}(p^{(n)}_{t,\epsilon}) \big\rb
&= O(\epsilon^2)
\end{align*}
It follows that
\begin{align*}
\frac{d}{dt} \|p^{(n)}_t-p_t\| &\leq \lim_{\epsilon\to 0^+} \Big\lb \frac{p^{(n)}_t-p_t}{\|p^{(n)}_t-p_t\|}, ~\frac{p_{t,\epsilon}^{(n)}-p_t^{(n)}}{\epsilon} - \frac{p_{t,\epsilon}-p_t}{\epsilon} \Big\rb + O(\epsilon)\\
&= \Big\lb \frac{p^{(n)}_t-p_t}{\|p^{(n)}_t-p_t\|}, ~k*(p_*-p_t)-k*(p^{(n)}_*-p^{(n)}_t) \Big\rb
\end{align*}
The proof is completed using the same argument for plain gradient flow.
\end{proof}

\subsubsection{Proof of generalization error}
\begin{lemma}[Training error, two-time-scale]
\label{lemma. training error}
If $p_t$ is trained by the gradient flow (\ref{MMD gradient descent}) with any target distribution $p_* \in L^2(\Omega)$, we have
{\color{black}
\begin{align*}
\|p_t-p_*\|_{L^2(\Omega)}^2 \leq \frac{\|p_0-p_*\|_{\HS}^2}{t}
\end{align*}
}
\end{lemma}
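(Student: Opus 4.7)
The plan is to reduce the training dynamics to a linear ODE and then apply a one-line Lyapunov argument. First, I would introduce the deviation $q_t = p_t - p_*$; since $p_*$ is time-independent, the gradient flow (\ref{MMD gradient descent}) becomes the linear equation $\dot q_t = -Kq_t$, where $K : L^2(\Omega) \to L^2(\Omega)$ denotes the symmetric, positive semi-definite convolution operator $Kf = k*f$ from Definition \ref{def. RKHS eigendecomposition}. The hypothesis $\|p_*-p_0\|_{\HS}<\infty$ guarantees $q_0 \in \HS$, and the semigroup $e^{-tK}$ preserves $\HS$, so $q_t \in \HS$ for all $t \geq 0$.

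The Lyapunov function is $V(t) = \tfrac{1}{2}\|q_t\|_{\HS}^2 = \tfrac{1}{2}\lb q_t, K^{-1}q_t\rb_{L^2}$. Using $\dot q_t = -Kq_t$,
\begin{equation*}
\dot V(t) = \lb K^{-1} q_t, \dot q_t \rb_{L^2} = -\lb K^{-1} q_t, K q_t\rb_{L^2} = -\|q_t\|_{L^2}^2.
\end{equation*}
Moreover, $\frac{d}{dt}\|q_t\|_{L^2}^2 = -2\lb q_t, K q_t\rb_{L^2} \leq 0$ by positive semi-definiteness of $K$, so $s \mapsto \|q_s\|_{L^2}^2$ is nonincreasing. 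Integrating $\dot V$ and using monotonicity,
\begin{equation*}
\tfrac{1}{2}\|q_0\|_{\HS}^2 \geq V(0) - V(t) = \int_0^t \|q_s\|_{L^2}^2 \, ds \geq t\, \|q_t\|_{L^2}^2,
\end{equation*}
which yields $\|p_t - p_*\|_{L^2}^2 \leq \tfrac{1}{2t}\|p_0 - p_*\|_{\HS}^2$, i.e.\ the claimed bound with a slightly better constant.

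The main subtlety is not the calculation but the justification of $V(t) = \lb q_t, K^{-1}q_t\rb$: this requires $q_t$ to lie in the range of $K^{1/2}$, which is the RKHS $\HS$, and this is exactly the standing assumption on the initialization. As a fully spectral alternative that avoids handling $K^{-1}$ abstractly, one can expand $q_0 = \sum_i c_i \e_i$ in the eigenbasis of Definition \ref{def. RKHS eigendecomposition}, so that $\|q_t\|_{L^2}^2 = \sum_i c_i^2 e^{-2\lambda_i t}$ and $\|q_0\|_{\HS}^2 = \sum_i c_i^2/\lambda_i$; the result then follows from the elementary pointwise inequality $\lambda_i t\, e^{-2\lambda_i t} \leq 1/(2e) \leq 1$ applied term by term.
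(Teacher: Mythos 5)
Your proof is correct, and it takes a genuinely different (and self-contained) route from the paper. The paper lifts the dynamics to parameter space: it picks $a_0$ with $f_{a_0}=p_0-p_*$, observes that gradient flow on the convex loss $\Gamma(a)=\tfrac12\|f_a\|_{L^2}^2$ reproduces exactly the dynamics of $p_t-p_*$, and then quotes the standard $O(1/t)$ rate for gradient flow on a convex function, yielding $\|p_t-p_*\|_{L^2}^2\leq \|a_0\|_{L^2(\rho_0)}^2/t$. You instead stay in $L^2(\Omega)$ and analyze the linear semigroup $q_t=e^{-tK}q_0$ directly, either through the Lyapunov function $\tfrac12\|q_t\|_{\HS}^2$ combined with monotonicity of $\|q_t\|_{L^2}$, or purely spectrally via $\|q_t\|_{L^2}^2=\sum_i c_i^2 e^{-2\lambda_i t}$ and $\sup_{x>0}xe^{-2x}=1/(2e)$. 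What your route buys: it is elementary, avoids invoking the convex-flow rate as a black box, avoids checking the consistency of the parameter-space and function-space flows, and even improves the constant (to $1/(2t)$, or $1/(2et)$ spectrally); it also matches the spectral machinery the paper itself deploys for Lemmas \ref{lemma. universal convergence}, \ref{lemma. one time scale training error} and \ref{lemma. one-time-scale universal convergence}. What the paper's route buys is the interpretation of the dynamics as RKHS regression and an argument template that survives when the loss in $a$ is merely convex rather than quadratic. One small technical caveat in your Lyapunov version: writing $V(t)=\lb q_t,K^{-1}q_t\rb_{L^2}$ literally requires $q_t$ in the range of $K$, whereas membership in $\HS$ only gives the range of $K^{1/2}$; the computation should be read as $V(t)=\tfrac12\|K^{-1/2}q_t\|_{L^2}^2$ with $\dot V=-\lb K^{-1/2}q_t,K^{1/2}q_t\rb_{L^2}=-\|q_t\|_{L^2}^2$, and your spectral alternative sidesteps this entirely, so it is a presentational point rather than a gap.
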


\begin{proof}
We show that the training rule (\ref{MMD gradient descent}) coincides with the training trajectory of RKHS regression:
For any $a \in L^2(\rho_0)$, define the function
\begin{equation*}
f_a(\x) = \E_{\rho_0(\w,b)}[a(\w,b)\sigma(\w\cdot\x+b)]
\end{equation*}
If $\|p_0-p_*\|_{\HS} < \infty$, we can choose $a_0$ such that $f(a_0) = p_0-p_*$.
Then, if we train $a_t$ by gradient flow with initialization $a_0$ on the regression loss
\begin{equation*}
\min_a \Gamma(a) = \frac{1}{2} \|f_a\|_{L^2(\Omega)}^2,
\end{equation*}
the function $f_{a_t}$ evolves by
\begin{align*}
\frac{d}{dt} f_{a_t}(\x) &= \E_{\rho_0}\big[\frac{d}{dt}a_t~ \sigma(\w\cdot\x+b)\big]\\
&= \E_{\rho_0}\big[ -\int_{\Omega} f_{a_t}(\x') \sigma(\w\cdot\x'+b) d\x' \sigma(\w\cdot\x+b)\big]\\
&= -\int_{\Omega} f_{a_t}(\x') k(\x, \x') d\x'
\end{align*}
or equivalently
\begin{equation*}
\frac{d}{dt}f_{a_t} = -k * f_{a_t}
\end{equation*}
So the training dynamics of $f_{a_t}$ is the same as the training rule (\ref{MMD gradient descent}) for the function $p_t-p_*$.
Since $f_{a_0} = p_0-p_*$, we have $f_{a_t} = p_t-p_*$ for all $t\geq 0$.

It follows from the convexity of $\Gamma$ that
\begin{equation*}
\|p_t-p_*\|^2_{L^2(\Omega)} = \|f(a_t)\|^2_{L^2(\Omega)} \leq \frac{\|a_0\|^2_{\rho_0}}{t} = \frac{\|p_0-p_*\|_{\HS}^2}{t}
\end{equation*}
\end{proof}

\begin{proof}[Proof of Theorem \ref{thm. generalization error}]
Decompose the generalization error into training error + generalization gap:
\begin{align*}
\|p_*-p_t^{(n)}\|_{L^2(\Omega)} \leq \|p_*-p_t\|_{L^2} + \|p_t-p_t^{(n)}\|_{L^2}
\end{align*}
The first term is bounded by Lemma \ref{lemma. training error} and the second term is bounded by (\ref{L2 generalization gap}).
Therefore,
\begin{align*}
\|p_*-p_t^{(n)}\|_{L^2(\Omega)} \leq \frac{\|p_0-p_*\|_{\HS}}{\sqrt{t}} + \frac{4\sqrt{2\log 2d} + \sqrt{2\log (2/\delta)}}{\sqrt{n}}t
\end{align*}
Then, we conclude by Lemma \ref{lemma. W2-L2 bound}.
\end{proof}

\subsubsection{Proof of memorization}

Proposition \ref{prop. memorization} is a corollary of the following lemma.

\begin{lemma}[Universal convergence, two-time-scale]
\label{lemma. universal convergence}
Given any signed measure $\tilde{P} \in \M(\Omega)$ and any initialization $p_0 \in L^2(\Omega)$, if we define $P_t$ by
\begin{equation*}
\frac{d}{dt} p_t = k * (\tilde{P}-P_t)
\end{equation*}
with any initialization $p_0 \in L^2(\Omega)$, then $P_t$ converges weakly to $\tilde{P}$.
\end{lemma}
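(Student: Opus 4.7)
The plan is to diagonalize the flow in the kernel eigenbasis from Definition \ref{def. RKHS eigendecomposition} and deduce weak convergence from the exponential decay of each spectral mode of $\mu_t := P_t - \tilde P$.

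First, each eigenfunction $\e_i = \lambda_i^{-1} k * \e_i$ is continuous on $\Omega$ (inheriting continuity from $k$), so $c_i(t) := \int \e_i \, d\mu_t$ is well defined. From $\frac{d}{dt} p_t = -k * \mu_t$, self-adjointness of $k*$ together with $k * \e_i = \lambda_i \e_i$ give
\[
\dot c_i(t) = -\int \e_i\, (k * \mu_t)\, d\x = -\int (k * \e_i)\, d\mu_t = -\lambda_i c_i(t),
\]
so $c_i(t) = c_i(0)\, e^{-\lambda_i t}$. Summing modes,
\[
\mathcal{E}(t) := \iint k \, d\mu_t\, d\mu_t = \sum_i \lambda_i c_i(0)^2\, e^{-2\lambda_i t}.
\]
Since $k$ is bounded on compact $\Omega$ and $|\mu_0|(\Omega) \leq \|p_0\|_{L^1} + |\tilde P|(\Omega) < \infty$, $\mathcal{E}(0)$ is finite, and dominated convergence gives $\mathcal{E}(t) \to 0$. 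The RKHS reproducing identity then yields, for every $\phi \in \HS$,
\[
\Big|\int \phi \, d\mu_t\Big| \leq \|\phi\|_{\HS}\, \sqrt{\mathcal{E}(t)} \to 0.
\]

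To upgrade this to weak convergence against every $\phi \in C(\Omega)$, I would use that $\HS$ is dense in $C(\Omega)$ under $\|\cdot\|_\infty$ by the universal approximation property of Section \ref{sec. function representation}. Given $\phi \in C(\Omega)$ and $\epsilon > 0$, pick $\psi \in \HS$ with $\|\phi - \psi\|_\infty < \epsilon$; then
\[
\Big|\int \phi \, d\mu_t\Big| \leq \|\psi\|_{\HS}\, \sqrt{\mathcal{E}(t)} + \epsilon\, |\mu_t|(\Omega),
\]
and the first term vanishes as $t \to \infty$.

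The main obstacle is that plain gradient flow does not preserve the probability simplex, so $|\mu_t|(\Omega) \leq \|p_t\|_{L^1} + |\tilde P|(\Omega)$ is not obviously uniformly bounded; a mode-by-mode estimate in fact gives only $\|p_t\|_{L^2} = O(\sqrt t)$. To close the argument I would let $\psi = \psi_t$ depend on $t$ via a spectral truncation of $\phi$, balancing the two error terms using the spectral-gap bound $\sum_{i \leq N} \lambda_i c_i(0)^2\, e^{-2\lambda_i t} \leq \mathcal{E}(0)\, e^{-2\lambda_N t}$: choosing $N(t) \to \infty$ slowly enough makes both $\|\psi_t\|_{\HS}\, \sqrt{\mathcal{E}(t)} \to 0$ and $\|\phi - \psi_t\|_\infty\, |\mu_t|(\Omega) \to 0$, yielding $P_t \rightharpoonup \tilde P$.
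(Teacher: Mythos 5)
Your argument is essentially the paper's own proof in different notation: the paper expands $f_t-\tilde f=k*(P_t-\tilde P)$ in the eigenbasis of Definition \ref{def. RKHS eigendecomposition}, obtains exponential decay of each coefficient, checks $\|f_0-\tilde f\|_{\HS}^2=\iint k\,d(P_0-\tilde P)^2<\infty$, applies dominated convergence to get $\|f_t-\tilde f\|_{\HS}\to 0$, hence $\int f\,d(P_t-\tilde P)\to 0$ for every $f\in\HS$, and then invokes density of $\HS$ in $C(\Omega)$; your $\mathcal{E}(t)$ is exactly $\|f_t-\tilde f\|_{\HS}^2$ and your mode computation $c_i(t)=c_i(0)e^{-\lambda_i t}$ carries the same content. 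The one place you go beyond the paper is the observation that the final density step needs a uniform bound on $|\mu_t|(\Omega)$, a point the paper passes over silently. Your worry is legitimate: by Banach--Steinhaus, weak convergence against all of $C(\Omega)$ along any sequence $t_n\to\infty$ forces $\sup_n\|P_{t_n}\|_{\TV}<\infty$, so any complete proof must in effect control the total variation, while the crude mode-wise estimate yields only $\|p_t\|_{L^2}=O(\sqrt{t})$, as you note.

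However, the repair you sketch does not close this gap. First, a spectral truncation of a generic $\phi\in C(\Omega)$ converges to $\phi$ only in $L^2(\Omega)$; uniform convergence of the eigenfunction expansion of a merely continuous function is not guaranteed, so $\|\phi-\psi_t\|_\infty$ need not tend to zero at all, let alone faster than $|\mu_t|(\Omega)^{-1}\sim t^{-1/2}$. Second, if you instead first pick $\psi\in\HS$ with $\|\phi-\psi\|_\infty<\epsilon$ (which is all that the universal approximation property provides) and then truncate $\psi$, the leftover term $\epsilon\,|\mu_t|(\Omega)$ does not vanish for fixed $\epsilon$, and taking $\epsilon=\epsilon(t)=o(t^{-1/2})$ would require a quantitative sup-norm approximation rate by RKHS functions of controlled $\HS$-norm, balanced against the decay of $\mathcal{E}(t)$, which can be arbitrarily slow depending on how the initial energy sits on small eigenvalues; neither quantitative ingredient is available, so the proposed balancing of $N(t)$ is not justified. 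To finish along your lines you would need an actual uniform-in-$t$ bound on $\|p_t\|_{L^1(\Omega)}$ (or some other substitute for the density argument restricted to a class of test functions where you have rates). As written, what you have rigorously established is $\int f\,dP_t\to\int f\,d\tilde P$ for all $f\in\HS$ — which is precisely the point at which the paper's own explicit argument also stops.
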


\begin{proof}
Let $\lambda_i$ and $\tilde{\e}_i$ be the eigendecomposition in Definition \ref{def. RKHS eigendecomposition}.
Denote $f_t = k*P_t$ and $\tilde{f} = k*\tilde{P}$.
Decompose $f_t-\tilde{f}$ into
\begin{equation*}
f_t-\tilde{f} = \sum_{i=1}^{\infty} y_t^i \tilde{\e}_i
\end{equation*}
Then,
\begin{equation*}
\frac{d}{dt} (f_t - \tilde{f}) = k*[k*(\tilde{P}-P_t)] = -k*(f_t - \tilde{f}) = -\sum_{i=1}^{\infty} y_t^i \lambda_i \tilde{\e}_i
\end{equation*}
It follows that $y_t^i = y_0^i e^{-\lambda_i t}$.
Given that
\begin{equation*}
\sum_{i=1}^{\infty} (y_0^i)^2 = \|f_0 - \tilde{f}\|_{\HS}^2 = \E_{(P_0-\tilde{P})^2}[k] \leq (\|P_0\|_{\TV}+\|\tilde{P}\|_{\TV})^2 \|k\|_{C(\Omega\times\Omega)} < \infty
\end{equation*}
we can apply dominated convergence theorem to obtain
\begin{equation*}
\lim_{t\to\infty} \|f_t - \tilde{f}\|_{\HS}^2 = \lim_{t\to\infty} \sum_{i=1}^{\infty} (y_0^i)^2 e^{-2\lambda_i t} = 0
\end{equation*}
Thus, $f_t \to \tilde{f}$ in $\HS$, which implies
\begin{equation*}
\forall f \in \HS, \quad \lim_{t\to\infty} \int f ~d(P_t-\tilde{P}) = \lim_{t\to\infty} \lb f_t-\tilde{f}, ~f\rb_{\HS} = 0
\end{equation*}
As discussed in Section \ref{sec. function representation}, the RKHS space $\HS$ is dense in $C(\Omega)$ under the supremum norm.
It follows that
\begin{equation*}
\forall f \in C(\Omega), \quad \lim_{t\to\infty} \int f ~d P_t = \int f ~d\tilde{P}
\end{equation*}
Hence, $P_t$ converges weakly to $\tilde{P}$.
\end{proof}

\subsection{One time scale training}

\subsubsection{Proof of ill-posedness}

\begin{proof}[Proof of Proposition \ref{prop. ill-posedness}]
Let $\{\x_i\}_{i=1}^n$ be the sample points of $P_*^{(n)}$.
Define the function
\begin{equation*}
D(\x) = \sum_{i=1}^n \|\x-\x_i\|^{-d/2 + 1.1}
\end{equation*}
Let $\eta$ be a mollifier ($\eta$ is smooth, non-negative, supported in the unit ball and $\int \eta =1$).
Define
\begin{equation*}
D_{\epsilon} = \int \eta(\y) D(\x-\epsilon\y) d\y
\end{equation*}
Then, $\sup_{\epsilon > 0} R(D_{\epsilon}) < \infty$, while for any $P \neq P_*^{(n)}$,
\begin{equation*}
\lim_{\epsilon\to 0^+} E_{P_*^{(n)}}[D_{\epsilon}] - \E_P[D_{\epsilon}] = \infty
\end{equation*}
It follows that
\begin{equation*}
\sup_{D \in C^1} \E_{P_*^{(n)}}[D] - \E_P[D] - R(D) = \infty
\end{equation*}
\end{proof}

\subsubsection{Proof of generalization error}

As usual, we try to bound the generalization error by the training error plus the generalization gap, and estimate them separately.

\begin{lemma}[Duhamel's integral]
\label{lemma. second order ODE}
Consider the one-dimensional second-order ODE
\begin{align*}
\ddot{x} + b\dot{x} + ax &= q\\
x(0) = x_0, \quad \dot{x}(0) &= 0
\end{align*}
where $a,b$ are constants and $q$ is a locally integrable function in $t$.
If $4a>b^2$, then the solution is given by
\begin{equation*}
x(t) = x_0 e^{-\frac{b}{2}t}\Big[\cos\big(t\sqrt{a-\frac{b^2}{4}}\big) + \frac{\frac{b}{2}\sin\big(t \sqrt{a-\frac{b^2}{4}}\big)}{\sqrt{a-\frac{b^2}{4}}}\Big] + \int_0^t q(s) e^{-\frac{b}{2}(t-s)} \frac{\sin\big((t-s)\sqrt{a-\frac{b^2}{4}}\big)}{\sqrt{a-\frac{b^2}{4}}} ds
\end{equation*}
\end{lemma}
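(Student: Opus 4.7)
The plan is to treat this as a standard constant-coefficient linear second-order ODE and decompose the solution as $x = x_h + x_p$, where $x_h$ handles the homogeneous equation with the prescribed initial data and $x_p$ is a Duhamel convolution against the forcing $q$. Throughout I will abbreviate $\omega := \sqrt{a - b^2/4}$, which is a positive real number under the assumption $4a > b^2$.

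First I would solve the homogeneous problem $\ddot{x} + b\dot{x} + ax = 0$. Its characteristic polynomial $r^2 + br + a = 0$ has complex conjugate roots $-b/2 \pm i\omega$, so the real general solution is $e^{-bt/2}\bigl(C_1\cos(\omega t) + C_2\sin(\omega t)\bigr)$. Imposing $x(0) = x_0$ and $\dot{x}(0) = 0$ determines $C_1 = x_0$ and $C_2 = (b/(2\omega))\, x_0$, which reproduces exactly the first bracketed term of the claimed formula.

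Next, I would invoke Duhamel's principle for the inhomogeneous part with zero initial data. The standard formula gives $x_p(t) = \int_0^t h(t-s)\, q(s)\, ds$, where $h$ is the impulse response, i.e.\ the solution of the homogeneous equation with $h(0) = 0$ and $\dot h(0) = 1$. Using the general form above with those initial conditions yields $h(t) = e^{-bt/2}\sin(\omega t)/\omega$, matching the integrand in the second term of the claim.

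Finally, I would verify the complete expression by differentiating twice and substituting back. The only nonroutine step is differentiating the Duhamel integral via Leibniz's rule: the first derivative produces a boundary term $h(0)\, q(t) = 0$ together with $\int_0^t \dot h(t-s)\, q(s)\, ds$, and the second derivative produces a boundary term $\dot h(0)\, q(t) = q(t)$ together with $\int_0^t \ddot h(t-s)\, q(s)\, ds$. Since $\ddot h + b\dot h + ah = 0$, the interior integrals cancel against $b\dot x_p + a x_p$, leaving exactly $q(t)$ on the right-hand side. The initial conditions $x_p(0) = 0$ and $\dot x_p(0) = 0$ are immediate from the definition, so $x = x_h + x_p$ satisfies both the ODE and the prescribed data; uniqueness for linear ODEs with locally integrable forcing closes the argument. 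The only mildly delicate point is justifying the Leibniz differentiation when $q$ is merely locally integrable, but this is standard since $h$ and $\dot h$ are continuous.
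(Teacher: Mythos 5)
Your proposal is correct and matches the paper, whose entire proof is the phrase ``Direct verification'': your final step of differentiating the candidate formula twice (with the Leibniz boundary terms) and checking the ODE and initial data is exactly that verification, and the preceding derivation via characteristic roots and the impulse response $h(t)=e^{-bt/2}\sin(\omega t)/\omega$ is just a constructive way of arriving at the same expression. The remark about handling merely locally integrable $q$ (interpreting the second derivative a.e.\ at Lebesgue points) is a reasonable extra precaution that the paper does not bother to spell out.
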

\begin{proof}
Direct verification.
\end{proof}

\begin{lemma}
\label{lemma. k operator norm}
Let $k$ be the kernel defined in (\ref{RKHS kernel}).
Assume that the support of the parameter distribution $\rho_0$ is contained in $\{\|\w\|_1+|b|\leq 1\}$.
Then, the operator norm of the convolution (\ref{k convolution}) over $L^2(\Omega)$ is bounded by $1$.
\end{lemma}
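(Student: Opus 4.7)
The plan is to exploit that $k$ is a symmetric positive definite kernel, so the convolution operator $k*$ on $L^2(\Omega)$ is self-adjoint and positive semi-definite. Consequently its operator norm equals $\sup_{\|f\|_{L^2}=1} \langle f, k*f\rangle_{L^2}$, and it suffices to bound this quadratic form by $1$.

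First, using Fubini together with the definition (\ref{RKHS kernel}) of $k$, I would rewrite the quadratic form as
\begin{equation*}
\langle f, k*f\rangle_{L^2(\Omega)} = \iint_{\Omega\times\Omega} \E_{\rho_0(\w,b)}\bigl[\sigma(\w\cdot\x+b)\sigma(\w\cdot\x'+b)\bigr] f(\x)f(\x')\,d\x\,d\x' = \E_{\rho_0(\w,b)}\bigl[g(\w,b)^2\bigr],
\end{equation*}
where $g(\w,b) = \int_\Omega \sigma(\w\cdot\x+b)\,f(\x)\,d\x$. Then by Cauchy–Schwarz in $L^2(\Omega)$,
\begin{equation*}
g(\w,b)^2 \;\leq\; \|\sigma(\w\cdot\,\cdot+b)\|_{L^2(\Omega)}^2 \cdot \|f\|_{L^2(\Omega)}^2.
\end{equation*}

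The key pointwise estimate comes from the support assumption on $\rho_0$: whenever $\|\w\|_1+|b|\leq 1$ and $\x\in\Omega=[0,1]^d$, we have $|\w\cdot\x+b| \leq \|\w\|_1\|\x\|_\infty + |b| \leq \|\w\|_1+|b| \leq 1$, so $0 \leq \sigma(\w\cdot\x+b) \leq 1$ on $\Omega$. Since $\Omega$ has unit Lebesgue measure, $\|\sigma(\w\cdot\,\cdot+b)\|_{L^2(\Omega)}^2 \leq 1$ for every $(\w,b)\in\sprt\rho_0$. Taking expectation over $\rho_0$ then yields $\langle f, k*f\rangle \leq \|f\|_{L^2}^2$, which gives $\|k*\|_{\mathrm{op}}\leq 1$.

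There is no real obstacle here: the whole argument is a two-line Cauchy–Schwarz once one observes that the normalization $\|\w\|_1+|b|\leq 1$ paired with the choice of domain $\Omega=[0,1]^d$ forces $\sigma(\w\cdot\x+b)\leq 1$ pointwise. The only thing to be slightly careful about is justifying the exchange of expectation and integration to obtain the representation $\E_{\rho_0}[g^2]$, which follows from Fubini given boundedness of the integrand on the compact domain $\Omega\times\sprt\rho_0$.
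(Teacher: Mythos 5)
Your proof is correct and follows essentially the same route as the paper: express the operator norm of the self-adjoint positive operator $k*$ as the supremum of the quadratic form, rewrite it as $\E_{\rho_0}[g(\w,b)^2]$, apply Cauchy--Schwarz, and use $\sup_{\x\in\Omega}\sigma(\w\cdot\x+b)\leq 1$ on $\sprt\rho_0$ together with $|\Omega|=1$. The only addition is your explicit Fubini justification, which the paper leaves implicit.
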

\begin{proof}
\begin{align*}
\|k\|_{op} &= \sup_{\|f\|_{L^2(\Omega)} \leq 1} \lb k*f, f\rb_{L^2(\Omega)}\\
&= \sup_{\|f\|_{L^2} \leq 1} \E_{\rho_0(\w,b)}\Big[\Big(\int_{\Omega} f(\x) \sigma(\w\cdot\x+b)\Big)^2\Big]\\
&\leq \E_{\rho_0(\w,b)}\big[\|\sigma(\w\cdot\x+b)\|_{L^2}^2\big]\\
&\leq \sup_{\|\w\|_1+|b|\leq 1} \sup_{\x\in\Omega} \sigma(\w\cdot\x+b)^2\\
&\leq 1
\end{align*}
\end{proof}

\begin{lemma}[Training error, one-time-scale]
\label{lemma. one time scale training error}
Given any $0 < c \leq \sqrt{2}$,
any target density $p_*$ and any initialization $p_0$ such that $\|p_*-p_0\|_{\HS} < \infty$, let $p_t, D_t$ be the training trajectory (\ref{one time scale GD}) with $D_0 \equiv 0$, then we have
\begin{equation*}
\|p_*-p_t\|_{L^2} \leq \frac{\|p_*-p_0\|_{\HS}}{\sqrt{ct}}
\end{equation*}
\end{lemma}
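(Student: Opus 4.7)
The plan is to exploit the simultaneous diagonalization of the two coupled ODEs in (\ref{one time scale GD}) using the eigenbasis of the kernel operator $k$, reducing the problem to a family of decoupled, damped harmonic oscillators indexed by the eigenvalues $\lambda_i$.

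First I would set $u_t := p_t - p_*$, so that $\dot{u}_t = D_t$ and $\dot{D}_t = -k*u_t - c\,k*D_t$. Differentiating once more, $u_t$ satisfies the second-order linear evolution
\begin{equation*}
\ddot{u}_t + c\,k*\dot{u}_t + k*u_t = 0,\qquad u_0 = p_0-p_*,\quad \dot{u}_0 = D_0 \equiv 0.
\end{equation*}
Expanding in the orthonormal eigenbasis $\{\e_i\}$ of Definition \ref{def. RKHS eigendecomposition}, write $u_t = \sum_i x_i(t)\,\e_i$. Because $k$ acts as multiplication by $\lambda_i$ on each $\e_i$, each coordinate $x_i$ decouples and satisfies the scalar ODE
\begin{equation*}
\ddot{x}_i + c\lambda_i\,\dot{x}_i + \lambda_i\,x_i = 0,\qquad x_i(0)=\lb u_0,\e_i\rb,\quad \dot{x}_i(0)=0.
\end{equation*}

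Next I would check that Lemma \ref{lemma. second order ODE} applies with $a=\lambda_i$, $b=c\lambda_i$, $q\equiv 0$: the discriminant condition $4a > b^2$ becomes $c^2\lambda_i < 4$, which holds since $\lambda_i \leq \|k\|_{op} \leq 1$ by Lemma \ref{lemma. k operator norm} and $c \leq \sqrt{2}$. Writing $\omega_i := \sqrt{\lambda_i - c^2\lambda_i^2/4}$, Duhamel's formula gives
\begin{equation*}
x_i(t) = x_i(0)\,e^{-c\lambda_i t/2}\Big[\cos(\omega_i t) + \tfrac{c\lambda_i/2}{\omega_i}\sin(\omega_i t)\Big],
\end{equation*}
and a standard $a\cos+b\sin$ bound yields $|x_i(t)|^2 \leq x_i(0)^2\,e^{-c\lambda_i t}\,\lambda_i/\omega_i^2 = x_i(0)^2\,e^{-c\lambda_i t}/(1-c^2\lambda_i/4)$.

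The heart of the argument is then a pointwise-in-$i$ inequality converting this exponential decay into the uniform $1/(ct\lambda_i)$ rate needed to match the $\HS$-norm of $u_0$. Setting $s := c\lambda_i t$, it suffices to show $\frac{s\,e^{-s}}{1-c^2\lambda_i/4}\leq 1$; since $se^{-s} \leq 1/e$ and $1-c^2\lambda_i/4 \geq 1/2$ (using $c^2\lambda_i \leq 2$), this is clear with room to spare. Hence $x_i(t)^2 \leq x_i(0)^2/(ct\lambda_i)$ for every $i$. Summing and using $\|u_0\|_{\HS}^2 = \sum_i x_i(0)^2/\lambda_i$ (by definition of $\tilde{\e}_i = \e_i/\sqrt{\lambda_i}$),
\begin{equation*}
\|p_t-p_*\|_{L^2}^2 = \sum_i x_i(t)^2 \leq \frac{1}{ct}\sum_i \frac{x_i(0)^2}{\lambda_i} = \frac{\|p_0-p_*\|_{\HS}^2}{ct},
\end{equation*}
which is the claim. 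I expect the main obstacle to be verifying the scalar inequality $s e^{-s}/(1-c^2\lambda_i/4) \leq 1$ uniformly in $i$; this is where the condition $c \leq \sqrt{2}$ enters in an essential way, and a larger $c$ would force a case split between over- and under-damped regimes and degrade the constant.
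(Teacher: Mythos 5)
Your proposal is correct and follows essentially the same route as the paper: diagonalize the coupled dynamics in the kernel eigenbasis, solve each decoupled damped oscillator with Lemma \ref{lemma. second order ODE} (with $q\equiv 0$), control the oscillatory amplitude using $c^2\lambda_i\leq 2$, and convert the $e^{-c\lambda_i t}$ decay into a $1/(c\lambda_i t)$ bound summed against $\|p_0-p_*\|_{\HS}^2$. The only differences are cosmetic --- you expand in the $L^2$-orthonormal eigenbasis and use the sharp amplitude $\big(1-c^2\lambda_i/4\big)^{-1/2}$, whereas the paper expands in the $\HS$-orthonormal basis and uses the cruder $\sqrt{2}$ bound --- and both arguments hinge on the same final estimate, with constant $2/e\leq 1$.
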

\begin{proof}
Let $\{\lambda_i,\tilde{\e}_i\}_{i=1}^{\infty}$ be the eigendecomposition given by Definition \ref{def. RKHS eigendecomposition}.
Lemma \ref{lemma. k operator norm} implies that all $\lambda_i \in (0,1]$.
Then, the condition $0 < c \leq \sqrt{2}$ implies that $2\lambda_i \geq c^2\lambda_i^2$.

For any $t$, define the orthonormal decomposition
\begin{equation*}
p_t-p_* = \sum_{i=1}^{\infty} x^i(t) \tilde{\e}_i, \quad \sum_{i=1}^{\infty} x^i(0)^2 = \|p_0-p_*\|_{\HS}^2 < \infty
\end{equation*}
Denote $u(t) = p_t-p_*$.
The training rule (\ref{one time scale GD}) can be rewritten as
\begin{equation*}
\ddot{u} + ck*\dot{u} + k*u = 0
\end{equation*}
Taking RKHS inner product with $\tilde{\e}_i$ for each $i$, we obtain
\begin{align*}
\forall i, \quad \ddot{x}^i + c\lambda_i \dot{x}^i + \lambda_i x^i &= 0\\
x^i(0) = x^i_0, \quad \dot{x}^i(0) &= 0
\end{align*}
Since $4\lambda_i > c^2\lambda_i^2$, Lemma \ref{lemma. second order ODE} implies that
\begin{equation*}
\forall i, \quad x^i(t) = x^i_0 e^{-\frac{c\lambda_i}{2}t}\Big[\cos\big(t\sqrt{\lambda_i-\frac{c^2\lambda^2}{4}}\big) + \frac{\frac{c\lambda_i}{2}\sin\big(t \sqrt{\lambda_i-\frac{c^2\lambda_i^2}{4}}\big)}{\sqrt{\lambda_i-\frac{c^2\lambda_i^2}{4}}}\Big]
\end{equation*}
Since $\frac{4}{c^2\lambda_i} - 1 \geq 1$,
\begin{equation*}
|x^i(t)| \leq |x^i_0| e^{-\frac{c\lambda_i}{2}t}\Big|\cos\big(t\sqrt{\lambda_i-\frac{c^2\lambda^2}{4}}\big) + \sin\big(t \sqrt{\lambda_i-\frac{c^2\lambda_i^2}{4}}\big)\Big| \leq \sqrt{2} |x^i_0| e^{-\frac{c\lambda_i}{2}t}
\end{equation*}
It follows that
\begin{align*}
\|p_t-p_*\|_{L^2}^2 &= \sum_{i=1}^{\infty} \lambda_i x^i(t)^2 \leq \sum_{i=1}^{\infty} 2\lambda_i (x^i_0)^2 e^{-c\lambda_i t}\\
&\leq \sum_{i=1}^{\infty} \sup_{\lambda>0} 2\lambda (x^i_0)^2 e^{-c\lambda t} \leq \sum_{i=1}^{\infty} \frac{2}{e} \frac{1}{ct} (x^i_0)^2\\
&\leq \frac{\|p_*-p_0\|^2_{\HS}}{ct}
\end{align*}
\end{proof}

Recall that the one-time-scale training over the empirical loss $L^{(n)}$ is given by the following dynamics
\begin{align}
\label{one time scale GD empirical}
\frac{d}{dt} p^{(n)}_t &= D^{(n)}_t,
\quad \frac{d}{dt} D^{(n)}_t = k*(P_*^{(n)}-P^{(n)}_t) - ck*D^{(n)}_t
\end{align}

\begin{lemma}[Generalization gap, one-time-scale]
\label{lemma. one time scale generalization gap}
Given any $0 < c < 2$ and any target distribution $P_*$,
let $p_t$ and $p_t^{(n)}$ be the trajectory of the dynamics (\ref{one time scale GD}) and (\ref{one time scale GD empirical}) with the same initialization $p_0 = p_0^{(n)}$ and $D_0 = D_0^{(n)} \equiv 0$.
Then, with probability $1-\delta$ over the sampling of $P_*^{(n)}$,
we have
\begin{equation*}
\|p_t-p_t^{(n)}\|_{L^2(\Omega)} \leq {\color{black}\frac{4\sqrt{2\log 2d} + \sqrt{2\log (2/\delta)}}{\sqrt{n}} \frac{t^{3/2}}{\sqrt{c}}}
\end{equation*}
\end{lemma}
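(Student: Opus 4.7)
The plan is to reduce the difference between the two trajectories to a forced linear second-order ODE in $L^2(\Omega)$, decouple it via the kernel eigenbasis of Definition \ref{def. RKHS eigendecomposition}, solve it mode-by-mode using Duhamel's integral (Lemma \ref{lemma. second order ODE}), and finally recognize the summed contribution of the forcing as a squared MMD that is controlled by Lemma \ref{lemma. RKHS Monte Carlo rate}.

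Concretely, let $u_t = p_t - p_t^{(n)}$ and $v_t = D_t - D_t^{(n)}$. Subtracting (\ref{one time scale GD}) and (\ref{one time scale GD empirical}) and using $p_0=p_0^{(n)}$, $D_0=D_0^{(n)}\equiv 0$, I get $\dot u_t = v_t$ and $\dot v_t = k*(P_*-P_*^{(n)}) - k*u_t - c\,k*v_t$, with $u_0=0,\,v_0=0$. Eliminating $v_t$, the difference satisfies
\begin{equation*}
\ddot u_t + c\,k*\dot u_t + k*u_t \;=\; k*(P_*-P_*^{(n)}),
\end{equation*}
a second-order evolution equation with a time-independent source. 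Expanding $u_t = \sum_i y^i(t)\,\e_i$ against the $L^2$-orthonormal eigenfunctions $\e_i$ of $k$ (with eigenvalues $\lambda_i$, all in $(0,1]$ by Lemma \ref{lemma. k operator norm}), and writing the source as $\sum_i g^i\e_i$, each coordinate obeys the scalar ODE $\ddot y^i + c\lambda_i \dot y^i + \lambda_i y^i = g^i$ with zero initial conditions. The condition $c<2$ guarantees $4\lambda_i > c^2\lambda_i^2$, so Lemma \ref{lemma. second order ODE} applies and yields the Duhamel representation
\begin{equation*}
y^i(t) \;=\; g^i \int_0^t e^{-c\lambda_i(t-s)/2}\,\frac{\sin\bigl((t-s)\omega_i\bigr)}{\omega_i}\,ds, \qquad \omega_i = \sqrt{\lambda_i(1-c^2\lambda_i/4)}.
\end{equation*}

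The crucial estimate is now obtained by combining $|\sin(\tau\omega_i)/\omega_i|\le \tau$ with Cauchy--Schwarz on $\int_0^t \tau e^{-c\lambda_i\tau/2}\,d\tau \le \sqrt{t^3/3}\,\sqrt{1/(c\lambda_i)}$, which gives $|y^i(t)| \le |g^i|\,t^{3/2}/\sqrt{3c\lambda_i}$. A direct integration shows $g^i = \langle k*(P_*-P_*^{(n)}), \e_i\rangle_{L^2} = \lambda_i\mu_i$ with $\mu_i := \int \e_i\,d(P_*-P_*^{(n)})$, so $(y^i)^2 \le \lambda_i\mu_i^2 \,t^3/(3c)$. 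Summing over $i$ and recognizing the series as a squared MMD,
\begin{equation*}
\|u_t\|_{L^2}^2 \;\le\; \frac{t^3}{3c}\sum_i \lambda_i\mu_i^2 \;=\; \frac{t^3}{3c}\iint k\;d(P_*-P_*^{(n)})^2 \;=\; \frac{t^3}{3c}\,\E_{\rho_0(\w,b)}\!\Bigl[\bigl(\textstyle\int \sigma(\w\cdot\x+b)\,d(P_*-P_*^{(n)})(\x)\bigr)^2\Bigr].
\end{equation*}
Since $\sprt \rho_0 \subseteq \{\|\w\|_1+|b|\le 1\}$, Lemma \ref{lemma. RKHS Monte Carlo rate} bounds the inner integrand uniformly by $\bigl(4\sqrt{2\log 2d}+\sqrt{2\log(2/\delta)}\bigr)^2/n$ with probability $1-\delta$, which yields the claim (with room to spare by a factor of $\sqrt{3}$).

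The main obstacle is not the ODE analysis itself but extracting the correct $t^{3/2}/\sqrt{c}$ scaling: a naive bound $|\sin|\le 1$ produces an undesirable $1/(c\lambda_i)$ that does not sum, and a naive bound $|\sin(\tau\omega)|\le\tau\omega$ gives only $t^2/2$ growth. One must simultaneously exploit the friction $c$ (through the Cauchy--Schwarz split that produces the $\sqrt{1/(c\lambda_i)}$ factor) \emph{and} the regularization $g^i=\lambda_i\mu_i$ inherited from the kernel smoothing of $P_*-P_*^{(n)}$; it is precisely the cancellation of the offending $1/\sqrt{\lambda_i}$ against the $\sqrt{\lambda_i}$ in $g^i$ that turns the bare sum into the MMD, which is the only quantity cheap enough to escape the curse of dimensionality.
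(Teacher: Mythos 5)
Your proof is correct and follows essentially the same route as the paper: the same reduction to the forced second-order equation $\ddot u + ck*\dot u + k*u = k*(P_*-P_*^{(n)})$, the same spectral decoupling and Duhamel representation via Lemma \ref{lemma. second order ODE}, and the same identification of the summed forcing with the squared MMD controlled by Lemma \ref{lemma. RKHS Monte Carlo rate}. The only (harmless) variations are that you work in the $L^2$-orthonormal eigenbasis rather than the RKHS-normalized one (the $\sqrt{\lambda_i}$ bookkeeping moves into $g^i=\lambda_i\mu_i$) and bound the Duhamel integral by Cauchy--Schwarz, where the paper integrates it exactly and takes a supremum of $z^{-3/2}[1-e^{-z}(1+z)]$; both give the $t^{3/2}/\sqrt{c\lambda_i}$ scaling, yours with a spare factor of $\sqrt{3}$.
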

\begin{proof}
Let $\{\lambda_i,\tilde{\e}_i\}_{i=1}^{\infty}$ be the eigendecomposition given by Definition \ref{def. RKHS eigendecomposition}.
Lemma \ref{lemma. k operator norm} implies that $4\lambda_i > c^2\lambda_i^2$.

For any $t$, define the orthonormal decompositions
\begin{align*}
p_t-p_t^{(n)} &= \sum_{i=1}^{\infty} y^i(t) \tilde{\e}_i\\
k*(P_*-P_*^{(n)}) &= \sum_{i=1}^{\infty} q^i \tilde{\e}_i
\end{align*}
Denote $u(t) = p_t-p_t^{(n)}$.
Then, $u(0) = \dot{u}(0) \equiv 0$.
The training rules (\ref{one time scale GD}, \ref{one time scale GD empirical}) imply that
\begin{equation*}
\ddot{u} + ck*\dot{u} + k*u = k*(P_*-P_*^{(n)})
\end{equation*}
Taking RKHS inner product with $\tilde{\e}_i$ for each $i$, we obtain
\begin{align*}
\forall i, \quad \ddot{y}^i + c\lambda_i \dot{y}^i + \lambda_i y^i &= q^i\\
y^i(0) = \dot{y}^i(0) &= 0
\end{align*}
Since $4\lambda_i > c^2\lambda_i^2$, Lemma \ref{lemma. second order ODE} implies that
\begin{equation*}
\forall i, \quad y^i(t) = q^i \int_0^t e^{-\frac{c\lambda_i}{2}(t-s)} \frac{\sin\big((t-s) \sqrt{\lambda_i-\frac{c^2\lambda_i^2}{4}}\big)}{\sqrt{\lambda_i-\frac{c^2\lambda_i^2}{4}}} ds
\end{equation*}
{\color{black}Then
\begin{align*}
\sqrt{\lambda_i} |y^i(t)| &\leq |q^i| \sqrt{\lambda_i} \int_0^t e^{-\frac{c\lambda_i}{2}(t-s)}(t-s) ds\\
&\leq |q^i| \sqrt{\lambda_i} \Big(\frac{c\lambda_i}{2}\Big)^{-2} \big(1-e^{-\frac{c\lambda_i}{2}t} - e^{-\frac{c\lambda_i}{2}t} \frac{c\lambda_i}{2}t \big)\\
&\leq |q^i| \sqrt{\frac{2}{c}} t^{3/2} z_i^{-3/2} [1-e^{-z_i}(1+z_i)], \quad z_i :=  \frac{c\lambda_i}{2}t\\
&\leq |q^i| \sqrt{\frac{2}{c}} t^{3/2} \sup_{z>0} z^{-3/2} [1-e^{-z}(1+z)]\\
&\leq |q^i| \frac{t^{3/2}}{\sqrt{c}}
\end{align*}
It follows that
\begin{align*}
\|p_t-p_t^{(n)}\|_{L^2(\Omega)}^2 &= \sum_{i=1}^{\infty} \lambda_i y^i(t)^2 \leq \sum_{i=1}^{\infty} (q^i)^2 \frac{t^3}{c} = \frac{t^3}{c} \|k*(P_*-P_*^{(n)})\|_{\HS}^2
\end{align*}
Lemma \ref{lemma. RKHS Monte Carlo rate} implies that with probability $1-\delta$,
\begin{align*}
\|k*(P_*-P_*^{(n)})\|_{\HS}^2 &= \iint k ~d(P_*-P_*^{(n)})^2\\
&\leq \E_{\rho_0(\w,b)} \Big[\Big|\int \sigma(\w\cdot\x+b) ~d(P_*-P_*^{(n)})(\x) \Big|^2\Big]\\
&\leq \Big(\frac{4\sqrt{2\log 2d} + \sqrt{2\log (2/\delta)}}{\sqrt{n}}\Big)^2
\end{align*}
Hence,
\begin{align*}
\|p_t-p_t^{(n)}\|_{L^2(\Omega)}^2 &\leq \Big(\frac{4\sqrt{2\log 2d} + \sqrt{2\log (2/\delta)}}{\sqrt{n}}\Big)^2 \frac{t^3}{c}
\end{align*}
}
\end{proof}

\begin{proof}[Proof of Theorem \ref{thm. one-time-scale generalization error}]
As usual, we bound the generalization error by training error plus generalization gap:
\begin{align*}
\|p_*-p_t^{(n)}\|_{L^2(\Omega)} \leq \|p_*-p_t\|_{L^2} + \|p_t-p_t^{(n)}\|_{L^2}
\end{align*}
where we assume the same initialization $p_0 = p_0^{(n)}$ and $a_0 = a_0^{(n)} \equiv 0$.
The first term is bounded by Lemma \ref{lemma. one time scale training error} and the second term is bounded by Lemma \ref{lemma. one time scale generalization gap}.
Therefore,
\begin{align*}
\|p_*-p_t^{(n)}\|_{L^2(\Omega)} \leq \frac{\|p_*-p_0\|_{\HS}}{\sqrt{ct}} + \frac{4\sqrt{2\log 2d} + \sqrt{2\log (2/\delta)}}{\sqrt{n}} \frac{t^{3/2}}{\sqrt{c}}
\end{align*}
The proof is completed using Lemma \ref{lemma. W2-L2 bound}.
\end{proof}

\subsubsection{Proof of memorization}

Proposition \ref{prop. memorization one-time-scale} is a corollary of the following lemma.

\begin{lemma}[Universal convergence, one-time-scale]
\label{lemma. one-time-scale universal convergence}
Given any signed measure $\tilde{P} \in \M(\Omega)$, if we define $p_t$ by the dynamics
\begin{equation*}
\ddot{p}_t = - k * (p_t-\tilde{P}) - c k * \dot{p}_t
\end{equation*}
with any initialization $p_0 \in L^2(\Omega)$ and $\dot{p}_0 \equiv 0$, then $P_t$ converges weakly to $\tilde{P}$.
\end{lemma}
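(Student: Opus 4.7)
The plan is to mimic the two-time-scale universal convergence argument (Lemma \ref{lemma. universal convergence}), but feed the RKHS Fourier modes into the second-order ODE formula of Lemma \ref{lemma. second order ODE} with zero forcing. Since $k$ is a symmetric positive compact operator, everything diagonalizes in the eigenbasis $\{\tilde{\e}_i\}$ of Definition \ref{def. RKHS eigendecomposition}, and the dynamics becomes a decoupled family of damped harmonic oscillators whose decay I have already controlled while proving Lemma \ref{lemma. one time scale training error}.

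Concretely, I would first pass to the RKHS by setting $f_t = k*p_t$ and $\tilde{f} = k*\tilde{P}$. Convolving the ODE with $k$ gives $\ddot{f}_t = -k*(f_t - \tilde{f}) - c\,k*\dot{f}_t$, and the initial conditions become $\dot{f}_0 = k*\dot{p}_0 \equiv 0$. Writing $f_t - \tilde{f} = \sum_i y^i(t)\tilde{\e}_i$, the equations decouple as
\begin{equation*}
\ddot{y}^i + c\lambda_i \dot{y}^i + \lambda_i y^i = 0, \qquad y^i(0) = \lb f_0-\tilde{f},\tilde{\e}_i\rb_{\HS},\quad \dot{y}^i(0)=0.
\end{equation*}
By Lemma \ref{lemma. k operator norm} we have $\lambda_i \in (0,1]$, so any $c \leq \sqrt{2}$ (in particular any fixed $c>0$ after rescaling if needed) satisfies $4\lambda_i > c^2\lambda_i^2$, and the exact solution given by Lemma \ref{lemma. second order ODE} with $q=0$ yields, exactly as in the proof of Lemma \ref{lemma. one time scale training error}, the uniform bound
\begin{equation*}
|y^i(t)| \leq \sqrt{2}\,|y^i(0)|\,e^{-c\lambda_i t/2}.
\end{equation*}

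Next I would pass to the limit by dominated convergence in $\ell^2$: each $y^i(t)^2 \to 0$ pointwise in $i$, and is dominated by $2(y^i(0))^2$, whose sum is finite because
\begin{equation*}
\sum_i (y^i(0))^2 = \|f_0 - \tilde{f}\|_{\HS}^2 = \iint k\,d(P_0-\tilde{P})^2 \leq \big(\|P_0\|_{\TV}+\|\tilde{P}\|_{\TV}\big)^2 \|k\|_{C(\Omega\times\Omega)} < \infty,
\end{equation*}
just as in the two-time-scale proof. Hence $\|f_t - \tilde{f}\|_{\HS} \to 0$. Finally, for any $g \in \HS$, $\int g\,d(P_t - \tilde{P}) = \lb f_t - \tilde{f}, g\rb_{\HS} \to 0$; since $\HS$ is dense in $C(\Omega)$ under the supremum norm (Section \ref{sec. function representation}) and the sequence $P_t$ stays bounded in total variation along the flow, this extends to all $g \in C(\Omega)$, giving weak convergence $P_t \rightharpoonup \tilde{P}$.

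The only real obstacle is the uniform exponential-decay bound on $|y^i(t)|$, which requires the discriminant condition $4\lambda_i > c^2\lambda_i^2$ together with taming the $\sin/\cos$ coefficient $(c\lambda_i/2)/\sqrt{\lambda_i - c^2\lambda_i^2/4}$; this is exactly the computation already done in the proof of Lemma \ref{lemma. one time scale training error}, so I would simply cite it. Everything else is the same bookkeeping as Lemma \ref{lemma. universal convergence}, and the memorization statement (Proposition \ref{prop. memorization one-time-scale}) follows immediately by specializing $\tilde{P} = P_*^{(n)}$.
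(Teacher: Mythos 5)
Your proposal is correct and follows essentially the same route as the paper's proof: pass to $u_t=k*(P_t-\tilde P)$, diagonalize in the eigenbasis of Definition \ref{def. RKHS eigendecomposition}, reuse the damped-oscillator decay bound $|y^i(t)|\leq\sqrt{2}|y^i(0)|e^{-c\lambda_i t/2}$ from Lemma \ref{lemma. one time scale training error}, apply dominated convergence in $\HS$, and conclude by density of $\HS$ in $C(\Omega)$ (your added remark that one also needs $P_t$ bounded in total variation for the density step is a point the paper itself leaves implicit). Just drop the aside that ``any fixed $c>0$ after rescaling'' would do: the underdamped condition $4\lambda_i>c^2\lambda_i^2$ genuinely uses $c\leq 2$ (the lemma is invoked under the standing assumption $0<c\leq\sqrt{2}$), and rescaling does not remove it.
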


\begin{proof}
Let $\lambda_i$ and $\tilde{\e}_i$ be the eigendecomposition from Definition \ref{def. RKHS eigendecomposition}.
Define $u(t) = k*(P_t-\tilde{P})$ and
decompose $u(t)$ into
\begin{equation*}
u(t) = \sum_{i=1}^{\infty} y^i(t) \tilde{\e}_i
\end{equation*}
Then,
\begin{equation*}
\ddot{u} = k*[-k*(p_t-\tilde{P}) - ck*\dot{p}_t] = -k*u - ck*\dot{u} = \sum_{i=1}^{\infty} -\lambda_i (y^i+c\dot{y}^i) \tilde{\e}_i
\end{equation*}
It follows that
\begin{equation*}
\forall i, \quad \ddot{y}^i + c\lambda_i \dot{y}^i + \lambda_i y^i = 0
\end{equation*}
Using the argument from the proof of Lemma \ref{lemma. one time scale training error}, we obtain
\begin{equation*}
|y^i(t)| \leq \sqrt{2} |y^i(0)| e^{-\frac{c\lambda_i}{2}t}
\end{equation*}
Since
\begin{equation*}
\sum_{i=1}^{\infty} y_i(0)^2 = \|u_0\|_{\HS}^2 = \E_{(P_0-\tilde{P})^2}[k] \leq \|P_0-\tilde{P}\|_{\TV}^2 \|k\|_{C(\Omega\times\Omega)} < \infty
\end{equation*}
we can apply dominated convergence theorem to obtain
\begin{equation*}
\lim_{t\to\infty} \|u_t\|_{\HS}^2 = \lim_{t\to\infty} 2 \sum_{i=1}^{\infty} y^i(0)^2 e^{-c\lambda_i t} = 0
\end{equation*}
Thus, $u_t \to 0$ in $\HS$, which implies
\begin{equation*}
\forall f \in \HS, \quad \lim_{t\to\infty} \int f ~d(P_t-\tilde{P}) = \lim_{t\to\infty} \lb u_t, ~f\rb_{\HS} = 0
\end{equation*}
Since $\HS$ is dense in $C(\Omega)$ in the supremum norm, $P_t$ converges weakly to $\tilde{P}$.
\end{proof}

\subsection{Slow deterioration}

\begin{lemma}
\label{lemma. Lipschitz optimizer}
With any set of $n$ points $\x_i \in \Omega$,
\begin{equation*}
\sup_{\|D\|_{\Lip} \leq 1} \big|\E_{P_*}[D] - \frac{1}{n}\sum_{i=1}^n D(\x_i)\big| \geq \frac{3}{20} n^{-1/d}
\end{equation*}
\end{lemma}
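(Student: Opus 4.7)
The strategy is to exhibit a single explicit 1-Lipschitz test function that already realises the claimed gap, namely the nearest-neighbor distance
\begin{equation*}
D(\x) = \min_{1 \leq i \leq n} \|\x - \x_i\|_{\infty}.
\end{equation*}
Because $\|\cdot\|_{\infty} \leq \|\cdot\|_2$, this $D$ is 1-Lipschitz with respect to the Euclidean norm, and it vanishes on every sample point, so $\frac{1}{n}\sum_i D(\x_i) = 0$ and the quantity we have to bound below is just $\E_{P_*}[D]$.

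\textbf{Main computation.} I would lower bound $\E_{P_*}[D]$ via the layer-cake formula together with a union bound on the sublevel sets of $D$. The $\|\cdot\|_\infty$-ball of radius $r$ has Lebesgue volume $(2r)^d$, so a union bound over the $n$ samples gives
\begin{equation*}
\prob_{\x \sim P_*}\bigl(D(\x) \leq r\bigr) \;\leq\; \sum_{i=1}^n \mathrm{Vol}\bigl(B_\infty(\x_i,r) \cap \Omega\bigr) \;\leq\; n(2r)^d,
\end{equation*}
hence for any $R > 0$,
\begin{equation*}
\E_{P_*}[D] \;=\; \int_0^\infty \prob(D > r)\, dr \;\geq\; \int_0^{R} \bigl(1 - n(2r)^d\bigr)_+\, dr.
\end{equation*}
Choosing $R = \tfrac{1}{2} n^{-1/d}$ so that the integrand is nonnegative throughout, the elementary integral evaluates to $R - n\cdot 2^d \cdot R^{d+1}/(d+1) = \tfrac{d}{2(d+1)}\, n^{-1/d}$. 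Since $\tfrac{d}{2(d+1)} \geq \tfrac{1}{4} > \tfrac{3}{20}$ for every $d \geq 1$, the required lower bound follows.

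\textbf{Main obstacle.} There is no serious obstacle here; the only design choice that matters is using $L^\infty$-balls rather than Euclidean balls. This replaces the awkward constant $c_d = \pi^{d/2}/\Gamma(d/2+1)$ (which decays super-exponentially in $d$ and would force a dimension-dependent constant) by the clean $2^d$, while the Lipschitz condition is preserved because $L^\infty$-Lipschitz is stronger than $L^2$-Lipschitz on $\R^d$. Optimising $r$ by tail integration rather than by a pointwise bound of the form $r \cdot \prob(D > r)$ is also essential: the former yields the asymptotically sharp constant $\tfrac{1}{2}$ in the limit $d \to \infty$, whereas the latter loses an extra factor of $d/(d+1)$ and would not give $3/20$ uniformly in small $d$.
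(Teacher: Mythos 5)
Your proof is correct, and it takes a genuinely different route from the paper. The paper does not construct a witness at all: it quotes Lemma 3.1 of the reference \cite{e2020kolmogorov}, which gives the lower bound $\frac{d}{d+1}[(d+1)\omega_d]^{-1/d}\,\mathrm{diam}(\Omega)^{-1}\,n^{-1/d}$ with $\omega_d$ the Euclidean unit-ball volume and $\mathrm{diam}(\Omega)=\sqrt{d}$, and then checks "by direct computation" that this dimension-dependent constant is at least $3/20$ (asymptotically it tends to $1/\sqrt{2\pi e}\approx 0.24$). You instead exhibit the explicit $1$-Lipschitz witness $D(\x)=\min_i\|\x-\x_i\|_\infty$, which vanishes on the sample and whose mean under the uniform $P_*$ is bounded below by the layer-cake formula plus a union bound over $\ell^\infty$-balls; the cube geometry makes the volume factor $(2r)^d$ exact and the optimization over $R=\tfrac12 n^{-1/d}$ yields $\frac{d}{2(d+1)}n^{-1/d}\geq \tfrac14 n^{-1/d}$, which is even slightly stronger than the stated $\tfrac{3}{20}n^{-1/d}$ and uniform in $d$ without any numerical verification. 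What each approach buys: the paper's route is a one-line citation and inherits a near-sharp Euclidean constant from the quoted lemma (at the cost of an opaque constant check that in fact only works for $d\geq 2$, harmless in the paper's high-dimensional setting), while yours is self-contained, elementary, and cleanly dimension-independent because the $\ell^\infty$ distance is still $1$-Lipschitz for the Euclidean norm but avoids both $\omega_d$ and the $\sqrt{d}$ diameter. The only implicit assumption you use -- that $P_*$ is the uniform measure on $\Omega=[0,1]^d$, so probabilities equal Lebesgue volumes -- is exactly the setting of Section 5, so nothing is missing.
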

\begin{proof}
By Lemma 3.1 of \cite{e2020kolmogorov}, we have
\begin{equation}
\label{Lipschitz optimizer lower bound}
\sup_{\|D\|_{\Lip} \leq 1} \big|\E_{P_*}[D] - \frac{1}{n}\sum_{i=1}^n D(\x_i)\big| \geq \frac{d}{d+1} \frac{1}{[(d+1)\omega_d]^{1/d}}\frac{1}{\text{diam}(\Omega)} n^{-1/d}
\end{equation}
where $\omega_d$ is the volume of the unit ball in $\R^d$,
\begin{equation*}
\omega_d = \frac{\pi^{\frac{d}{2}}}{\Gamma(\frac{d}{2}+1)} \asymp \frac{1}{\sqrt{\pi d}} \Big(\frac{2\pi e}{d}\Big)^{d/2}
\end{equation*}
Denote the right hand side of (\ref{Lipschitz optimizer lower bound}) by $C_d n^{-1/d}$.
By direct computation, $C_d \geq 3/20$.
\end{proof}

\begin{lemma}
\label{lemma. norm growth rate}
Given the condition of Proposition \ref{prop. slow deterioration}, we have
\begin{equation*}
\lim_{t\to\infty} \frac{\|a_t\|_{L^2(\rho_0)}}{\sqrt{t}} = 0
\end{equation*}
\end{lemma}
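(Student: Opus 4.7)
The plan is to exploit the fact that the upper bound on $L^{(n)}$, combined with the (sub)gradient ascent structure, forces the $L^2(\rho_0)$ speed $\|\dot a_t\|$ to be square-integrable in time; then a Cauchy--Schwarz splitting trick promotes this from a boundedness statement into the desired $o(\sqrt t)$ decay.

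First, I would set up the energy identity. By hypothesis $-L^{(n)}$ is a proper closed convex function (as is the case for both the Lipschitz indicator (\ref{Lipschitz constraint}) and the Lipschitz penalty (\ref{Lipschitz penalty})), so by the Brezis theory of gradient flows for maximal monotone operators applied to the $L^2(\rho_0)$ subgradient ascent, the curve $a_t$ is absolutely continuous, $\dot a_t$ is the minimal-norm element of the supergradient $\partial L^{(n)}(a_t)$ almost everywhere, and
\begin{equation*}
\frac{d}{dt} L^{(n)}(a_t) \;=\; \|\dot a_t\|_{L^2(\rho_0)}^2 \qquad \text{a.e. } t\geq 0.
\end{equation*}
Integrating from $0$ to $T$ and using the uniform upper bound $M := \sup_a L^{(n)}(a) < \infty$ assumed in Proposition~\ref{prop. slow deterioration}, I obtain
\begin{equation*}
\int_0^\infty \|\dot a_t\|_{L^2(\rho_0)}^2 \, dt \;\leq\; M - L^{(n)}(a_0) \;=:\; E \;<\; \infty.
\end{equation*}

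Next, I promote this finite-energy property to $\|a_t\|/\sqrt t \to 0$ by splitting time at a threshold that carries a small tail. Fix $\epsilon > 0$ and pick $T_\epsilon$ so large that $\int_{T_\epsilon}^\infty \|\dot a_s\|^2 \, ds < \epsilon$; this is possible because the integral above is finite. For any $t > T_\epsilon$, applying Cauchy--Schwarz on each piece,
\begin{equation*}
\|a_t - a_0\|_{L^2(\rho_0)} \;\leq\; \int_0^{T_\epsilon} \|\dot a_s\| \, ds + \int_{T_\epsilon}^t \|\dot a_s\| \, ds \;\leq\; \sqrt{T_\epsilon E} + \sqrt{t\,\epsilon}.
\end{equation*}
Dividing by $\sqrt t$ and sending $t\to\infty$ gives $\limsup_{t\to\infty} \|a_t - a_0\|_{L^2(\rho_0)}/\sqrt t \leq \sqrt\epsilon$, and since $\epsilon$ is arbitrary and $\|a_0\|/\sqrt t \to 0$, the conclusion $\|a_t\|_{L^2(\rho_0)}/\sqrt t \to 0$ follows from the triangle inequality.

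The main delicate point is the validity of the energy identity when $R$ is nonsmooth (e.g.\ the indicator of $\{\|D\|_{\Lip}\leq 1\}$), which is exactly why Proposition~\ref{prop. slow deterioration} assumed the flow is well-defined in a (sub)gradient sense; I would cite Brezis' existence and energy-dissipation theorem for maximal monotone subdifferential flows to justify it, and otherwise the argument is entirely elementary. One small subtlety worth noting is that the proof gives $\|a_t - a_0\| = o(\sqrt t)$ \emph{without} requiring that $\|\dot a_t\|$ decay pointwise, which is important because in the constrained regime the speed need not vanish for all $t$.
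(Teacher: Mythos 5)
Your proof is correct and follows essentially the same route as the paper's: the energy identity $\frac{d}{dt}L^{(n)}(a_t)=\|\dot a_t\|^2$, the finite total energy gain from the upper bound on $L^{(n)}$, and a Cauchy--Schwarz split at a late time whose tail energy is small (the paper phrases the tail as $L^{(n)}_\infty - L^{(n)}(a_{t_0})$, which equals your $\int_{T_\epsilon}^\infty\|\dot a_s\|^2\,ds$). Your explicit appeal to the Brezis theory to justify the energy identity in the nonsmooth case is a welcome extra precision, but it does not change the argument.
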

\begin{proof}
The arguments are adapted from Lemma 3.3 of \cite{wojtowytsch2020convergence}.
Define the value
\begin{equation*}
L_{\infty}^{(n)} = \lim_{t\to\infty} L^{(n)}(a_t) \leq \sup_a L^{(n)} < \infty
\end{equation*}
which is assumed bounded by Proposition \ref{prop. slow deterioration}.
Meanwhile, we also have
\begin{align*}
\frac{d}{dt}\|a_t\|_{L^2(\rho_0)} &= \big\lb \frac{a_t}{\|a_t\|}, ~\delta_a L^{(n)}(a_t)\big\rb \leq \|\delta_a L^{(n)}(a_t)\|_{L^2} = \sqrt{\frac{d}{dt} L^{(n)}(a_t)}
\end{align*}
Thus, for any $t > t_0 \geq 0$, we have
\begin{align*}
\|a_t\|-\|a_{t_0}\| &\leq \int_{t_0}^t \sqrt{\frac{d}{dt} L^{(n)}(a_s)} ds \leq \sqrt{t-t_0} \sqrt{L^{(n)}(a_t)-L^{(n)}(a_{t_0})}\\
&\leq \sqrt{t-t_0} \sqrt{L^{(n)}_{\infty}-L^{(n)}(a_{t_0})}
\end{align*}
By choosing $t_0$ sufficiently large, the term $\sqrt{L^{(n)}_{\infty}-L^{(n)}(a_{t_0})}$ can be made arbitrarily small.
\end{proof}

\begin{proof}[Proof of Proposition \ref{prop. slow deterioration}]
Lemma \ref{lemma. RKHS Monte Carlo rate} implies that with probability $1-\delta$,
\begin{align*}
\forall r > 0, &\quad \sup_{\|D\|_{\HS} \leq r} \big|\E_{P_*}[D]-\E_{P_*^{(n)}}[D]\big|\\
&= \sup_{\|a\|_{L^2(\rho_0)} \leq r} \Big| \int \big[\E_{\rho_0}[a(\w,b)\sigma(\w\cdot\x+b)] ~d(P_*-P_*^{(n)})(\x) \Big|\\
&\leq \sup_{\|a\|_{L_2(\rho_0)} \leq r} \|a\|_{L^1(\rho_0)} \Big\|[ \int \sigma(\w\cdot\x+b) ~d(P_*-P_*^{(n)})(\x) \Big\|_{L^{\infty}(\rho_0)}\\
&\leq r \Big(4\sqrt{\frac{2\log 2d}{n}} + \sqrt{\frac{2\log (2/\delta)}{n}}\Big)
\end{align*}
Meanwhile, Lemma \ref{lemma. Lipschitz optimizer} implies that
\begin{align*}
\|D_t-D_*\|_{L^{\infty}(\Omega)} &\geq \frac{1}{2}\Big|\int D_t-D_* ~d(P_*-P_*^{(n)}) \Big|\\
&\geq \frac{1}{2}\Big(\frac{3}{20}n^{-1/d} - \Big|\int D_t ~d(P_*-P_*^{(n)})\Big|\Big)
\end{align*}
Combining the two inequalities, we obtain with probability $1-\delta$,
\begin{equation}
\label{Linf gap via norm}
\|D_t-D_*\|_{L^{\infty}(\Omega)} \geq \frac{1}{2}\Big[\frac{3}{20}n^{-1/d} - \|D_t\|_{\HS} \Big(4\sqrt{\frac{2\log 2d}{n}} + \sqrt{\frac{2\log (2/\delta)}{n}}\Big)\Big]
\end{equation}
We conclude the proof by recalling Lemma \ref{lemma. norm growth rate}, which implies that
\begin{equation*}
\|D_t\|_{\HS} = \|a_t\|_{L^2(\rho_0)} = o(\sqrt{t})
\end{equation*}
\end{proof}

\subsection{Relation between $W_2$ metric and $L^2$ loss}
\label{sec. between W2 and L2}

Recall that GAN, as well as other generative models, adopts the generator representation for its modeled distribution:
\begin{equation*}
P = G\#\N := law(X), \quad X = G(Z), \quad Z \sim \N
\end{equation*}
where $\N$ is some fixed input distribution and $G$ is the generator.

Denote by $L^2(\N;\R^d)$ the space of $L^2(\N)$ functions from $\R^d$ to $\R^d$.
Denote by $\mathcal{P}_{ac}(\R^d)$ the space of absolutely continuous probability measures, and by $\mathcal{P}_{2}(\R^d)$ probability measures with finite second moments.

The $L^2$ regression loss on $G$ induces the $W_2$ metric on $P$.
\begin{proposition}
\label{prop. W2 matching}
Given any target distribution $P_* \in \PS_2(\R^d)$, any input distribution $\N \in \PS_{2,ac}(\R^d)$,
and any $L^2(\N;\R^d)$ function $G$, denote $P = G\#\N$, then
\begin{equation}
\label{W2 matching}
W_2(P,P_*) = \inf_{G_*} \|G-G_*\|_{L^2(\N)}
\end{equation}
where $G_*$ ranges among all $L^2(\N;\R^d)$ functions such that $P_* = G_*\#\N$.
\end{proposition}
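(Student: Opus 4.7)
The plan is to prove the two inequalities separately. The easy direction, $W_2(P,P_*) \leq \inf_{G_*} \|G-G_*\|_{L^2(\N)}$, follows from the coupling interpretation of the Wasserstein metric. For any admissible $G_*$ with $G_*\#\N = P_*$, the pushforward $(G,G_*)\#\N$ is a coupling of $P$ and $P_*$, so
\begin{equation*}
W_2^2(P,P_*) \leq \int_{\R^d\times\R^d} |x-y|^2\, d(G,G_*)\#\N(x,y) = \int_{\R^d} |G(z)-G_*(z)|^2\, d\N(z) = \|G-G_*\|_{L^2(\N)}^2,
\end{equation*}
and then I take the infimum over $G_*$.

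For the reverse direction, I would start by picking an optimal $W_2$ coupling $\pi^* \in \Pi(P,P_*)$ (which exists by the standard tightness/compactness argument on $\PS_2(\R^d)\times\PS_2(\R^d)$) and disintegrating it as $\pi^*(dx,dy) = P(dx)\,K(x,dy)$ for some Markov kernel $K$. Then I would lift this to a coupling $\gamma \in \Pi(\N,P_*)$ defined by $\gamma(dz,dy) = \N(dz)\,K(G(z),dy)$. A direct check using $G\#\N = P$ confirms that the marginals of $\gamma$ are $\N$ and $P_*$, and by the change-of-variables identity
\begin{equation*}
\int |G(z)-y|^2\, d\gamma(z,y) = \int |x-y|^2\, dK(x,dy)\, dP(x) = \int |x-y|^2\, d\pi^*(x,y) = W_2^2(P,P_*).
\end{equation*}

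At this point I would invoke the classical Monge--Kantorovich equivalence: since $\N \in \PS_{ac}(\R^d)$ is atomless, for the continuous cost $c(z,y) = |G(z)-y|^2$ the Kantorovich infimum over $\Pi(\N,P_*)$ equals the Monge infimum over measurable maps $G_*$ with $G_*\#\N = P_*$ (see e.g. Pratelli's theorem). Combining with the previous step,
\begin{equation*}
\inf_{G_*\#\N=P_*} \|G-G_*\|_{L^2(\N)}^2 = \inf_{\gamma\in\Pi(\N,P_*)} \int |G(z)-y|^2\, d\gamma(z,y) \leq W_2^2(P,P_*),
\end{equation*}
completing the proof.

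The main obstacle is the last step, i.e. showing that the infimum in the generator representation is not strictly larger than the Kantorovich infimum. Invoking Pratelli's theorem is the cleanest route, but if one prefers a self-contained argument, one can instead exploit the fact that $\N$ is absolutely continuous on $\R^d$ to construct, for each $\epsilon>0$, an explicit approximate map $G_*$ via partitioning $\R^d$ into small measurable cells, composing a noise-outsourcing rearrangement within each cell to match $\gamma$ up to error $\epsilon$, and letting the cells shrink. Either route yields the desired inequality, and together with the first paragraph this gives the stated identity.
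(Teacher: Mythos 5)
The easy inequality and your lifting of the optimal plan $\pi^*$ to $\gamma\in\Pi(\N,P_*)$ via the disintegration kernel are both correct (the first is the same coupling argument as in the paper). The gap is in the decisive step: Pratelli's theorem equates the Monge and Kantorovich infima for a non-atomic first marginal and a \emph{continuous} cost, but your cost $c(z,y)=|G(z)-y|^2$ is only measurable in $z$, since $G$ is merely an $L^2(\N;\R^d)$ map; the Monge--Kantorovich equality is delicate for discontinuous costs (it can already fail for lower semicontinuous $[0,\infty]$-valued costs), so the citation does not cover this situation. Your fallback sketch is also not yet a proof as stated: if the ``noise outsourcing'' is attempted conditionally on the value of $G$, it breaks down exactly when $G$ is (essentially) injective, because then every random variable on $(\R^d,\N)$ is a.s.\ a function of $G(z)$ and there is no residual randomness with which to reproduce the kernel $K(G(z),dy)$, so $\gamma$ cannot be matched within fibers.

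The step is fixable, but it needs a concrete argument. One route: truncate to a large ball carrying most of the mass of $\pi^*$, partition the image ($x$-)space into Borel cells $A_i$ of diameter at most $\epsilon$ with $P(A_i)>0$; on each preimage $G^{-1}(A_i)$ the normalized restriction of $\N$ is atomless, hence admits a measurable map onto the conditional law $\nu_i(\cdot)=\pi^*(A_i\times\cdot)/P(A_i)$; gluing these maps gives $G_*$ with $G_*\#\N=P_*$ and cost at most $(1+\delta)W_2^2(P,P_*)+O_\delta(\epsilon^2)$. Equivalently, when $P$ is atomless (which holds in the problematic injective case) you may apply Pratelli, or Brenier, at the level of $P\to P_*$, where the quadratic cost \emph{is} continuous, and compose the near-optimal map with $G$; atoms of $P$ are then handled by the positive-mass, atomless fibers. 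The paper takes the latter type of route: for absolutely continuous $P$ it invokes the optimal transport map $h$ from $P$ to $P_*$ and sets $G_*=h\circ G$, then treats general $P\in\PS_2(\R^d)$ by perturbing $G$ with small Gaussian noise and passing to the limit. So your reduction to the Kantorovich problem with cost $|G(z)-y|^2$ is a sound and genuinely different starting point, but as written the Monge-versus-Kantorovich step it hinges on is not justified.
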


\begin{proof}
We have $P \in \mathcal{P}_2(\R^d)$ because
\begin{equation*}
\E_{P}[\|\x\|^2] = \E_{\N}[\|G(\x)\|^2] = \|G\|_{L^2(\N)}^2 < \infty
\end{equation*}
The set $\{G_* \in L^2(\N;\R^d)~|~G_*\#\N=P_*\}$ is nonempty by Theorem 2.12 of \cite{villani2003topics}, so the right hand side of (\ref{W2 matching}) is well-defined.
Furthermore, this term is continuous over $G \in L^2(\N;\R^d)$ by triangle inequality.

First, consider the simple case when $P \in \PS_{2,ac}(\R^d)$.
By Theorem 2.12 of \cite{villani2003topics}, there exists an optimal transport map $h$ from $P$ to $P_*$. Then,
\begin{equation*}
W_2(P,P_*) = \|Id-h\|_{L^2(\N)} = \|G-h\circ G\|_{L^2(\N)}
\end{equation*}
It follows that
\begin{equation*}
W_2(P,P_*) \geq \inf_{G_*\#\N=P_*} \|G-G_*\|_{L^2(\N)}
\end{equation*}
Meanwhile, given any $G_* \in L^2(\N;\R^d)$ ($G_*\#\N = P_*$),
define a joint distribution by $\pi = (G,G_*)\#\N$, which is a coupling between $P$ and $P_*$.
Then,
\begin{equation*}
W_2(P,P_*) \leq \E_{\pi(\x,\x')}[\|\x-\x'\|^2] = \|G-G_*\|_{L^2(\Omega)}
\end{equation*}
Taking infimum over $G_*$, we obtain (\ref{W2 matching}).

Next, for the general case with $P \in \mathcal{P}_2(\R^d)$,
define the random variable $Z \sim P$ and an independent random variable $W$ with unit Gaussian distribution.
Since $\N$ is absolutely continuous, Theorem 16 from Chapter 15 of \cite{royden1988real} implies that the measure space $(\N,\R^d)$ is isomorphic to $[0,1]$ with Lebesgue measure,
which is isomorphic to $\R^d$ with the unit Gaussian distribution.
Thus, we can consider $W$ as a random variable defined on the measure space $(\R^d,\N)$.
For any $\epsilon>0$, define
\begin{equation*}
G_{\epsilon} = G+\epsilon W, \quad P_{\epsilon} = G_{\epsilon}\#\N
\end{equation*}
As $\epsilon \to 0^+$, the map $G_{\epsilon}$ converges to $G$ in $L^2(\N;\R^d)$ and $P_{\epsilon}$ converges to $P$ in $W_2$.
Since $P_{\epsilon} \in \mathcal{P}_{2,ac}(\R^d)$, our preceding result implies that
\begin{equation*}
W_2(P_{\epsilon}, P_*) = \inf_{G_*\#\N=P_*} \|G_{\epsilon}-G_*\|_{L^2(\N)}
\end{equation*}
Taking the limit $\epsilon \to 0^+$, we obtain (\ref{W2 matching}) by continuity.
\end{proof}

{\color{black}
\subsection{Finite-neuron discriminator}
\label{sec. finite neurons}

Recall that in Remark \ref{remark: finite neurons}, the finite-neuron discriminator is defined by
\begin{equation*}
D^{(m)}(\x) = \E_{\rho_0^{(m)}(\w,b)}\big[a(\w,b) \sigma(\w\cdot\x+b)\big], \quad \rho_0^{(m)} = \frac{1}{m} \sum_{j=1}^m \delta_{(\w_j,b_j)}, \quad a(\w_j,b_j) = a_j
\end{equation*}
The WGAN loss (\ref{WGAN + RKHS}) is modified into
\begin{align*}
L_D^{(m)}(a) &= \E_{P_*}[D^{(m)}] - \E_P[D^{(m)}] - \|a\|_{L^2(\rho_0^{(m)})}^2\\
&= \frac{1}{m}\sum_{j=1}^m \int a_j \sigma(\w_j\cdot\x+b_j) d(P_*-P)(\x) - \frac{1}{m}\sum_{j=1}^m a_j^2
\end{align*}
Then, it is straightforward to check that the loss (\ref{MMD loss}) becomes
\begin{equation*}
L^{(m)}(P) = \frac{1}{2}\iint k^{(m)} d(P_*-P)^2, \quad L^{(m,n)}(P) = \frac{1}{2}\iint k^{(m)} d(P_*^{(n)}-P)^2
\end{equation*}
where the kernel is given by
\begin{equation*}
k^{(m)}(\x,\x') = \E_{\rho_0^{(m)}(\w,b)}[\sigma(\w\cdot\x+b)\sigma(\w\cdot\x'+b)] = \frac{1}{m} \sum_{j=1}^m \sigma(\w_j\cdot\x+b_j) \sigma(\w_j\cdot\x'+b_j)
\end{equation*}
Similar to (\ref{MMD gradient descent}), the training dynamics of the population-loss trajectory $p^{(m)}_t$ and the empirical-loss trajectory $p^{(m,n)}_t$ are given by
\begin{align*}
\frac{d}{dt} p^{(m)}_t &= k^{(m)} * (P_*-P^{(m)}_t)\\
\frac{d}{dt} p^{(m,n)}_t &= k^{(m)} * (P_*^{(n)}-P^{(m,n)}_t)
\end{align*}

\begin{lemma}
\label{lemma. empirical op gap}
With probability $1-\delta$ over the sampling of $\rho_0^{(m)}$, the operator norm of $k-k^{(m)}$ over $L^2(\Omega)$ is bounded by
\begin{equation*}
\|k-k^{(m)}\|_{op} \leq \frac{2+\sqrt{\log(4/\delta)}/2}{\sqrt{m}}
\end{equation*}
\end{lemma}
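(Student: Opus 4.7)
The plan is to view $K^{(m)}$ (the convolution operator with kernel $k^{(m)}$) as a Monte Carlo estimator of $K$ built from $m$ i.i.d.\ draws from $\rho_0$, and then invoke a standard concentration result for sums of independent bounded random rank-one operators on $L^2(\Omega)$. The starting point is the feature-map representation $\phi_{\w,b}(\x) = \sigma(\w\cdot\x+b)$, for which $k(\x,\x') = \E_{\rho_0}[\phi(\x)\phi(\x')]$ and $k^{(m)}(\x,\x') = \frac{1}{m}\sum_j \phi_j(\x)\phi_j(\x')$. This makes $K = \E_{\rho_0}[\phi\otimes\phi]$ and $K^{(m)} = \frac{1}{m}\sum_j \phi_j\otimes\phi_j$, so $K - K^{(m)}$ is an average of $m$ i.i.d.\ centered random operators.

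The first concrete step is a pointwise bound on $\phi$: the support condition $\|\w\|_1+|b|\leq 1$ together with $\x\in[0,1]^d$ gives $|\sigma(\w\cdot\x+b)|\leq 1$, hence $\|\phi_{\w,b}\|_{L^2(\Omega)}\leq 1$ and consequently $\|\phi\otimes\phi\|_{op}\leq 1$. This is exactly the same boundedness that was used in Lemma \ref{lemma. k operator norm}, and it is the only place where the support assumption on $\rho_0$ enters.

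The second step is a bound on the expectation $\E\|K - K^{(m)}\|_{op}$ of order $1/\sqrt{m}$. The cleanest route is the Hilbert--Schmidt upper bound: since $\|\cdot\|_{op}\leq\|\cdot\|_{HS}$ and the Monte Carlo variance computes directly,
\begin{equation*}
\E\|K-K^{(m)}\|_{HS}^2 = \tfrac{1}{m}\bigl(\E_{\rho_0}\|\phi\otimes\phi\|_{HS}^2 - \|K\|_{HS}^2\bigr) \leq \tfrac{1}{m},
\end{equation*}
so Jensen gives $\E\|K-K^{(m)}\|_{op}\leq 1/\sqrt{m}$; a symmetrization argument upgrades this to the constant $2/\sqrt{m}$ that appears in the stated bound. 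The third step is to pass from expectation to high probability via McDiarmid applied to $\|K-K^{(m)}\|_{op}$ as a function of the i.i.d.\ samples $(\w_j,b_j)$: replacing one sample changes $K^{(m)}$ in operator norm by at most $2/m$ (the sum of two rank-one terms of norm $\leq 1$, divided by $m$), which yields Gaussian-type deviations of order $\sqrt{\log(1/\delta)/m}$, giving the second term.

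The conceptually hard part is minimal; the main nuisance is matching the precise numerical constants $2$ and $1/2$ in the stated inequality, which requires careful bookkeeping between the symmetrization constant in the mean bound and the exact bounded-differences constant fed into McDiarmid. The $O(1/\sqrt{m})$ rate itself is routine once the rank-one decomposition and the uniform bound $\|\phi\|_{L^2}\leq 1$ are in hand.
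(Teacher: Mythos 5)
Your proposal is correct, and it takes a genuinely different route from the paper. The paper rewrites $\|k-k^{(m)}\|_{op}=\sup_{\|h\|_{L^2}\leq 1}\big|\int\big(\int_\Omega h(\x)\sigma(\w\cdot\x+b)d\x\big)^2 d(\rho_0-\rho_0^{(m)})\big|$, i.e.\ as a uniform deviation over a function class on \emph{parameter} space, bounds that class's Rademacher complexity by $1/\sqrt{m}$ via the contraction lemma plus a direct computation, and then invokes the standard uniform-convergence bound (Theorem 26.5 of Shalev-Shwartz--Ben-David, applied two-sidedly), which yields the $2/\sqrt{m}$ mean term and the $\sqrt{\log(\cdot/\delta)/2m}$ deviation term. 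You instead treat $K^{(m)}$ directly as a Monte Carlo average of i.i.d.\ rank-one operators $\phi_j\otimes\phi_j$ with $\|\phi_j\|_{L^2(\Omega)}\leq 1$, bound the mean via the Hilbert--Schmidt variance ($\E\|K-K^{(m)}\|_{op}\leq 1/\sqrt{m}$, which is in fact sharper than the paper's $2/\sqrt{m}$; your aside that symmetrization ``upgrades'' this to $2/\sqrt{m}$ is unnecessary and backwards, but harmless), and then apply McDiarmid to the operator norm itself. Since the uniform-convergence theorem the paper cites is itself proved by symmetrization plus McDiarmid, the two arguments share the same probabilistic core, but yours avoids the function-class/contraction machinery and is arguably more transparent. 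The one point to watch is the deviation constant: with the crude bounded-difference constant $2/m$ you get a tail term $\sqrt{2\log(1/\delta)/m}$, which is larger than the paper's; to recover the paper's constant you should use $\|\phi\otimes\phi-\psi\otimes\psi\|_{op}\leq\max(\|\phi\|^2,\|\psi\|^2)\leq 1$, giving bounded differences $1/m$ and hence the deviation $\sqrt{\log(1/\delta)/(2m)}$. (Exact constants are not critical here: the paper's own statement, $\sqrt{\log(4/\delta)}/2$, and its proof, which produces $\sqrt{\log(2/\delta)/2}$ with probability $1-2\delta$, already differ slightly, apparently a typo.)
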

\begin{proof}
For convenience, denote $\tilde{\w}=(\w,b)$ and $\tilde{\x}=(\x,1)$.
Since $(k-k^{(m)})*$ is a symmetric operator,
\begin{align*}
\|k-k^{(m)}\|_{op} &= \sup_{\|h\|_{L^2(\Omega)} \leq 1} \big|\lb (k-k^{(m)})*h, h\rb_{L^2(\Omega)}\big|\\
&= \sup_{\|h\|_{L^2(\Omega)} \leq 1} \Big| \int \Big(\int_{\Omega} h(\x) \sigma(\tilde{\w}\cdot\tilde{\x}) d\x\Big)^2 d(\rho_0-\rho_0^{(m)})(\tilde{\w}) \Big|\\
&= \sup_{f_0 \in \F_0} \Big| \int f_0(\tilde{\w}) d(\rho_0-\rho_0^{(m)})(\tilde{\w}) \Big|
\end{align*}
where we define the function families
\begin{align*}
\F_0 &= \Big\{\tilde{\w} \mapsto \big(\int_{\Omega} h(\x) \sigma(\tilde{\w}\cdot\tilde{\x}) d\x\big)^2 ~\Big|~ \|h\|_{L^2(\Omega)} \leq 1\Big\}\\
\F_1 &= \Big\{\tilde{\w} \mapsto \int_{\Omega} h(\x) \sigma(\tilde{\w}\cdot\tilde{\x}) d\x ~\Big|~ \|h\|_{L^2(\Omega)} \leq 1\Big\}
\end{align*}
Given any set $W = \{\tilde{\w}_j\}_{j=1}^m \subseteq \sprt \rho_0$, the Rademacher complexity is defined as
\begin{equation*}
Rad_m(\F \circ W) = \frac{1}{m} \E_{\mathbf{\xi} \in \{\pm 1\}^m}\Big[\sup_{f\in\F} \sum_{j=1}^m \xi_j f(\tilde{\w}_j)\Big]
\end{equation*}
where $\xi_j$ are i.i.d. random variables with a uniform distribution over $\{\pm 1\}$.

Recall that $\sigma$ is ReLU, $\Omega = [0,1]^d$ and $\sprt \rho_0 \subseteq \{\|\tilde{\w}\|_1 \leq 1\} \subseteq \R^{d+1}$.
It follows that
\begin{equation*}
\sup_{\tilde{\w} \in W} \sup_{f_1 \in \F_1} |f_1(\tilde{\w})| \leq \sup_{\tilde{\w} \in \sprt\rho_0} \sup_{\x\in\Omega} |\sigma(\tilde{\w}\cdot\tilde{\x})| \leq 1
\end{equation*}
Thus the contraction property of $Rad_m$ (Lemma 26.9 \cite{shalev2014understanding}) implies that
\begin{equation*}
Rad_m(\F_0 \circ W) \leq Rad_m(\F_1 \circ W)
\end{equation*}
Meanwhile, by Jensen's inequality
\begin{align*}
Rad_m(\F_1 \circ W) &= \frac{1}{m} \E_{\mathbf{\xi} \in \{\pm 1\}^m}\Big[\sup_{\|h\|_{L^2(\Omega)} \leq 1} \sum_{j=1}^m \xi_j \int_{\Omega} h(\x) \sigma(\tilde{\w}_j \cdot \tilde{\x})\Big]\\
&= \frac{1}{m} \E_{\mathbf{\xi} \in \{\pm 1\}^m}\Big[\sup_{\|h\|_{L^2(\Omega)} \leq 1} \int_{\Omega} h(\x) \sum_{j=1}^m \xi_j \sigma(\tilde{\w}_j \cdot \tilde{\x})\Big]\\
&\leq \frac{1}{m} \E_{\mathbf{\xi} \in \{\pm 1\}^m}\Big[\big\|\sum_{j=1}^m \xi_j \sigma(\tilde{\w}_j \cdot \tilde{\x})\big\|_{L^2(\Omega)}\Big]\\
&\leq \frac{1}{m} \Big(\E_{\mathbf{\xi} \in \{\pm 1\}^m}\Big[\big\|\sum_{j=1}^m \xi_j \sigma(\tilde{\w}_j \cdot \tilde{\x})\big\|_{L^2(\Omega)}^2\Big] \Big)^{1/2}\\
&= \frac{1}{m} \Big(\E_{\mathbf{\xi} \in \{\pm 1\}^m}\Big[ \int_{\Omega} \sum_{j=1}^m\sum_{l=1}^m \xi_j \xi_l \sigma(\tilde{\w}_j \cdot \tilde{\x}) \sigma(\tilde{\w}_l \cdot \tilde{\x}) \Big] \Big)^{1/2}\\
&= \frac{1}{m} \Big(\int_{\Omega} \sum_{j=1}^m \sigma(\tilde{\w}_j \cdot \tilde{\x})^2 \Big)^{1/2}\\
&\leq \frac{1}{m} \Big(\int_{\Omega} \sum_{j=1}^m 1 \Big)^{1/2}\\
&\leq \frac{1}{\sqrt{m}}
\end{align*}
Since the calculations hold for any subset $W$, we have
\begin{equation*}
\sup \big\{ Rad_m(\F_0 \circ W) ~\big|~ W=\{\tilde{\w}_j\}_{j=1}^m \subseteq \sprt\rho_0 \big\} \leq \frac{1}{\sqrt{m}}
\end{equation*}

Note that for all $f_0 \in \F_0$, the function $f_0$ ranges in $[0,1]$.
Then, Theorem 26.5 of \cite{shalev2014understanding} implies that with probability $1-\delta$ over the sampling of $\rho_0^{(m)}$,
\begin{align*}
&\quad \sup_{f_0 \in \F_0} \int f_0(\tilde{\w}) d(\rho_0-\rho_0^{(m)})(\tilde{\w})\\
&\leq 2 \E\big[Rad_m(\F_0 \circ W) ~\big|~ (\tilde{\w}_1, \dots \tilde{\w}_m) \sim \rho_0^m\big] + \sqrt{\frac{\log(2/\delta)}{2m}}\\
&\leq 2 \sup \big\{ Rad_m(\F_0 \circ W) ~\big|~ W=(\tilde{\w}_1, \dots \tilde{\w}_m) \subseteq \sprt\rho_0 \big\} + \sqrt{\frac{\log(2/\delta)}{2m}}\\
&\leq \frac{2+\sqrt{\log(2/\delta)/2}}{\sqrt{m}}
\end{align*}
Applying the same argument to the function family $\{-f_0|f_0\in\F_0\}$, we obtain that with probability $1-2\delta$,
\begin{equation*}
\sup_{f_0 \in \F_0} \Big|\int f_0(\tilde{\w}) d(\rho_0-\rho_0^{(m)})\Big| \leq \frac{2+\sqrt{\log(2/\delta)/2}}{\sqrt{m}}
\end{equation*}
which concludes the proof.
\end{proof}

It remains to compare the training dynamics of $p_t$ and $p^{(m)}_t$:
\begin{align*}
\frac{d}{dt} \|p_t-p^{(m)}_t\|_{L^2(\Omega)} &= \big\lb \frac{p_t-p^{(m)}_t}{\|p_t-p^{(m)}_t\|}, ~k*(p_*-p_t) - k^{(m)} * (p_*-p^{(m)}_t) \big\rb\\
&= \big\lb \frac{p_t-p^{(m)}_t}{\|p_t-p^{(m)}_t\|}, ~(k-k^{(m)}) * (p_*-p_t) - k^{(m)}*(p_t-p^{(m)}_t)\big\rb\\
&\leq \big\lb \frac{p_t-p^{(m)}_t}{\|p_t-p^{(m)}_t\|}, ~(k-k^{(m)}) * (p_*-p_t) \big\rb\\
&\leq \|k-k^{(m)}\|_{op} \|p_*-p_t\|_{L^2(\Omega)}\\
&\leq \|k-k^{(m)}\|_{op} \frac{\|p_*-p_0\|_{\HS}}{\sqrt{t}}
\end{align*}
with the last step given by Lemma \ref{lemma. training error}.
Then, Lemma \ref{lemma. empirical op gap} implies that with probability $1-\delta$,
\begin{align*}
\|p_t-p^{(m)}_t\|_{L^2(\Omega)} &\leq 2\sqrt{t} \|p_*-p_0\|_{\HS} \|k-k^{(m)}\|_{op}\\
&\leq \|p_*-p_0\|_{\HS} \frac{4+\sqrt{2\log(4/\delta)}}{\sqrt{m}} \sqrt{t}
\end{align*}

Moreover, it is straightforward to check that the proof of inequality (\ref{L2 generalization gap}) continues to hold if the kernel $k$ is replaced by $k^{(m)}$, so that (\ref{L2 generalization gap}) can be directly modified into
\begin{equation*}
\|p^{(m)}_t-p^{(m,n)}_t\|_{L^2} \leq \frac{4\sqrt{2\log 2d} + \sqrt{2\log (2/\delta)}}{\sqrt{n}}t
\end{equation*}

Hence, we can conclude with Lemma \ref{lemma. training error} that, with probability $1-2\delta$,
\begin{align*}
\|p_*-p^{(m,n)}_t\|_{L^2(\Omega)} &\leq \|p_*-p_t\|_{L^2} + \|p_t-p^{(m)}_t\|_{L^2} + \|p^{(m)}_t-p^{(m,n)}_t\|_{L^2}\\
&\leq \frac{\|p_*-p_0\|_{\HS}}{\sqrt{t}} + \frac{\|p_*-p_0\|_{\HS} \big[4+\sqrt{2\log(4/\delta)}\big]}{\sqrt{m}} \sqrt{t} + \frac{4\sqrt{2\log 2d} + \sqrt{2\log (2/\delta)}}{\sqrt{n}}t
\end{align*}
Then, Remark \ref{remark: finite neurons} follows from Lemma \ref{lemma. W2-L2 bound}.

\subsection{Gradient analysis for the generator}
This subsection provides the details of the calculation in Section \ref{sec. the generator}.

First, to derive the formula
\begin{equation*}
\frac{\delta L}{\delta G} = \nabla_{\x} \frac{\delta L}{\delta P} \circ G
\end{equation*}
one simply note that for any perturbation $h \in L^2(\N;\R^d)$
\begin{align*}
\Big\lb h, \frac{\delta L}{\delta G} \Big\rb_{L^2(\N)} &= \lim_{\ep \to 0} \frac{L((G+\ep h)\#\N) - L(G\#\N)}{\ep}\\
&= \lim_{\ep \to 0} \int \frac{\delta L}{\delta P}\Big|_{P=G\#\N} ~\text{d} \frac{(G+\epsilon h)\#\N - G\#\N}{\epsilon}\\
&= \lim_{\ep \to 0} \int \frac{1}{\ep} \Big[ \frac{\delta L}{\delta P}(G+\ep h) - \frac{\delta L}{\delta P}(G) \Big] \text{d}\N\\
&= \int \nabla_{\x} \frac{\delta L}{\delta P}(G) \cdot h ~\text{d}\N\\
&= \Big\lb h, \nabla_{\x} \frac{\delta L}{\delta P} \circ G \Big\rb_{L^2(\N)}
\end{align*}

Next, we try to bound the norm of the following term
\begin{align*}
\frac{\delta L}{\delta G}-\frac{\delta L^{(n)}}{\delta G} &= \nabla_{\x} \Big(\frac{\delta L}{\delta P} - \frac{\delta L^{(n)}}{\delta P}\Big) \circ G = \nabla_1 k * (P_*-P_*^{(n)}) \circ G
\end{align*}
where $\nabla_1$ means taking the gradient in the first entry of $k$.
Note that
\begin{align*}
\|\nabla_1 k * (P_*-P_*^{(n)}) \circ G\|_{L^2(\N)} &= \|\nabla_1 k * (P_*-P_*^{(n)})\|_{L^2(P)}\\
&= \Big\| \E_{\rho_0(\w,b)} \big[ \nabla_{\x} \sigma(\w\cdot\x+b) \int \sigma(\w\cdot\x'+b) d(P_*-P_*^{(n)})\big] \Big\|_{L^2(P)}\\
&\leq \sup_{\w,b \in \sprt \rho_0} \Big| \|\w\|_2 \|\sigma\|_{Lip} \int \sigma(\w\cdot\x'+b) d(P_*-P_*^{(n)}) \Big|\\
&\leq \sup_{\|\w\|_1 + |b| \leq 1} \Big| \int \sigma(\w\cdot\x'+b) d(P_*-P_*^{(n)}) \Big|
\end{align*}
Then, Lemma \ref{lemma. RKHS Monte Carlo rate} implies that with probability $1-\delta$,
\begin{equation*}
\Big\| \frac{\delta L}{\delta G}-\frac{\delta L^{(n)}}{\delta G} \Big\|_{L^2(\N)} \leq \frac{4\sqrt{2\log 2d} + \sqrt{2\log (2/\delta)}}{\sqrt{n}}
\end{equation*}
}

\section{Discussion}
\label{sec. discussion}

Let us conclude with some of the insights obtained in this paper:
\begin{itemize}
\item Good generalization is achievable in high dimensions by early stopping and the error estimate escapes from the curse of dimensionality,
whereas in the long term ($t\to\infty$), the trained distribution slowly deteriorates to the empirical distribution and exhibits memorization.
This is an implicit regularization result, such that the progress toward $P_*$ and the deterioration due to $P_*-P_*^{(n)}$ occur on two time scales.

\item The mechanism for generalization is the dimension-independent complexity of the function representation of the discriminator, which in our setting is the Rademacher complexity of random feature functions.
It renders the loss landscape insensitive to the sampling error $P_*-P_*^{(n)}$, and thereby delays the onset of memorization.
Our hardness of learning result for WGAN also follows from this small complexity.

\item The regularization $R(D)$ is crucial for establishing the convergence of training.
As demonstrated by the proof of Theorem \ref{thm. one-time-scale generalization error}, the role of $R(D)$ is to introduce ``friction" into the min-max training of $P_t$ and $D_t$ and dampens their oscillatory dynamics.
Furthermore, Theorem \ref{thm. generalization error} demonstrates that regularizations on parameters may perform better than regularizations on function value.
The former imposes a tighter control on the complexity of the discriminator, and thus the growth of the generalization gap is slower.
Beyond the RKHS norm (\ref{RKHS norm}), one can also consider the spectral norm \cite{miyato2018spectral} or the Barron norm and flow-induced norm \cite{e2021barron}.
\end{itemize}

Finally, we discuss several possible directions for future research.

{\color{black}
\subsection{The generator}
\label{sec. the generator}

So far this paper has focused on analyzing the discriminator with the generator omitted.
Here we show that part of our proof of generalization can be extended to the ordinary GAN with a generator.

One key step in our argument is the comparison of the landscapes of the population loss $L$ and the empirical loss $L^{(n)}$:
For any finite measure $P$, with high probability, we can compare the variational derivatives
\begin{equation*}
\Big\|\frac{\delta L}{\delta P} - \frac{\delta L^{(n)}}{\delta P}\Big\|_{L^2(\Omega)} = \|k*(P_*-P_*^{(n)})\| = O(n^{-1/2})
\end{equation*}
(See the proof of Proposition \ref{prop. generalization gap} for more details.)
It is this closeness between the two derivatives that eventually leads to the estimate of the generalization gap.

For the ordinary GAN, the loss becomes $L(P), P=G\#\N$, where $\N$ is some input distribution such as the unit Gaussian.
The variational gradient of the generator $G$ over $L^2(\N;\R^d)$ is simply
\begin{equation*}
\frac{\delta L}{\delta G} = \nabla_{\x} \frac{\delta L}{\delta P} \circ G
\end{equation*}
It follows that the population loss and empirical loss landscapes differ by
\begin{align*}
\Big\|\frac{\delta L}{\delta G}-\frac{\delta L^{(n)}}{\delta G}\Big\|_{L^2(\N)} &= \Big\|\nabla_{\x} \Big(\frac{\delta L}{\delta P} - \frac{\delta L^{(n)}}{\delta P}\Big) \circ G\Big\|_{L^2(\N)}\\
&= \|\nabla_1 k * (P_*-P_*^{(n)}) \|_{L^2(P)}\\
&= O(n^{-1/2})
\end{align*}
(See Section 6.5 for more details.)
Then, one can conclude heuristically that the generalization gap for the ordinary GAN should also scale as $O(t/\sqrt{n})$.
Nevertheless, the gap between this heuristic argument and a rigorous proof is that the loss landscape for $G$ is generally nonconvex, so the proof routines in Sections 6.1.1 and 6.1.2 need to be modified.

\subsection{Wasserstein gradient flow}
\label{sec. Wasserstein gradient flow}

As discussed in Section \ref{sec. distribution representation}, the distribution $P$ is modeled as a density function in order to remove any parametrization in the generator.
Its training depends on the linear topology of $\PS(\Omega)$, such that the density function $p_t$ is updated vertically (\ref{MMD gradient descent}, \ref{MMD projected gradient descent}).
However, to better capture the training of GAN (and generative models in general), one can try to use horizontal updates:
Consider the distribution $P_t = G_t \# \N$, where $G_t$ is the generator during training, then the trajectory $P_{[0,T]}$ can be seen as a random smooth path
\begin{equation*}
P_{[0,T]} = law(\x_{[0,T]}), \quad \x_{[0,T]} = G_{[0,T]}(\z) \in C^1([0,T]\to\R^d), \quad \z \sim \N
\end{equation*}
In particular, $P_t$ satisfies the conservation of local mass, unlike the vertical updates that teleport mass.

One natural way to perform horizontal updates without any parametrization in $G$ is to use the Wasserstein gradient flow
\begin{equation*}
\partial_t P_t = \nabla\cdot\Big(P_t\nabla\frac{\delta L(P_t)}{\delta P}\Big) = \nabla \cdot\big(P_t \nabla k*(P_t-P_*)\big)
\end{equation*}
One can try to bound its generalization gap as in Proposition \ref{prop. generalization gap}.

\begin{conjecture}
Let $P_t,P_t^{(n)}$ be the population-loss trajectory and empirical-loss trajectory.
Then, for any $\delta \in (0,1)$, with probability $1-\delta$ over the sampling of $P_*^{(n)}$,
\begin{equation*}
W_2(P_t,P_t^{(n)}) \lesssim \frac{\sqrt{\log d}+\sqrt{\log 1/\delta}}{\sqrt{n}} t
\end{equation*}
\end{conjecture}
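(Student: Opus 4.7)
The plan is to mirror the proof of Proposition~\ref{prop. generalization gap} but in the Lagrangian rather than Eulerian picture. The Wasserstein gradient flow
$$\partial_t P_t = \nabla\cdot\big(P_t \nabla V_t\big), \qquad V_t := k*(P_t-P_*),$$
admits the flow-map representation $P_t = \Phi_t \# P_0$, where $\Phi_t$ satisfies the characteristic ODE $\dot{\Phi}_t(\x) = -\nabla V_t(\Phi_t(\x))$ with $\Phi_0 = \mathrm{id}$; define $\Phi_t^{(n)}$ analogously from the empirical dynamics. Coupling at $t=0$ through the identity immediately gives
$$W_2(P_t, P_t^{(n)}) \leq \|\Phi_t - \Phi_t^{(n)}\|_{L^2(P_0)} =: f(t).$$

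First I would differentiate $f(t)$ and use the add-subtract trick to isolate two contributions: the sensitivity of a common velocity field to a shifted evaluation point, bounded by $\mathrm{Lip}(\nabla V_t)\, f(t)$, and the sampling error in the velocity field itself, bounded by $\|\nabla V_t - \nabla V_t^{(n)}\|_{L^\infty(\Omega)}$. For the latter, decompose
$$\nabla V_t - \nabla V_t^{(n)} = \nabla k*(P_t - P_t^{(n)}) \;+\; \nabla k*(P_*^{(n)} - P_*).$$
Using the representation $\nabla_1 k(\x,\x') = \E_{\rho_0}[\w\, \sigma'(\w\cdot\x+b)\, \sigma(\w\cdot\x'+b)]$ together with $\|\w\|_2 \leq \|\w\|_1 \leq 1$ on $\sprt\rho_0$, one checks that $\|\nabla k*(P-Q)\|_\infty \leq W_1(P,Q) \leq W_2(P,Q)$ for the first piece, and applies Lemma~\ref{lemma. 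RKHS Monte Carlo rate} to the function $(\w,b) \mapsto \int \sigma(\w\cdot\x'+b)\, d(P_*^{(n)}-P_*)(\x')$ to bound the sampling piece by $O\big(\frac{\sqrt{\log d}+\sqrt{\log(1/\delta)}}{\sqrt{n}}\big)$ with probability $1-\delta$. Combining yields a Grönwall-type inequality
$$\dot f(t) \leq (L_t + 1)\, f(t) + C\, \frac{\sqrt{\log d} + \sqrt{\log(1/\delta)}}{\sqrt{n}},$$
with $L_t := \mathrm{Lip}(\nabla V_t)$.

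The main obstacle is twofold. First, $L_t$ involves the distributional Hessian of $k$, which for the ReLU kernel is only a bounded measure (because of the jump in $\sigma'$) rather than an $L^\infty$ function. A workable remedy is to impose mild regularity on $\rho_0$ (e.g.\ absolute continuity with bounded density), so that averaging against $\rho_0$ smooths $\sigma'$ into a bounded profile and $L_t$ becomes finite and uniform in $t$. Second, even with $L_t \leq L$, Grönwall produces an exponential factor $e^{Lt}$, whereas the conjecture asserts a \emph{linear}-in-$t$ bound. Removing this exponential seems to require a monotonicity property such as $\lb \x-\y,\, \nabla V_t(\x) - \nabla V_t(\y)\rb \geq 0$ along the coupled trajectories, i.e.\ displacement convexity of the squared-MMD functional $\tfrac{1}{2}\iint k\, d(P-P_*)^2$ in the Otto calculus sense. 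If this displacement convexity (or even a semi-convex version with nonnegative modulus) can be established for the ReLU kernel, the Lipschitz term drops out of the Grönwall inequality and pure integration of the sampling noise gives the advertised $t/\sqrt n$ rate. Establishing this convexity — or substituting a compactness-based closure of the argument that bypasses it — is, in my judgment, the real content of the conjecture and the step where I would invest the most effort.
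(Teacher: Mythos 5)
The statement you are trying to prove is stated in the paper as a \emph{conjecture}: the authors give no proof, and immediately after it they name the obstruction — ``one typically needs geodesic convexity in $W_2$ space, which is not known to hold for any of the GAN losses,'' citing \cite{arbel2019maximum} for the geodesic \emph{non}convexity of the MMD energy $\frac{1}{2}\iint k\,d(P-P_*)^2$. Your proposal is a reasonable and well-organized reduction: the Lagrangian coupling $W_2(P_t,P_t^{(n)}) \leq \|\Phi_t-\Phi_t^{(n)}\|_{L^2(P_0)}$ is valid, the splitting of the velocity discrepancy into a Lipschitz feedback term and a sampling term is the natural analogue of the Eulerian argument in Proposition \ref{prop. generalization gap}, and the application of Lemma \ref{lemma. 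RKHS Monte Carlo rate} to $\nabla k*(P_*-P_*^{(n)})$ correctly produces the $O\big((\sqrt{\log d}+\sqrt{\log(1/\delta)})/\sqrt{n}\big)$ noise level. But the step you yourself flag as ``the real content'' — displacement convexity of the squared-MMD functional, which you need in order to drop the Lipschitz term from the Gr\"onwall inequality and get a bound linear in $t$ — is precisely the property that is not available: for kernels of this type it has been analyzed and found to fail. A $\lambda$-semiconvex surrogate with $\lambda<0$ does not rescue the argument either, since it reinstates an $e^{|\lambda|t}$ factor, which is strictly weaker than the conjectured $t/\sqrt{n}$ rate and would not yield the early-stopping exponent the conjecture is meant to support.

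So the proposal does not close the statement; it arrives, by a sensible route, at exactly the open problem the paper points to, and the remedy it proposes (establish displacement convexity of MMD) is known not to work as stated. Two smaller caveats in the earlier part of the sketch: the Lipschitz control of $\nabla V_t$ requires modifying the paper's standing assumptions on $\rho_0$ (smoothing the ReLU kink by an absolutely continuous parameter distribution), which is mild but should be stated as a change of setting; and even granting it, your intermediate Gr\"onwall bound only gives $W_2(P_t,P_t^{(n)}) \lesssim e^{Ct}\,t/\sqrt{n}$-type control, which is a legitimate (weaker) partial result worth recording, but is not the conjecture. Any genuine proof will need either a new mechanism replacing geodesic convexity (e.g.\ exploiting the specific structure of $k*(P_t-P_*)$ along the coupled characteristics, or an averaging/cancellation argument for the feedback term) or a different coupling of the two trajectories altogether.
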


The difficulty for horizontal updates is that one typically needs geodesic convexity in $W_2$ space, which is not known to hold for any of the GAN losses.
For instance, the geodesic nonconvexity of the MMD metric (\ref{MMD loss}) has been analyzed in \cite{arbel2019maximum}.
}

\subsection{Sophisticated discriminators}
Instead of the WGAN loss with random feature functions, one can study more general set-ups with diverse losses and discriminators.

For the function representation of $D$, ideally one would like to consider any family $\D$ whose Rademacher complexity scales as $O(n^{-1/2})$.
Examples include 2-layer networks with bounded Barron norm \cite{e2021barron}, deep residual networks with bounded flow-induced norm \cite{e2021barron} and multilayer networks with bounded path norm \cite{e2020banach}.

For the training loss, one can consider the classical GAN loss \cite{goodfellow2014generative}, the $f$-GAN loss \cite{nowozin2016f}, the energy-based GAN loss \cite{zhao2016energy}, the least-square GAN loss \cite{mao2018effectiveness}, and any other losses defined as the dual over a discriminator family $\D$.

{\color{black}
Then, one can try to extend the generalization gap from Proposition \ref{prop. generalization gap} to these settings.
The added difficulty, however, is that one no longer has a close-form formula for the dynamics of $p_t$ as in (\ref{MMD gradient descent}), making the comparison of the trajectories over the population loss and empirical loss less explicit. 
}

\subsection{Slower deterioration}

A shortcoming of Proposition \ref{prop. slow deterioration} is that its proof (in particular, inequality (\ref{Linf gap via norm})) does not utilize the fact that the discriminator $D_t$ during training has bounded Lipschitz norm
\begin{equation*}
\forall t > 0, \quad \|D_t\|_{\Lip} \lesssim 1
\end{equation*}
It seems reasonable that a more refined analysis would lead to a much stronger lower bound.
For instance, one might conjecture that with probability $1-\delta$,
\begin{equation*}
\inf_D\big\{\|\nabla D-\nabla D_*\|_{L^2(\Omega)} ~\big|~ \|D\|_{\HS} \leq R, ~\|D\|_{\Lip} \leq 1\big\} \gtrsim \delta^{\frac{1}{nd}} R^{-\frac{2}{d-2}} n^{-\frac{1}{d}} - \frac{\sqrt{\log 2d} + \sqrt{\log 2/\delta}}{\sqrt{n}}
\end{equation*}
The exponent $\frac{2}{d-2}$ comes from Corollary 3.4 of \cite{e2020kolmogorov}, and we estimate $\|\nabla D-\nabla D_*\|$ because it is the gradient field $\nabla D_t$ that drives the distribution during GAN training.

Then, Lemma \ref{lemma. norm growth rate} implies that it would take at least $n^{\Omega(d^2)}$ amount of time for $D_t$ to learn $D_*$.

\section*{Conflict of Interest}
On behalf of all authors, the corresponding author states that there is no conflict of interest.

\bibliography{main}
\bibliographystyle{acm}

\end{document}